\newtheorem{lemma}{Lemma}
\newtheorem{proposition}{Proposition}
\newtheorem{definition}{Definition}
\newtheorem{example}{Example}
\newcommand{\fun}[1]{\ensuremath\mbox{\sl #1}}
\newenvironment{proof}{
	\noindent {\em Proof:}
}{
	\hfill$\square$
}
\newenvironment{observation}{
	\noindent {\em Observation:}
}{ }
\newcounter{problem}
\newcolumntype{L}[1]{>{\raggedright\let\newline\\\arraybackslash\hspace{0pt}}m{#1}}
\newcolumntype{C}[1]{>{\centering\let\newline\\\arraybackslash\hspace{0pt}}m{#1}}
\newcolumntype{R}[1]{>{\raggedleft\let\newline\\\arraybackslash\hspace{0pt}}m{#1}}
\newcommand{\NP}[0]{NP}
\newcommand{\NPc}[0]{\NP-complete}
\newcommand{\coNP}[0]{co-\NP}
\newcommand{\coNPc}[0]{co-\NPc}
\newcommand{\R}[0]{\mathcal R}
\newcommand{\unifier}[0]{\mu}
\newcommand{\unified}[0]{\diamond}
\newcommand{\gerd}[1]{#1^{U}}
\newcommand{\grd}[1]{#1^{D}}
\newcommand{\gerdc}[1]{#1^{U+}}
\newcommand{\wa}[0]{wa}
\newcommand{\fd}[0]{fd}
\newcommand{\ar}[0]{ar}
\newcommand{\agrd}[0]{aGRD}
\newcommand{\ja}[0]{ja}
\newcommand{\swa}[0]{swa}
\newcommand{\msa}[0]{msa}
\newcommand{\mfa}[0]{mfa}
\newcommand{\pg}[0]{PG}
\newcommand{\ppg}[1]{\pg^{#1}}
\newcommand{\xpg}[0]{\ppg{X}}
\newcommand{\fpg}[0]{\ppg{F}}
\newcommand{\dpg}[0]{\ppg{D}}
\newcommand{\upg}[0]{\ppg{U}}
\newcommand{\PG}[1]{\pg(#1)}
\newcommand{\XPG}[1]{\xpg(#1)}
\newcommand{\FPG}[1]{\fpg(#1)}
\newcommand{\DPG}[1]{\dpg(#1)}
\newcommand{\UPG}[1]{\upg(#1)}
\newcommand{\vxpg}[4]{ $[$#4,#3$]$ }
\newcommand{\vpos}[2]{\ensuremath{\vxpg{}{}{#2}{#1}}}
\newcommand{\Mark}[0]{\ensuremath{M}}	
\renewcommand{\t}[0]{\vec t}
\newcommand{\X}[0]{\vec x}
\newcommand{\Y}[0]{\vec y}
\newcommand{\Z}[0]{\vec z}
\newcounter{gerdRuleID}
\newcounter{gerdTermID}
\newcounter{gerdUnifID}
\newcounter{gerdNbBodyTerms}
\newcounter{gerdNbHeadTerms}
\newenvironment{gerd:figure}{
	\setcounter{gerdRuleID}{0}
	\setcounter{gerdUnifID}{0}
	\begin{tikzpicture}
}{
	\end{tikzpicture}
}
\tikzstyle{gerd:node}=[
\tikzstyle{gerd:fake}=[
\tikzstyle{gerd:term}=[
\tikzstyle{gerd:edge}=[
\tikzstyle{gerd:edge-norm}=[
\tikzstyle{gerd:edge-spec}=[
\tikzstyle{gerd:edge-unif}=[
\tikzstyle{gerd:border}=[
\tikzstyle{gerd:border-ext}=[
\tikzstyle{gerd:border-int}=[
\newcounter{gerdTmpCounter}
\newcounter{gerdMaxTermCounter}
\newcommand{\gerdRule}[9][0]{
	\refstepcounter{gerdRuleID}

	\setcounter{gerdNbBodyTerms}{0}
	\setcounter{gerdNbHeadTerms}{0}
	\foreach \tb in {#6} { \refstepcounter{gerdNbBodyTerms} }
	\foreach \th in {#7} { \refstepcounter{gerdNbHeadTerms} }

	\foreach \i [evaluate=\i as \x using (((#4-#2)/4)*\i)+#2] in {0, 1, 2, 3, 4} {
		\node[gerd:fake] (fakeT\thegerdRuleID_\i) at (\x,#3) {};
	}
	\foreach \i [evaluate=\i as \x using (((#4-#2)/4)*\i)+#2, evaluate=\i as \y using #3+#3-#5] in {0, 1, 2, 3, 4} {
		\node[gerd:fake] (fakeB\thegerdRuleID_\i) at (\x,\y) {};
	}

	\draw[gerd:border-ext] (fakeT\thegerdRuleID_0.center) -- (fakeT\thegerdRuleID_4.center)
						-- (fakeB\thegerdRuleID_4.center) -- (fakeB\thegerdRuleID_0.center)
						-- (fakeT\thegerdRuleID_0.center);
	\draw[gerd:border-int] (fakeT\thegerdRuleID_2.center) -- (fakeB\thegerdRuleID_2.center);

	\ifthenelse{\equal{#1}{1}}{
		\pgfmathparse{#2+(3*((#4-#2)/4))}
	}{
		\pgfmathparse{#2+((#4-#2)/4)}
	}
	\setcounter{gerdTermID}{1}
	\foreach \pp/\pt/\ppl [evaluate=\pp as \x using \pgfmathresult,
				  	  evaluate=\pp as \y using #3-(((#5-#3)/(\thegerdNbBodyTerms+1))*\thegerdTermID)]
			in {#6} {	
				\setcounter{gerdTmpCounter}{1}
				\setcounter{gerdMaxTermCounter}{0}
				\foreach \ptt in \pt { \refstepcounter{gerdMaxTermCounter} }
		\node[gerd:term]
			(tb\thegerdRuleID_\pp{\pt}\ppl) at (\x,\y) {$\pp(
				\foreach \ptt in \pt {
					\ifthenelse{\equal{\thegerdTmpCounter}{\ppl}}{{\underline{\bf \ptt}}}{\ptt}
					\ifthenelse{\equal{\thegerdTmpCounter}{\thegerdMaxTermCounter}}{}{,}
					\refstepcounter{gerdTmpCounter}
				}
				)$};
		\refstepcounter{gerdTermID}
	}

	\ifthenelse{\equal{#1}{1}}{
		\pgfmathparse{#2+((#4-#2)/4)}
	}{
		\pgfmathparse{#2+(3*((#4-#2)/4))}
	}
	\setcounter{gerdTermID}{1}
	\foreach \pp/\pt/\ppl [evaluate=\pp as \x using \pgfmathresult,
				  evaluate=\pp as \y using #3-(((#5-#3)/(\thegerdNbHeadTerms+1))*\thegerdTermID)]
			in {#7} {	
				\setcounter{gerdTmpCounter}{1}
				\setcounter{gerdMaxTermCounter}{0}
				\foreach \ptt in \pt { \refstepcounter{gerdMaxTermCounter} }
		\node[gerd:term]
			(th\thegerdRuleID_\pp{\pt}\ppl) at (\x,\y) {$\pp(
				\foreach \ptt in \pt {
					\ifthenelse{\equal{\thegerdTmpCounter}{\ppl}}{{\underline{\bf \ptt}}}{\ptt}
					\ifthenelse{\equal{\thegerdTmpCounter}{\thegerdMaxTermCounter}}{}{,}
					\refstepcounter{gerdTmpCounter}
				}
				)$};
		\refstepcounter{gerdTermID}
	}

	\foreach \ppa/\ppb in {#8} {
		\draw[gerd:edge-norm] (tb\thegerdRuleID_\ppa.east) -- (th\thegerdRuleID_\ppb.west);
		\foreach \te in {#9} {
			\draw[gerd:edge-spec] (tb\thegerdRuleID_\ppa.east) -- (th\thegerdRuleID_\te.west);
		}
	}
}
\newcommand{\gerdUnif}[5][]{
	\foreach \th/\tb in {#4} {
		\draw[gerd:edge-unif,#1] (th#2_\th) to node[above]{#5} (tb#3_\tb);
	}
}
\newenvironment{gp:figure}{
	\begin{tikzpicture}
}{
	\end{tikzpicture}
}
\tikzstyle{gp:node}=[
\tikzstyle{gp:fake}=[
\tikzstyle{gp:predpos}=[
\tikzstyle{gp:edge}=[
\tikzstyle{gp:edge-norm}=[
\tikzstyle{gp:edge-spec}=[
\tikzstyle{fake}=[
\tikzstyle{rc:node}=[
\tikzstyle{rc:edge}=[
\tikzstyle{rc:complexity-edge}=[
\tikzstyle{rc:P-edge}=[
\tikzstyle{rc:NP-edge}=[
\tikzstyle{rc:ET-edge}=[
\tikzstyle{rc:ETT-edge}=[
\tikzstyle{rc:complexity-label}=[
\title{Extending Acyclicity Notions for Existential Rules \\(\emph{long version})}
\author{ Jean-Fran\c{c}ois Baget\institute{INRIA, France}
\and Fabien Garreau\institute{University of Angers, France}
\and Marie-Laure Mugnier\institute{University of Montpellier, France}
\and Swan Rocher$^3$
}
\begin{document}

\maketitle
\noindent \emph{This report contains a revised version (July 2014) of the paper that will appear in the proceedings of ECAI 2014 and an appendix with proofs that could not be included in the paper for space restriction reasons. 
}
\vspace*{0.5 cm}

\begin{abstract}
Existential rules have been proposed for representing ontological
knowledge, specifically in the context of Ontology-Based Query Answering. 
Entailment with existential rules is undecidable.
We focus in this paper on conditions that ensure
the termination of a breadth-first forward chaining algorithm
known as the chase. First, we propose a new tool that allows to extend 
existing acyclicity conditions ensuring chase termination, while keeping
good complexity properties. Second, we consider the extension
to existential rules with nonmonotonic negation under
stable model semantics and further extend acyclicity results
obtained in the positive case.

\end{abstract}

\section{INTRODUCTION}

Ontology-Based Query Answering is a new paradigm in data management, which aims to exploit ontological knowledge when accessing
data. \emph{Existential rules} have been proposed for representing ontological knowledge, specifically in this context \cite{cali09,blms09}.  
These rules allow to assert the existence of unknown individuals,
an essential feature in an open-domain perspective. They
generalize lightweight description logics, such as DL-Lite and $\mathcal{EL}$ \cite{dl-lite07,baader-b-l05} and overcome some of their
limitations by allowing any predicate arity as well as cyclic structures.

In this paper, we focus on a breadth-first forward chaining algorithm, known as the
\emph{chase} in the database literature \cite{maier79}. The chase was originally used in the context of very general database constraints, called tuple-generating dependencies, which have the same logical form as existential rules \cite{beeri-vardi84}. 

Given a knowledge base composed of data
and existential rules, the chase triggers the rules and materializes performed inferences  in
the data. The ``saturated'' data can then be queried like a classical database. This allows to
benefit from optimizations implemented in current data management systems.   However, the chase
is not ensured to terminate--- which applies to any sound and complete mechanism, since
entailment with existential rules is undecidable (\cite{beeri-vardi81,chandra-lewis-makowsky81}  on tuple-generating dependencies). 
Various acyclicity notions ensuring chase termination have been proposed in knowledge representation and database theory.


\paragraph{Paper contributions.} 

We generalize known acyclicity conditions, first, for plain existential rules, second, for their extension to nonmonotonic negation with stable semantics. 

\emph{1. Plain existential rules.} Acyclicity conditions found in the literature can be classified into two main families: the first one constrains the way existential variables are propagated during the chase, e.g.,  \cite{fagin-kolaitis-al05,marnette09,kr11}, and the second one constrains dependencies between rules, i.e., the fact that a rule may lead to trigger another rule, e.g., \cite{baget04,deutsch08,blms11,grau2013acyclicity}. These conditions are based on different graphs, but all of them can be seen as forbidding ``dangerous'' cycles in the considered graph. We define  a new family of graphs that allows to unify and strictly generalize these acyclicity notions, without increasing worst-case complexity. 

\emph{2. Extension to nonmonotonic negation.} Nonmonotonic negation is a useful feature in ontology modeling. Nonmontonic extensions to existential rules were recently considered in \cite{cali09} with stratified negation,
\cite{pods2013-hernich} with well-founded semantics and \cite{mkh13} with stable model semantics. The latter paper
focuses on cases where a unique finite model exists; we consider the same rule framework, however without enforcing a unique model. We further extend acyclicity results obtained on positive rules by exploiting negative information as well. 

The paper is organized according to these two issues. 


\section{PRELIMINARIES}
\label{sec:preliminaries}

An \emph{atom} is of the form $p(t_1, \ldots, t_k)$ where $p$ is a predicate of arity $k$ and
the $t_i$ are terms, i.e., variables or constants. An \emph{atomset}  is a finite set of atoms.
If $F$ is an atom or an atomset, we denote by $\fun{terms}(F)$ (resp. $\fun{vars}(F)$) the set of terms (resp. variables) that occur in $F$. 
 In the examples illustrating the paper, all the terms are variables (denoted by $x$, $y$, $z$, etc.), unless otherwise specified. 
  Given atomsets  $A_1$ and $A_2$, a \emph{homomorphism}  $h$ from $A_1$ to $A_2$ 
is a substitution of $\fun{vars}(A_1)$ by $\fun{terms}(A_2)$ such that $h(A_1) \subseteq A_2$.

An \emph{existential rule} (and simply a rule hereafter) is of the form $R = \forall \vec{x}\forall \vec{y} (B \rightarrow \exists \vec{z} H)$, where $B$ and $H$ are conjunctions of atoms, with $\fun{vars}(B) = \vec{x} \cup \vec{y}$, and $\fun{vars}(H) = \vec{x} \cup \vec{z}$. $B$ and $H$ are respectively called the \emph{body} and  the  \emph{head} of $R$. We also use the notations  $\fun{body}(R)$ for $B$ and $\fun{head}(R)$ for $H$. 
Variables $\vec{x}$, which appear in both $B$ and $H$, are called \emph{frontier variables}. Variables $\vec{z}$, which appear only in $H$, are called \emph{existential variables}. Hereafter, we  omit quantifiers in rules as there is no ambiguity. E.g., $p(x,y) \rightarrow p(y,z)$
stands for $\forall x \forall y (p(x,y) \rightarrow \exists z (p(y,z)))$. 
 
 An existential rule with an empty body is called a \emph{fact}. A fact is thus an existentially closed conjunction of atoms. A \emph{Boolean conjunctive query} (BCQ) has the same form.  A \emph{knowledge base} $\mathcal K = (F, \mathcal R)$ is composed of a finite set of
facts (which is seen as a single fact) $F$ and a finite set of existential rules $\mathcal R$. The fundamental problem associated with query answering, called {\sc BCQ entailment}, is the following: given a knowledge base $(F, \mathcal R)$ and a BCQ $Q$, is it true that $F, \mathcal R \models Q$, where $\models$ denotes the standard logical consequence?  This problem is undecidable (which follows from the undecidability of the implication problem on tuple-generating dependencies \cite{beeri-vardi81,chandra-lewis-makowsky81}). 

 In the following, we see conjunctions of atoms as atomsets.  A rule $R : B \rightarrow H$ is \emph{applicable} to an atomset $F$ if there is a homomorphism $\pi$ from $B$ to $F$.  The \emph{application of $R$ to $F$ w.r.t. $\pi$} produces an atomset $\alpha(F, R, \pi) = F \cup \pi(\fun{safe}(H))$, where $\fun{safe}(H)$ is obtained from $H$ by replacing existential variables with fresh variables (see Example \ref{ex-chase}).

The \emph{chase} procedure starts from the initial set of facts $F$ and performs rule applications in a breadth-first
manner.   Several chase variants can be found in the literature, mainly \emph{oblivious} (or naive),  e.g., \cite{cali-gottlob-kifer08}, \emph{skolem} \cite{marnette09}, \emph{restricted} (or standard) \cite{fagin-kolaitis-al05}, and \emph{core} chase \cite{deutsch08}. The oblivious chase performs all possible rule applications.  The \emph{skolem chase} relies on a skolemisation of the rules (notation $\fun{sk}$): for each rule $R$, $\fun{sk}$($R$) is obtained from $R$ by replacing each occurrence of an existential variable $y$ with a functional term $f^R_y(\vec{x})$, where $\vec x$ is the set of frontier variables in $R$. Then, the oblivious chase is run on skolemized rules. 

\begin{example} [Oblivious / Skolem chase] \label{ex-chase}
Let $R = p(x,y) \rightarrow p(x,z)$ and $F = \{p(a,b)\}$, where $a$ and $b$ are constants.
The oblivious chase does not halt: it applies $R$ according to $h_0 = \{(x,a), (y,b)\}$, hence adds $p(a,z_0)$; then, it applies $R$ again according to $h_1 = \{(x,a), (y,z_0)\}$, and adds $p(a,z_1)$, and so on. The skolem chase considers the rule $p(x,y) \rightarrow p(x,f^R_z(x))$; it adds $p(a, f^R_y(a))$ then halts.
\end{example}

Due to space restrictions, we do not detail on the other chase variants. Given a chase variant $C$, we call \emph{$C$-finite} the class of  set of rules $\mathcal R$, such that the $C$-chase halts on $(F, \mathcal R)$ for any atomset $F$. It is well-known that oblivious-finite $\subset$ skolem-finite $\subset$ restricted-finite $\subset$  core-finite (see, e.g.,  \cite{onet-2013}). When $\mathcal R$ belongs to a $C$-finite class, {\sc BCQ entailment} can be solved, for any $F$ and $Q$, by running the $C$-chase on $(F, \mathcal R)$, which produces a saturated set of facts $F^*$, then checking if $F^* \models Q$.


\section{KNOWN ACYCLICITY NOTIONS}
\label{sec:review}

Acyclicity notions can be divided into two main families, each of them relying on a different graph. The first family relies on a graph encoding variable sharing between \emph{positions} in predicates, while the second one relies on a graph encoding \emph{dependencies} between rules, i.e., the fact that a rule may lead to trigger another rule (or itself). 

\subsection{Position-based approach}

In the position-based approach, dangerous cycles are those passing through positions that may contain
existential variables; intuitively, such a cycle means that the creation of an existential variable in a given
position may lead to creating another existential variable in the same position, hence an infinite number of existential
variables. Acyclicity is then defined by the absence of dangerous cycles.  The simplest acyclicity notion in this
family is that of  \emph{weak-acyclicity} \emph{(wa)} \cite{fagin-kolaitis-al05}, which has been widely used
in databases. It relies on a directed graph whose nodes are the positions in predicates (we denote by $(p,i)$ position $i$ in predicate $p$).
 Then, for each rule $R: B \rightarrow H$ and each frontier variable $x$ in $B$ occurring in position $(p,i)$, edges with origin $(p,i)$ are built as follows:
 there is an edge from $(p,i)$ to each position of $x$ in $H$;
 furthermore, for each existential variable $y$ in $H$ occurring in position $(q,j)$, there is a special edge from $(p,i)$ to $(q,j)$.
 A set of rules is weakly-acyclic if its associated graph has no cycle passing through a special edge (see Example \ref{ex-wa}). 
 
 \begin{example}[Weak-acyclicity]\label{ex-wa}
Let $R_1 = h(x) \rightarrow p(x,y)$ and
$R_2 =  p(u,v), q(v) \rightarrow h(v)$. The position graph of $\{R_1, R_2\}$ contains a special edge from $(h,1)$ to $(p,2)$ due to $R_1$ and an edge from $(p,2)$ to $(h,1)$ due to $R_2$. Thus $\{R_1, R_2\}$ is not wa. 
\end{example}
 
 Weak-acyclicity  has been generalized, mainly by shifting the focus from positions to existential variables
(\emph{joint-acyclicity} \emph{(ja)}\cite{kr11}) or to positions in atoms instead of predicates
(\emph{super-weak-acyclicity} \emph{(swa)} \cite{marnette09}). Other related notions can be imported from logic programming,
e.g., \emph{finite domain} \emph{(fd)} \cite{fd2008}  and \emph{ argument-restricted} \emph{(ar)} \cite{ar2009}.
See the first column in
Figure~\ref{fig:gen}, which shows the inclusions between the corresponding classes of rules; all these inclusions are known to be strict. 

\subsection{Rule dependency-based approach}

In the second approach, the aim is to avoid cyclic triggering of rules \cite{baget04,deutsch08,blms09,blms11,grau2013acyclicity}.
 We say that a rule $R_2$ \emph{depends} on a rule $R_1$ if an application of $R_1$ may lead to a new application of $R_2$: there exists an atomset $F$ such that $R_1$ is applicable to $F$ with homomorphism $\pi$ and $R_2$ is applicable to $F' = \alpha(F, R_1, \pi)$ with homomorphism $\pi'$, which is \emph{new} ($\pi'$ is not a homomorphism to $F$) and \emph{useful} ($\pi'$ cannot be extended to a homomorphism from $H_2$ to $F'$).
This abstract dependency relation can be computed with a unification operation known as piece-unifier
\cite{blms09}.  Piece-unification  takes existential variables into account, hence is more complex than the usual
unification between atoms. A \emph{piece-unifier} 
of a rule body $B_2$ with a
rule head $H_1$ is a substitution $\mu$ of $\fun{vars}(B'_2) \cup \fun{vars}(H'_1)$, where $B'_2 \subseteq B_2$ and $H'_1
\subseteq H_1$, such that: \emph{(1) }$\mu(B'_2) = \mu(H'_1)$, and \emph{(2)} existential variables in $H'_1$ are not
unified with \emph{separating} variables of $B'_2$, i.e., variables that occur both in $B'_2$ and in $(B_2 \setminus B'_2)$; in
other words, if a variable $x$ occuring in $B'_2$ is unified with an existential variable $y$ in $H'_1$, then all atoms
in which $x$ occur also belong to $B'_2$. It holds that $R_2$ depends on $R_1$ iff there is a piece-unifier of $B_2$ with
$H_1$, satisfying some easily checked additional conditions (atom erasing \cite{blms11} and usefulness
\cite{lamare12,grau2013acyclicity}).
Following Example \ref{ex-dependency} illustrates the difference between piece-unification and classical unification.

\begin{example}[Rule dependency]\label{ex-dependency}
Let $R_1$ and $R_2$ from Example  \ref{ex-wa}. Although the atoms  $p(u,v) \in B_2$ and $p(x,y) \in H_1$ are unifiable, there is no piece-unifier of $B_2$ with $H_1$. Indeed, the most general unifier $\mu = \{(u,x), (v,y)\}$ (or, equivalently, $\{(x,u), (y,v)\}$), with $B'_2 = \{p(u,v)\}$  and $H'_1 = H_1$, is not a piece-unifier because $v$ is unified with an existential variable, whereas it is a separating variable of $B'_2$ (thus, $q(v)$ should be included in $B'_2$). It follows that $R_2$ does not depend on $R_1$. 
\end{example}

The \emph{graph of rule dependencies } of a set of rules $\mathcal R$, denoted by GRD($\mathcal R)$, is a directed graph with set of nodes $\mathcal R$ and an edge $(R_i, R_j)$ if $R_j$ depends on $R_i$. E.g., with the rules in Example \ref{ex-dependency}, the only edge is $(R_2,R_1)$. When the GRD is acyclic (\emph{aGRD}, \cite{baget04}), any derivation sequence is  finite.

\subsection{Combining both approches}

 Both approaches are incomparable: there may be a dangerous cycle on positions but no cycle w.r.t. rule dependencies (Example \ref{ex-wa} and \ref{ex-dependency}), and there may be a cycle w.r.t. rule dependencies whereas rules have no existential variables (e.g., $p(x,y) \rightarrow p(y,x)$).   
So far, attempts to combine both notions only succeded to combine them in a ``modular way'', by considering the strongly connected components (s.c.c.) of the GRD  \cite{baget04,deutsch08}; briefly, if a chase variant stops on each subset of rules associated with a s.c.c., then it stops on the whole set of rules. 
%
In this paper, we propose an ``integrated'' way of combining both approaches, which relies on a single graph. This
allows to unify preceding results and to generalize them without increasing complexity. The new acyclicity notions are
those with a gray background in Figure~\ref{fig:gen}.

Finally, let us mention \emph{model-faithful acyclicity}\emph{ (mfa)} \cite{grau2013acyclicity}, which
 cannot be captured by our approach. Briefly, checking \emph{mfa} involves running the skolem chase
until termination or a cyclic functional term is found. The price to pay  is high complexity:
checking if a set of rules is model-faithful acyclic for any set of facts is 2EXPTIME-complete. Checking
\emph{model-summarizing acyclicity (msa)} \cite{grau2013acyclicity}, which approximates mfa, remains EXPTIME-complete.
In contrast, checking position-based
properties is in PTIME and checking aGRD is co-NP-complete.    

\begin{center}
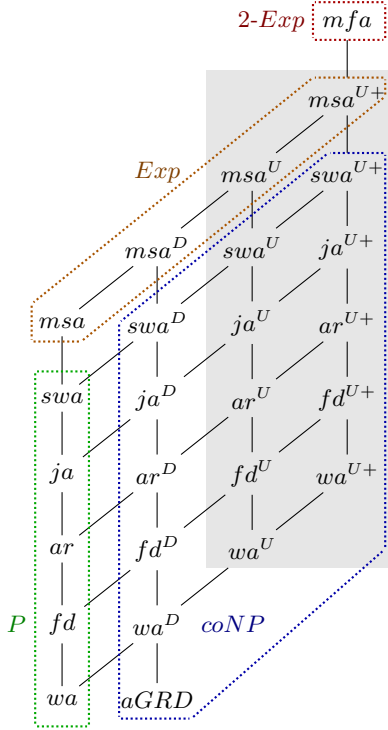
\begin{figure}[ht!]

\begin{tikzpicture}[node distance=1cm]
	\node[fake] (contrib0) at (1.9,1.75) {};
	\node[fake] (contrib1) at (4.35,1.75) {};
	\node[fake] (contrib2) at (4.35,8.35) {};
	\node[fake] (contrib3) at (1.9,8.35) {};
	\draw[draw=none,fill=black!10] (contrib0.center) -- (contrib1.center) -- (contrib2.center) -- (contrib3.center) --
	(contrib0.center);

	\node[rc:node] (wa) at (0,0) {$\wa$};

	\node[rc:node,node distance=1.25cm] (agrd) [right of=wa] {$\agrd$};
	\node[rc:node] (wad) [above of=agrd] {$\grd{\wa}$};

	\node[fake,node distance=1.25cm] (f0) [right of=wad] {};
	\node[rc:node] (wau) [above of=f0] {$\gerd{\wa}$};

	\node[fake,node distance=1.25cm] (f1) [right of=wau] {};
	\node[rc:node] (wac) [above of=f1] {$\gerdc{\wa}$};

	\foreach \t/\g in {/,d/\grd,u/\gerd,c/\gerdc}{
		\node[rc:node] (fd\t) [above of=wa\t]   {$\g{\fd}$};
		\node[rc:node] (ar\t) [above of=fd\t]   {$\g{\ar}$};
		\node[rc:node] (ja\t) [above of=ar\t]   {$\g{\ja}$};
		\node[rc:node] (swa\t) [above of=ja\t]  {$\g{\swa}$};
		\node[rc:node] (msa\t) [above of=swa\t] {$\g{\msa}$};
	}

	\node[rc:node] (mfa) [above of=msac] {$\mfa$};

	\foreach \c in {wa,fd,ar,ja,swa,msa} {
		\draw[rc:edge] (\c) -- (\c d);
		\draw[rc:edge] (\c d) -- (\c u);
		\draw[rc:edge] (\c u) -- (\c c);
	}

	\foreach \t in {,d,u,c} {
		\draw[rc:edge] (wa\t) -- (fd\t);
		\draw[rc:edge] (fd\t) -- (ar\t);
		\draw[rc:edge] (ar\t) -- (ja\t);
		\draw[rc:edge] (ja\t) -- (swa\t);
		\draw[rc:edge] (swa\t) -- (msa\t);
	}

	\draw[rc:edge] (agrd) -- (wad);
	\draw[rc:edge] (msac) -- (mfa);

	\node[fake,node distance=0.5cm
	] (p0) [below left of=wa] {};
	\node[fake,node distance=0.5cm
	] (p1) [below right of=wa] {};
	\node[fake,node distance=0.5cm] (p2) [above right of=swa] {};
	\node[fake,node distance=0.5cm] (p3) [above left of=swa] {};
	\draw[rc:P-edge] (p0) -- (p1) -- (p2) -- (p3) -- (p0);

	\node[fake] (n0) at (0.75,-0.25) {};
	\node[fake] (n1) at (1.7,-0.25) {};
	\node[fake] (n2) at (4.25,2) {};
	\node[fake] (n3) at (4.25,7.25) {};
	\node[fake] (n4) at (3.5,7.25) {};
	\node[fake] (n5) at (0.75,5) {};
	\draw[rc:NP-edge] (n0) -- (n1) -- (n2) -- (n3) -- (n4) -- (n5) -- (n0);

	\node[fake] (e0) at (-0.4,4.75) {};
	\node[fake] (e1) at (0.25,4.75) {};
	\node[fake] (e2) at (4.25,8) {};
	\node[fake] (e3) at (4.25,8.25) {};
	\node[fake] (e4) at (3.3,8.25) {};
	\node[fake] (e5) at (-0.4,5.25) {};
	\draw[rc:ET-edge] (e0) -- (e1) -- (e2) -- (e3) -- (e4) -- (e5) -- (e0);

	\node[fake] (ee0) at(3.3,8.75)  {};
	\node[fake] (ee1) at(4.25,8.75) {};
	\node[fake] (ee2) at(4.25,9.25) {};
	\node[fake] (ee3) at(3.3,9.25)  {};
	\draw[rc:ETT-edge] (ee0) -- (ee1) -- (ee2) -- (ee3) -- (ee0);

	\node[rc:complexity-label, node distance=0.6cm] [left of=fd] {{\color{green!50!black}$P$}};
	\node[rc:complexity-label] [right of=wad] {{\color{blue!50!black}$coNP$}};
	\node[rc:complexity-label] [above of=msad] {{\color{orange!50!black}$Exp$}};
	\node[rc:complexity-label] [left of=mfa] {{\color{red!50!black}$2$-$Exp$}};


\end{tikzpicture}
	\caption{Relations between recognizable acyclicity properties. All inclusions are strict and complete (i.e., if there is no path between two properties then they are incomparable). }
\label{fig:gen}
\end{figure}
\end{center}

It remains to specify for which chase variants the above acyclicity notions ensure termination. Since \emph{mfa} generalizes all properties in Figure~\ref{fig:gen}, and sets of rules satisfying \emph{mfa} are skolem-finite, all these properties ensure $C$-finiteness, for any chase variant $C$ at least as strong as the skolem chase.
We point out that the oblivious chase may not stop on \emph{wa} rules.
 Actually, the only acyclicity notion in Figure~\ref{fig:gen} that ensures the termination of the
 oblivious chase is  \emph{aGRD}, since all other notions generalize \emph{wa}.

\section{EXTENDING ACYCLICITY NOTIONS}
\label{sec:acyclic}

In this section, we combine rule dependency and propagation of existential variables into a single graph.  
W.l.o.g. we assume that distinct rules do not share any variable.
Given an atom $a = p(t_1, \dots, t_k)$, the $i^{th}$ position in $a$ 
is denoted by 
$\vpos{a}{i}$, 
with $\fun{pred}(\vpos{a}{i}) = p$ and $\fun{term}(\vpos{a}{i}) = t_i$.
If $A$ is an atomset such that $a \in A$, we say that $\vpos{a}{i}$ is in $A$.
 If $\fun{term}(\vpos{a}{i})$ is an existential (resp. frontier) variable, $\vpos{a}{i}$
is called an \emph{existential} (resp. \emph{frontier}) position.
In the following, we use ``position graph'' as a generic name
to denote a graph whose nodes are positions in \emph{atoms}.

We first define the notion of a basic position graph, which takes each rule in isolation.  Then, by adding edges to this graph, we define three position graphs with increasing expressivity, i.e., allowing to check termination for increasingly larger classes of rules.

\begin{definition}[(Basic) Position Graph ($\pg$)] The \emph{position graph} of a rule $R : B \rightarrow H$ is the directed graph $\PG{R}$ defined as follows:
	\begin{itemize}
    \item 
    there is a node for each $\vpos{a}{i}$ in $B$ or in $H$;
    \item 
    for all frontier positions $\vpos{b}{i} \in B$ and all $\vpos{h}{j} \in H$, there is an edge from $\vpos{b}{i}$
	to $\vpos{h}{j}$ if $\fun{term}(\vpos{b}{i}) = \fun{term}(\vpos{h}{j})$ or if $\vpos{h}{j}$ is existential.
\end{itemize}
Given a set of rules $\R$, the \emph{basic position graph} of $\R$, denoted by $\PG{\R}$,  is the
disjoint union of $\PG{R_i}$, for all $R_i \in \R$.
\end{definition}

An existential position $\vpos{a}{i}$ is said to be \emph{infinite} if there is an atomset $F$
such that running the chase on $F$ produces an unbounded number of instantiations of $\fun{term}(\vpos{a}{i})$.
To detect infinite positions, we encode how variables may be ``propagated" among rules by adding
 edges to $\PG{\R}$, called \emph{transition edges}, which go from positions in rule heads to positions  in rule bodies.
 The set of transition edges has to be \emph{correct}: if an existential position 
$\vpos{a}{i}$ is infinite, there must be a cycle going through $\vpos{a}{i}$ in the graph.

We now define three position graphs by adding transition edges to $\PG{\R}$, namely $\FPG{\R}$, $\DPG{\R}$ and $\UPG{\R}$. All three graphs have correct sets of edges. Intuitively, $\FPG{\R}$ corresponds to the case where all rules
are supposed to depend on all rules; 
its set of cycles is in bijection with the set of cycles in the predicate position graph defining weak-acyclicity.
$\DPG{\R}$ encodes actual paths of rule dependencies. Finally, $\UPG{\R}$ adds information about the piece-unifiers themselves. This provides an accurate encoding of variable propagation from an atom position to another.

\begin{definition}[$\xpg$]\label{def-xpg}
	Let $\R$ be a set of rules.
	The three following position graphs are obtained from $\PG{\R}$ by adding a (transition)
	edge from each $k^{th}$ position $\vpos{h}{k}$ in a rule head $H_i$ to each $k^{th}$ position  $\vpos{b}{k}$ in
	a rule body $B_{j}$, with the  \emph{same predicate},
	provided that some condition is satisfied :
	\begin{itemize}
		\item {\em full PG}, denoted by $\FPG{\R}$: no additional condition;
		\item {\em dependency PG}, denoted by  $\DPG{\R}$: if $R_{j}$ depends directly or indirectly on $R_i$, i.e., if there is a path from $R_i$ to $R_j$ in GRD($\mathcal R)$;
			\item {\em PG with unifiers}, denoted by $\UPG{\R}$: if there is a piece-unifier $\mu$ of $B_j$ with the head of an agglomerated rule $R^j_i$
		         such that $\mu(\fun{term}([b,k])) = \mu(\fun{term}([h,k]))$, where $R^j_i$ is formally defined below (Definition~\ref{def-agg}).		
	\end{itemize}
\end{definition}

An agglomerated rule associated with $(R_i, R_j)$ gathers information about selected piece-unifiers along (some) paths from $R_i$ to (some) predecessors of $R_j$.

\begin{definition}[Agglomerated Rule] \label{def-agg}
Given $R_i$ and $R_j$ rules from $\mathcal R$, an agglomerated rule associated with $(R_i, Rj)$ has the following form:
$$R^j_i =  B_i \cup_{t \in T \subseteq \fun{terms}(H_i)} \fun{fr}(t) \rightarrow H_i$$
where $\fun{fr}$ is a new unary predicate that does not appear in $\mathcal R$, 
and the atoms $\fun{fr}(t)$ are built as follows. Let $\mathcal P$ be a non-empty set of paths from $R_i$ to direct predecessors of $R_j$ in GRD($\mathcal R)$. 
Let $P= (R_1, \ldots, R_n)$ be a path in $\mathcal P$. One can associate a rule $R^P$ with $P$ by building a
sequence $R_1= R^p_1, \ldots, R^p_n = R^P$ such that $\forall 1 \leq l < n$,
there is a piece-unifier $\mu_l$ of $B_{l+1}$ with the head of $R^p_l$, where the body
of $R^p_{l+1}$ is $B^p_{l} \cup \{\fun{fr}(t) \, | \, t \, \mbox{is a term of } \, H^p_l\, \mbox{unified in } \mu_l\}$, and the head of $R^p_{l+1}$ is $H_1$. Note that for all $l$, $H^p_l = H_1$, however, for $l \neq 1$, $R^p_l$ may have less existential variables than $R_l$ due to the added atoms. The agglomerated rule $R^j_i$ built from $\{R^P | P \in \mathcal P\}$ is $R^j_i = \bigcup_{P \in \mathcal P} R^P$. 
\end{definition}

The following inclusions follow from the definitions:

\begin{proposition}[Inclusions between $\xpg$]
	Let $\R$ be a set of rules.
	$\UPG{\R} \subseteq \DPG{\R} \subseteq \FPG{\R}$.
	Furthermore, $\DPG{\R} = \FPG{\R}$ if the transitive closure of $GRD(\R)$ is a complete graph.
\end{proposition}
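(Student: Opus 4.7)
The plan is to observe that all three graphs in Definition~\ref{def-xpg} share exactly the same vertex set and the same non-transition edges, namely those of the basic position graph $\PG{\R}$. So the whole proposition reduces to comparing the sets of \emph{transition edges} added to $\PG{\R}$ under the three conditions, and the inclusions on the graphs follow directly from inclusions on these edge sets.

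For $\DPG{\R} \subseteq \FPG{\R}$, I would simply unfold the definitions: a transition edge from $\vpos{h}{k}$ in $H_i$ to $\vpos{b}{k}$ in $B_j$ is added to $\DPG{\R}$ only when there is a path from $R_i$ to $R_j$ in $GRD(\R)$, whereas $\FPG{\R}$ imposes no condition at all. Hence any transition edge in $\DPG{\R}$ is a fortiori in $\FPG{\R}$.

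For $\UPG{\R} \subseteq \DPG{\R}$, I would argue that whenever a transition edge is added to $\UPG{\R}$, an agglomerated rule $R^j_i$ exists (Definition~\ref{def-agg}) and provides a piece-unifier of $B_j$ with its head. By Definition~\ref{def-agg} the set $\mathcal{P}$ of paths used to build $R^j_i$ is non-empty, so there is at least one path $(R_i = R_1, \dots, R_n)$ from $R_i$ to some direct predecessor $R_n$ of $R_j$ in $GRD(\R)$. Appending the edge $(R_n, R_j)$ yields a path from $R_i$ to $R_j$ in $GRD(\R)$, which is precisely the condition required to add the same transition edge to $\DPG{\R}$. So the inclusion on edge sets, and hence on graphs, follows.

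For the final statement, if the transitive closure of $GRD(\R)$ is a complete graph then for every ordered pair $(R_i, R_j)$ there is a path from $R_i$ to $R_j$, so the side condition defining $\DPG{\R}$ is always satisfied and $\DPG{\R}$ inherits every transition edge of $\FPG{\R}$; combined with $\DPG{\R} \subseteq \FPG{\R}$ above, this gives equality. The whole argument is essentially bookkeeping on the definitions; the only step that requires any attention is the one extracting an actual $GRD$-path to $R_j$ from the data packaged inside an agglomerated rule, and this is handled by noticing that $\mathcal{P}$ is required to be non-empty and to end at direct predecessors of $R_j$.
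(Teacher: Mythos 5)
Your argument is correct and matches the paper's intent exactly: the paper gives no explicit proof, stating only that the inclusions ``follow from the definitions,'' and your unfolding of the definitions --- comparing transition-edge sets, extracting a $GRD$-path to $R_j$ from the non-empty set $\mathcal{P}$ of paths to direct predecessors of $R_j$ underlying the agglomerated rule --- is precisely the intended bookkeeping. Nothing is missing.
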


\begin{example}[$PG^F$ and $PG^D$]\label{ex:fpg-dpg}
	Let $\mathcal R = \{R_1, R_2\}$ from Example \ref{ex-wa}.
	Figure~\ref{fig:fpg-dpg} pictures $\FPG{\R}$ and $\DPG{\R}$.	
	The dashed edges  belong to $\FPG{\R}$ but not to  $\DPG{\R}$. Indeed, $R_2$ does not depend on $R_1$.
	$\FPG{\R}$ has a cycle
	while $\DPG{\R}$ has not.
\end{example}

\begin{example}[$PG^D$ and $PG^U$]\label{ex:dpg-upg}
	Let $\R = \{R_1, R_2\}$, with  $R_1 =  t(x,y) \rightarrow p(z,y), q(y)$ and
		$R_2 = p(u,v),q(u) \rightarrow t(v,w)$. In Figure~\ref{fig:dpg-upg}, the dashed edges belong to $\DPG{\R}$ but not to $\UPG{\R}$.
	Indeed,  the only piece-unifier of $B_2$ with $H_1$ unifies $u$ and $y$.
	Hence, the cycle in $\DPG{\R}$ disappears in $\UPG{\R}$.
\end{example}

\begin{center}
\begin{figure}[t]
\input{src/figures/expg2.tex}
\caption{$\FPG{\R}$ and $\DPG{\R}$ from Example \ref{ex:fpg-dpg}. Position $\vpos{a}{i}$ is
	represented by underlining the i-th term in $a$.
	Dashed edges do not belong to $\DPG{\R}$. 
}
\label{fig:fpg-dpg}
\end{figure}
\end{center}

\begin{center}
\begin{figure}[b]
\input{src/figures/expgS.tex}
\caption{$\DPG{\R}$ and $\UPG{\R}$ from Example \ref{ex:dpg-upg}.
	Dashed edges do not belong to $\UPG{\R}$. 
}
\label{fig:dpg-upg}
\end{figure}
\end{center}



We now study how acyclicity properties can be expressed on position graphs. The idea is to associate, with an acyclicity property, a function that assigns to each position a subset of positions reachable from this position, according to some propagation constraints;  then, the property is fulfilled if  no existential position can be reached from itself.  
 More precisely, a \emph{marking function} $Y$ assigns to each node $\vpos{a}{i}$
	in a position graph $\xpg$, a subset of its (direct or indirect) successors,  
	called its {\em marking}. 
	A \emph{marked cycle} for $\vpos{a}{i}$ (w.r.t. $X$ and $Y$)
	is a cycle $C$ in $\xpg$ such that
	$\vpos{a}{i} \in C$ and for all $\vpos{a'}{i'} \in C$, $\vpos{a'}{i'}$ belongs to the marking of $\vpos{a}{i}$. 
  Obviously, the less situations there are in which the marking may ``propagate'' in a position graph,  
  the stronger the acyclicity property is.

\begin{definition}[Acyclicity property]
	Let $Y$ be a marking function and $\xpg$ be a position graph.
	The\emph{ acyclicity property }associated with $Y$ in $\xpg$,  denoted by $Y^X$, is satisfied if 
	there is no marked cycle for an existential position in  $\xpg$. 
         If $Y^X$ is satisfied, we also say that $\XPG{\R}$ \emph{satisfies} $Y$.
\end{definition}

For instance, the marking function associated with weak-acyclicity assigns to each node the set of its successors in $\FPG{\R}$, without any additional constraint.
The next proposition states that such marking functions can be defined for each class of rules between $\wa$ and $\swa$
 (first column in Figure~\ref{fig:gen}), in such a way that the associated acyclicity property in $PG^F$ characterizes this class. 

\begin{proposition}
\label{prop:markings}
	A set of rules $\R$ is $\wa$  (resp.   $\fd$,  $\ar$,  $\ja$,  $\swa$) iff $\FPG{\R}$ satisfies the acyclicity property
	associated with  $\wa$- (resp.  $\fd$-,  $\ar$-,  $\ja$-,  $\swa$-) marking.
\end{proposition}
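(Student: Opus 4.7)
The plan is to exhibit, for each of the five classes, an explicit marking function on the nodes of $\FPG{\R}$ whose induced acyclicity property $Y^F$ coincides with the classical characterization. The whole argument rests on a single bridge lemma: the projection $\vpos{a}{i} \mapsto (\fun{pred}(\vpos{a}{i}), i)$ sends cycles in $\FPG{\R}$ to cycles in the classical predicate-position graph used for $\wa$, and conversely every such predicate-position cycle can be lifted, because in $\FPG{\R}$ transition edges are added \emph{unconditionally} between every pair of head/body positions sharing a predicate. An existential position in $\FPG{\R}$ projects to a position targeted by a special edge in the $\wa$-graph, so "cycle through a special edge" corresponds exactly to "cycle through an existential position".

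First I would dispatch the $\wa$ case. Define the $\wa$-marking by letting the marking of any node be its full set of descendants in $\FPG{\R}$ (i.e., no constraint beyond reachability). Under the bridge lemma, the existence of a marked cycle through an existential position of $\FPG{\R}$ is then directly equivalent to the existence of a cycle through a special edge in the classical weak-acyclicity graph, which is the standard definition of non-$\wa$. The proof is routine once the bridge is established.

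For $\fd$ and $\ar$, the original definitions proceed by a fixed-point computation: for $\fd$, an increasing set of positions known to bind only finitely many values; for $\ar$, a rank function $\tau$ assigning natural numbers (possibly $\infty$) to predicate positions, with a stabilising step. I would encode each fixed point as a marking by iteratively including, in the marking of $\vpos{a}{i}$, exactly those reachable positions along edges that \emph{would fail} the finiteness step (for $\fd$) or the rank-propagation step (for $\ar$). A marked cycle through an existential position then witnesses precisely a position whose fixed-point value fails to stabilise, matching the original definition. The $\ja$-marking is analogous but transported onto $\FPG{\R}$ from the existential-variable graph of \cite{kr11}: I mark only those successor positions reached by paths whose intermediate frontier positions lie in the trigger set of a single existential variable. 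A marked cycle through an existential position is then exactly a cycle in the $\ja$-graph.

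The main obstacle is $\swa$, precisely because Marnette's super-weak-acyclicity is already defined on \emph{places} (positions in atoms), and its propagation relation is driven by unifiability of single atoms rather than by predicate equality. The transition edges of $\FPG{\R}$ are too coarse: they link \emph{any} matching-predicate head/body pair, whereas $\swa$ only propagates along "moving" places witnessed by actual atom-unifiers. I would therefore define the $\swa$-marking restrictively, including in the marking of $\vpos{a}{i}$ only those reachable places obtained through a sequence of transition edges each of which respects Marnette's moving-place criterion (this criterion is a local test on the adjacent head/body atoms, so the marking is well defined). Once this is set up, a marked cycle through an existential position coincides with a "moving cycle" in Marnette's sense, which gives non-$\swa$. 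In all five cases the equivalence then follows by the uniform pattern: an existential position is infinite under the classical notion iff it lies on a marked cycle in $\FPG{\R}$.
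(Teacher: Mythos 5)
Your proposal follows essentially the same route as the paper's appendix proof: a bridge lemma identifying cycles of $\FPG{\R}$ with cycles of the predicate-position graph (the paper's Proposition on edge partition), followed by per-class marking functions that simulate the original fixed-point/propagation definitions, with the $\swa$-marking restricted to unifiable head/body atom pairs exactly as in the paper. The level of detail is comparable to the paper's own (admittedly sketchy) per-class arguments, so I have nothing substantive to add.
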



As already mentioned, all these classes can be safely extended by combining them with the GRD.
To formalize this, we recall the notion $Y^<$ from \cite{grau2013acyclicity}: given an acyclicity property $Y$, a set of rules $\R$ is said to satisfy $Y^<$ if each s.c.c. of $GRD(\R)$ satisfies $Y$, except for those composed of a single rule with no loop.\footnote{This particular case is to cover \emph{aGRD}, in which each s.c.c. is an isolated node.}
Whether $\R$ satisfies $Y^<$ can be checked on $\DPG{\R}$:

\begin{proposition}
\label{prop:yd-eq-scc}
	Let $\R$ be a set of rules, and $Y$ be an acyclicity property.
	$\R$ satisfies $Y^<$ iff $\DPG{\R}$ satisfies $Y$,
	i.e., $Y^< = \grd{Y}$.
\end{proposition}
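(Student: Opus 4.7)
The plan is to exploit two structural facts: (i) any cycle in a position graph of the form $\xpg$ must traverse transition edges (since the basic component $\PG{R}$ of a single rule is bipartite, going from body-positions to head-positions); (ii) every transition edge in $\DPG{\R}$ from a position in the head of $R_i$ to a position in the body of $R_j$ witnesses a path from $R_i$ to $R_j$ in $GRD(\R)$. Combining (i) and (ii), the sequence of transition edges along any cycle in $\DPG{\R}$ defines a closed walk in $GRD(\R)$, so all rules involved lie in a single strongly connected component (s.c.c.) of $GRD(\R)$.

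For the direction $Y^< \Rightarrow \grd{Y}$, assume every non-trivial s.c.c.\ of $GRD(\R)$ satisfies $Y$. Suppose for contradiction that $\DPG{\R}$ has a marked cycle $C$ for some existential position $\vpos{a}{i}$. By the observation above, all rules whose positions appear on $C$ belong to a common s.c.c.\ $S$ of $GRD(\R)$, which is non-trivial because $C$ is a cycle (it either uses two distinct rules from $S$, or a single rule with a GRD self-loop). Since every ordered pair of rules in $S$ is connected by a path in $GRD(\R)$, every transition edge of $\FPG{S}$ already belongs to $\DPG{\R}$ restricted to $S$. Hence $C$ is a cycle in $\FPG{S}$, and the marking of $\vpos{a}{i}$ within $\FPG{S}$ coincides with its marking within $\DPG{\R}$ on positions of $S$ (the propagation rules of the marking functions $\wa, \fd, \ar, \ja, \swa$ only look at successors that are actually present). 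Thus $C$ is a marked cycle in $\FPG{S}$ for an existential position, contradicting the assumption that $S$ satisfies $Y$.

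For the converse $\grd{Y} \Rightarrow Y^<$, assume $\DPG{\R}$ satisfies $Y$ and let $S$ be any s.c.c.\ of $GRD(\R)$ that is not a single rule without loop. For every pair $R_i,R_j \in S$ there is a path from $R_i$ to $R_j$ in $GRD(\R)$, so every transition edge of $\FPG{S}$ is already present in $\DPG{\R}$; moreover $\FPG{S}$ has no extra nodes. Therefore any marked cycle for an existential position in $\FPG{S}$ would yield one in $\DPG{\R}$, which by hypothesis cannot exist, hence $S$ satisfies $Y$. For the excluded case of a singleton s.c.c.\ $\{R\}$ with no GRD self-loop, $\FPG{\{R\}}=\PG{R}$ has no transition edges at all and is acyclic, so $Y$ is satisfied vacuously. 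This establishes $Y^< = \grd{Y}$.

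The main obstacle is the third sentence of each direction: one must be sure that the marking assigned to a node by the chosen marking function coincides when computed in $\FPG{S}$ and in $\DPG{\R}$. This is where the uniformity of the approach matters: each of $\wa, \fd, \ar, \ja, \swa$ is defined by local propagation along edges, and because every transition edge of $\FPG{S}$ also appears in $\DPG{\R}$ (and conversely, marked cycles in $\DPG{\R}$ live inside a single s.c.c.), the markings agree on the relevant positions. Once this local-propagation argument is made explicit for the five marking functions involved, the equivalence follows.
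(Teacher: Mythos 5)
The paper itself states this proposition without proof (it is not among the results proved in the appendix), so there is no official argument to compare against; your proof supplies the missing reasoning and its overall structure is the natural one. The two load-bearing observations are correct: $\PG{R}$ only has body-to-head edges, so every cycle in $\DPG{\R}$ must use transition edges, and chaining the $GRD(\R)$-paths witnessing those transition edges confines all rules of the cycle to one strongly connected component $S$, inside which the induced subgraph of $\DPG{\R}$ coincides with $\FPG{S}$. Two points deserve attention. First, the step you yourself flag as the obstacle --- that the marking of $\vpos{a}{i}$ agrees in $\FPG{S}$ and in $\DPG{\R}$ on positions of $S$ --- needs one explicit lemma that you do not state: marking propagation starting from a position of a rule in $S$ can never leave $S$ and come back, because a transition edge from $R_i \in S$ to some $R_k$ together with a later transition edge from $R_k$ back into $S$ would force $R_k \in S$. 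Without this, ``local propagation'' alone does not rule out a detour through other rules enlarging the marking inside $S$; with it, the agreement is immediate for the markings whose clauses are local to a rule and to the successor relation (one still has to check that the $\ar$-marking clause quantifying over \emph{all} existential positions does not import successors of existential positions outside $S$ into $S$ --- this is the one place where ``local'' is not literally true and a sentence of justification is owed). Second, your aside that $\FPG{\{R\}}=\PG{R}$ has no transition edges is false --- $\fpg$ adds all predicate-matching head-to-body edges unconditionally, even within a single rule --- but the error is harmless, since singleton components without a $GRD$ loop are excluded from the definition of $Y^<$ and nothing needs to be proved about them.
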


For the sake of brevity, if $Y_1$ and $Y_2$ are two acyclicity properties,
we write $Y_1 \subseteq Y_2$ if any set of rules satisfying $Y_1$ also
satisfies $Y_2$. The following results are straightforward.

\begin{proposition}
	Let $Y_1,Y_2$ be two acyclicity properties.
	If $Y_1 \subseteq Y_2$,
	then $\grd{Y_1} \subseteq \grd{Y_2}$.
\end{proposition}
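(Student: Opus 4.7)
The plan is to unfold the definition of $\grd{Y}$ given just before the statement and then apply the hypothesis $Y_1 \subseteq Y_2$ componentwise. I would first invoke Proposition~\ref{prop:yd-eq-scc}, which states that $\grd{Y}$ coincides with $Y^<$, so the task reduces to proving the inclusion $Y_1^< \subseteq Y_2^<$ using the strongly-connected-component definition of $Y^<$.

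Next, I would take an arbitrary set of rules $\R$ satisfying $Y_1^<$ and examine each strongly connected component of $GRD(\R)$. By definition of $Y_1^<$, every such s.c.c., except possibly those reduced to a single rule with no self-loop, satisfies $Y_1$ when viewed as a rule set in its own right. Applying the hypothesis $Y_1 \subseteq Y_2$ to each of those components then yields that they also satisfy $Y_2$, whence $\R$ satisfies $Y_2^<$. Re-invoking Proposition~\ref{prop:yd-eq-scc} gives $\grd{Y_1} \subseteq \grd{Y_2}$.

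There is no genuine obstacle here: the argument is a componentwise transfer of the assumption. The only point deserving a sentence of care is that the hypothesis $Y_1 \subseteq Y_2$ must be applicable to arbitrary rule sets, including the sub-rule-sets induced by the s.c.c.'s of $GRD(\R)$; since the inclusion is stated generically over all sets of rules, this is immediate. This is precisely why the paper labels the result as straightforward.
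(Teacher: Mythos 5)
Your proof is correct; the paper itself offers no argument for this proposition (it is grouped among results declared straightforward), and your route --- translating $\grd{Y_i}$ into $Y_i^<$ via Proposition~\ref{prop:yd-eq-scc} and then transferring the hypothesis $Y_1 \subseteq Y_2$ componentwise over the strongly connected components of $GRD(\R)$ --- is the natural way to make the omitted argument precise. You also correctly flag the only delicate points: the exempted single-rule, loop-free components are exempt on both sides, and the generic quantification over all rule sets in the definition of $Y_1 \subseteq Y_2$ licenses its application to the sub-rule-sets induced by the components.
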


\begin{proposition}
\label{prop:y-strict-ygrd}
	Let $Y$ be an acyclicity property.
	If $\agrd \nsubseteq Y$ then $Y \subset \grd{Y}$.
\end{proposition}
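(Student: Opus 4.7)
The plan is to decompose the statement into a containment part and a strictness part, and handle them separately.

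First, I would establish the (non-strict) inclusion $Y \subseteq \grd{Y}$. By Proposition~\ref{prop:yd-eq-scc}, $\grd{Y}$ is exactly the property ``$\DPG{\R}$ satisfies $Y$''. By Definition~\ref{def-xpg} the graph $\DPG{\R}$ is obtained from $\FPG{\R}$ by keeping only transition edges along paths of the GRD, so $\DPG{\R} \subseteq \FPG{\R}$. Any marking function used to define an acyclicity property assigns to a node a subset of its (direct or indirect) successors; removing edges can only shrink those sets and destroy cycles, never create them. Hence a marked cycle witnessing failure of $Y$ in $\DPG{\R}$ would also be a marked cycle in $\FPG{\R}$. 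Contrapositively, if $\R$ satisfies $Y$ (i.e.\ $\FPG{\R}$ satisfies $Y$), then so does $\DPG{\R}$, giving $\R \in \grd{Y}$.

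Second, for strictness, I would exhibit a set of rules in $\grd{Y} \setminus Y$ from the hypothesis $\agrd \nsubseteq Y$. Pick $\R_0 \in \agrd$ with $\R_0 \notin Y$. Because $GRD(\R_0)$ is acyclic, every strongly connected component of $GRD(\R_0)$ is a singleton rule with no self-loop. By the definition of $Y^<$ recalled just before Proposition~\ref{prop:yd-eq-scc}, all such s.c.c.\ are excluded from the check, so $\R_0$ satisfies $Y^<$ vacuously; applying Proposition~\ref{prop:yd-eq-scc} again, $\R_0 \in \grd{Y}$. Combined with $\R_0 \notin Y$, this yields $Y \subsetneq \grd{Y}$.

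The two steps together give $Y \subset \grd{Y}$, as required. The only delicate point is the monotonicity of acyclicity properties under edge removal in the first step; this is really a property of how the marking functions used in the paper (for $\wa, \fd, \ar, \ja, \swa$, and their combinations) are defined, namely as subsets of successors in the position graph. So the main ``obstacle'' is just being careful that the abstract notion of acyclicity property is monotone in the underlying position graph, which is immediate from Definition of a marked cycle since both the cycle and the marking live in the graph, and fewer edges yield both fewer cycles and smaller markings. The second step is then a one-line application of the definition of $Y^<$ to the $\agrd$ witness.
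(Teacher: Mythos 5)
Your proof is correct; the paper itself omits a proof of this proposition (it is listed among results declared ``straightforward''), and your two-part decomposition --- the unconditional inclusion $Y \subseteq \grd{Y}$ via $\DPG{\R} \subseteq \FPG{\R}$ together with monotonicity of markings and cycles under edge removal, plus strictness witnessed by any $\R_0 \in \agrd \setminus Y$, which satisfies $Y^<$ vacuously and hence $\grd{Y}$ by Proposition~\ref{prop:yd-eq-scc} --- is exactly the intended argument. Your explicit flagging of the monotonicity of the marking functions is a welcome addition rather than a gap.
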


Hence, any class of rules  satisfying a property $Y^D$ strictly includes both $\agrd$ and the class characterized by $Y$; 
(e.g., Figure~\ref{fig:gen}, from Column 1 to Column 2). More generally, strict inclusion in the first column leads to strict inclusion in the second one:

\begin{proposition}
\label{prop:prop6}
	Let $Y_1, Y_2$ be two acyclicity properties such that $Y_1 \subset Y_2$, $\wa \subseteq Y_1$ and  $Y_2 \nsubseteq \grd{Y_1}$. 
	Then $\grd{Y_1} \subset \grd{Y_2}$.
\end{proposition}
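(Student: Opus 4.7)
The plan is to establish the two halves of strict inclusion separately: the non-strict inclusion $\grd{Y_1} \subseteq \grd{Y_2}$ by direct appeal to the preceding proposition, and strict separation by exhibiting a witness set of rules.

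For the first half, since by hypothesis $Y_1 \subseteq Y_2$, the proposition stating ``$Y_1 \subseteq Y_2 \Rightarrow \grd{Y_1} \subseteq \grd{Y_2}$'' immediately yields $\grd{Y_1} \subseteq \grd{Y_2}$. For the second half, the hypothesis $Y_2 \nsubseteq \grd{Y_1}$ gives a set of rules $\R$ that satisfies $Y_2$ but not $\grd{Y_1}$. It suffices to show that this same $\R$ also satisfies $\grd{Y_2}$, which together with $\R \notin \grd{Y_1}$ provides the strict separation.

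To see that $\R$ satisfies $\grd{Y_2}$, I would invoke Proposition \ref{prop:yd-eq-scc}: satisfying $\grd{Y_2}$ amounts to each s.c.c.\ of $GRD(\R)$ (except loopless singletons) satisfying $Y_2$. Each such s.c.c.\ is a subset $\R' \subseteq \R$, so the claim reduces to a monotonicity statement for $Y_2$ under restriction of the rule set. This monotonicity holds by construction of position graphs: $\FPG{\R'}$ (and the corresponding marking) is a subgraph of $\FPG{\R}$, because removing rules only deletes nodes and transition edges. Consequently any marked cycle witnessing failure of $Y_2$ on $\R'$ would lift to a marked cycle in $\FPG{\R}$, contradicting the fact that $\R$ satisfies $Y_2$. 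Hence each s.c.c.\ satisfies $Y_2$, and $\R$ satisfies $\grd{Y_2}$.

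The step most deserving of care is the monotonicity argument of the previous paragraph: it relies on the uniform shape of every acyclicity property in Figure~\ref{fig:gen} as a marked-cycle condition on a position graph, so the statement must be verified at the level of marking functions $Y$ rather than treated as a black-box hypothesis. Once this is in place, everything else chains together routinely: $\R \in \grd{Y_2} \setminus \grd{Y_1}$ combined with $\grd{Y_1} \subseteq \grd{Y_2}$ yields $\grd{Y_1} \subset \grd{Y_2}$, as required. The hypothesis $\wa \subseteq Y_1$ is not used directly here, but ensures the classes under discussion are non-degenerate and that the witness lies inside the intended hierarchy.
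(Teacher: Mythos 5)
Your proof is correct, but it diverges from the paper's argument at the key step, so a comparison is in order. The paper does \emph{not} reuse the witness $\R$ for $Y_2 \nsubseteq \grd{Y_1}$ directly. Instead it rewrites $\R$ into $\R'$ by adding a fresh atom $p(x)$ (fresh predicate, fresh frontier variable) to the body and the head of every rule, so that every rule depends on every rule; then $GRD(\R')$ is complete and $\DPG{\R'} = \FPG{\R'}$, so $Y$ and $\grd{Y}$ coincide on $\R'$ by construction. The separation at the level of $Y$ ($\R' $ satisfies $Y_2$ but not $Y_1$) is preserved by the rewriting because the only new cycles pass exclusively through non-existential positions, which is harmless for any property containing $\wa$ --- this is exactly where the hypothesis $\wa \subseteq Y_1 \subseteq Y_2$ is load-bearing. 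Your route keeps the original witness $\R$ and instead proves $\R$ satisfies $\grd{Y_2}$ from $\R$ satisfies $Y_2$, i.e., the inclusion $Y_2 \subseteq \grd{Y_2}$, justified by monotonicity of the marking under passing to the subgraphs $\FPG{\R_C}$ for the s.c.c.'s $\R_C$ (equivalently, one could argue directly on $\DPG{\R} \subseteq \FPG{\R}$). That monotonicity does hold for every marking function defined in the paper (they are least fixpoints of monotone conditions, so markings only shrink on subgraphs), and the paper itself reasons this way elsewhere, e.g., in the proof of Theorem \ref{theo:ygrd-strict-ygerd} (``each cycle in $\UPG{\R'}$ is also a cycle in $\DPG{\R}$, and, since $\DPG{\R}$ satisfies $Y$, $\UPG{\R'}$ also does''); you are right to flag it as the point requiring verification per marking function, since the abstract definition of a marking function does not guarantee it. What the paper's construction buys is that it never compares markings across two different position graphs --- everything is decided on a single graph where $\dpg$ and $\fpg$ coincide --- at the cost of needing $\wa \subseteq Y_1$; your argument is shorter and, as you note, does not use that hypothesis at all.
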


The next theorem states that $\upg$ is strictly more powerful than $\dpg$; moreover, 
	the ``jump" from $\grd{Y}$ to $\gerd{Y}$  is at least as large as
	from $Y$ to $\grd{Y}$.

\begin{theorem}
\label{theo:ygrd-strict-ygerd}
	Let $Y$ be an acyclicity property.
	If $Y \subset \grd{Y}$ then $\grd{Y} \subset \gerd{Y}$.
	Furthermore, there is an injective mapping from the sets of rules satisfying $\grd{Y}$ but not $Y$, to
	the sets of rules satisfying $\gerd{Y}$ but not $\grd{Y}$.
\end{theorem}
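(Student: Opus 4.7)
The plan is to establish $\grd{Y} \subseteq \gerd{Y}$ by a direct monotonicity argument on position graphs, and then to construct an injection from $\grd{Y} \setminus Y$ into $\gerd{Y} \setminus \grd{Y}$, which yields strict inclusion together with the announced quantitative refinement in one stroke.

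For the inclusion, I would start from Definition~\ref{def-xpg}: $\UPG{\R}$ and $\DPG{\R}$ have the same nodes and the same intra-rule edges, and the set of transition edges of $\UPG{\R}$ is a subset of the set of transition edges of $\DPG{\R}$. Since the marking of a node under any of the marking functions associated with $\wa, \fd, \ar, \ja, \swa$ depends only on the successor structure of the graph, the marking of each position $\vpos{a}{i}$ in $\UPG{\R}$ is contained in its marking in $\DPG{\R}$. Hence every marked cycle for an existential position in $\UPG{\R}$ lifts to a marked cycle in $\DPG{\R}$. Taking the contrapositive, if $\R$ satisfies $\grd{Y}$ then $\R$ satisfies $\gerd{Y}$, so $\grd{Y} \subseteq \gerd{Y}$.

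The core of the argument is the construction of the injection $\phi$. Given $\R$ satisfying $\grd{Y}$ but not $Y$, fix a marked cycle $C$ for some existential position in $\FPG{\R}$; by Proposition~\ref{prop:yd-eq-scc}, $C$ must traverse at least one transition edge $([h,k],[b,k])$ from a rule $R_i$ to a rule $R_j$ such that there is no path from $R_i$ to $R_j$ in $GRD(\R)$ (otherwise $C$ would already be present in $\DPG{\R}$, contradicting that $\R$ satisfies $\grd{Y}$). I would then set $\phi(\R) = \R \cup \{R^{*}_{\R}\}$, where the bridge rule $R^{*}_{\R}$ is designed on the template of Example~\ref{ex:dpg-upg}: its body unifies with $\fun{head}(R_i)$ and its head unifies with $\fun{body}(R_j)$, so that the edges $R_i \to R^{*}_{\R}$ and $R^{*}_{\R} \to R_j$ appear in $GRD(\phi(\R))$ and the transition edge $([h,k],[b,k])$ reappears in $\DPG{\phi(\R)}$, restoring $C$ there and witnessing $\phi(\R) \not\models \grd{Y}$. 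A companion body atom carrying a fresh unary predicate $\fun{br}_{\R}$ plays exactly the role of the atom $q(v)$ in Example~\ref{ex-dependency}: it forces every piece-unifier with the agglomerated rule of Definition~\ref{def-agg} along the new path to separate the terms at position $k$. Hence the transition edge is absent from $\UPG{\phi(\R)}$, $C$ is killed in $\UPG{\phi(\R)}$, and because the bridge uses fresh predicates it creates no other cycle touching $\R$, so $\phi(\R)$ still satisfies $\gerd{Y}$.

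Injectivity is immediate: the fresh predicate $\fun{br}_{\R}$ uniquely identifies $R^{*}_{\R}$ inside $\phi(\R)$, so $\R$ is recovered from $\phi(\R)$ by deleting the unique rule that mentions $\fun{br}_{\R}$. The main obstacle is step~(2): one has to exhibit a piece-unification barrier at position $k$ that works uniformly regardless of which pair $(R_i,R_j)$ is produced by the witness cycle $C$, and to verify that no new marked cycle introduced by the bridge violates $\gerd{Y}$. Both difficulties are handled by choosing the arities and frontier variables of $R^{*}_{\R}$ on the pattern of Example~\ref{ex:dpg-upg} and by using fresh predicates that isolate the bridge from the rest of $\R$, so that the only cycles of $\phi(\R)$ intersecting $R^{*}_{\R}$ are precisely the ones that the construction is designed to place in $\DPG{\phi(\R)} \setminus \UPG{\phi(\R)}$.
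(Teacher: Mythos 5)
Your opening step, the inclusion $\grd{Y} \subseteq \gerd{Y}$ via $\UPG{\R} \subseteq \DPG{\R}$ and monotonicity of the markings, is fine and matches the paper's Proposition on inclusions between position graphs. The injection, however, is where the theorem's content lies, and your bridge-rule construction $\phi(\R) = \R \cup \{R^{*}_{\R}\}$ fails --- not for lack of detail, but because it collides with the very mechanism $\UPG{}$ is built on. Take $\R = \{R_1,R_2\}$ from Example~\ref{ex-wa}: $R_1 = h(x) \rightarrow p(x,y)$, $R_2 = p(u,v), q(v) \rightarrow h(v)$. This $\R$ is $\agrd$, hence satisfies $\wa^D$, but not $\wa$, and the transition edge missing from $\DPG{\R}$ is $[p(x,y),2] \rightarrow [p(u,v),2]$. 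To restore it in $\DPG{\phi(\R)}$ you must create a GRD path from $R_1$ to $R_2$, so the first rule on that path must piece-unify its body with $H_1 = \{p(x,y)\}$; any such unifier unifies the whole atom, hence in particular the existential variable $y$. By Definition~\ref{def-agg}, the agglomerated rule $R^{2}_{1}$ along that path then carries $\fun{fr}(y)$ in its body, so $y$ is no longer existential there, and the unifier of $B_2$ with its head sending $v$ to $y$ becomes a legal piece-unifier that unifies position $2$. The edge $[p(x,y),2] \rightarrow [p(u,v),2]$ therefore reappears in $\UPG{\phi(\R)}$ together with the marked cycle through the existential position, and $\phi(\R)$ satisfies neither $\wa^D$ nor $\wa^U$ --- it lands outside the claimed codomain. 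No placement of your $\fun{br}_{\R}$ atom escapes this: putting it on the bridge variable unified with $y$ makes that variable separating and destroys the dependency $R_1 \rightarrow R^{*}_{\R}$ (so the edge leaves $\DPG{\phi(\R)}$ too), and putting it anywhere else leaves $y$ freed. There are further unaddressed problems: a marked cycle of $\FPG{\R}$ may cross several pairs with no GRD path, so one bridge does not restore it; for the $\fd$-, $\ar$-, $\ja$-, $\swa$-markings a cycle can be present yet unmarked in $\DPG{\R}$, so the ``missing edge'' you extract need not exist; and the new GRD paths created by transitivity through $R^{*}_{\R}$ add transition edges between \emph{original} rules in both $\DPG{\phi(\R)}$ and $\UPG{\phi(\R)}$ --- your claim that fresh predicates isolate the bridge cannot hold, since the bridge must reuse the predicates of $H_i$ and $B_j$ to create the dependencies at all.

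The paper sidesteps all of this by not patching $\R$ locally. It rewrites $\R$ globally: each $R_i$ is split into $R_{i,1} = B_i \rightarrow p_i(\X,\Y)$ and $R_{i,2} = p_i(\X,\Y) \rightarrow H_i$, and all pairs are wired together with fresh predicates $p'_{i,j}$ whose head occurrences carry fresh existential variables and whose body occurrences carry variables appearing nowhere else. This makes the dependency structure of $\R'$ trivially complete, so $\DPG{\R'}$ reproduces exactly the cycles of $\FPG{\R}$ and $\R'$ fails $\grd{Y}$ because $\R$ fails $Y$; simultaneously the unifier-level analysis of $\R'$ simulates the dependency-level analysis of $\R$, so $\R'$ satisfies $\gerd{Y}$ because $\R$ satisfies $\grd{Y}$. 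This ``level shift'' applied uniformly to the whole rule set is the idea your construction is missing, and it is also what makes injectivity immediate.
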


\begin{proof}
	Assume $Y \subset \grd{Y}$
	and $\R$ satisfies $\grd{Y}$ but  not $Y$.
	$\R$ can be rewritten into $\R'$ by applying the following steps.
	First, for each rule $R_i = B_i[\X,\Y] \rightarrow H_i[\Y,\Z] \in \R$,
	let $R_{i,1} = B_i[\X,\Y] \rightarrow p_i(\X,\Y)$ where $p_i$ is a fresh
	predicate; and $R_{i,2} = p_i(\X,\Y) \rightarrow H_i[\Y,\Z]$.
	Then, for each rule $R_{i,1}$, let $R'_{i,1}$ be the rule $(B'_{i,1} \rightarrow H_{i,1})$
	with $B'_{i,1} = B_{i,1} \cup \{p'_{j,i}(x_{j,i}): \forall R_{j} \in \R\}$,
	where $p'_{j,i}$ are fresh predicates and $x_{j,i}$ fresh variables.
	Now, for each rule $R_{i,2}$, let $R'_{i,2}$ be the rule $(B_{i,2} \rightarrow H'_{i,2})$
	with $H'_{i,2} = H_{i,2} \cup \{p'_{i,j}(z_{i,j}): \forall R_{j} \in \R\}$,
	where $z_{i,j}$ are fresh existential variables.
	Let $\R' = \bigcup\limits_{R_i \in \R} \{R'_{i,1},R'_{i,2}\}$.
    This construction ensures that each  $R'_{i,2}$ depends on $R'_{i,1}$,
	and each $R'_{i,1}$ depends on each $R'_{j,2}$,
	thus, there is a {\em transition} edge from each $R'_{i,1}$ to $R'_{i,2}$
	and from each $R'_{j,2}$ to each $R'_{i,1}$.
	Hence, $\DPG{\R'}$ contains exactly one cycle for each cycle in $\FPG{\R}$.
	 Furthermore, $\DPG{\R'}$ contains at least one marked cycle w.r.t. $Y$,
	and then $\R'$ does not satisfy $\grd{Y}$.
	Now, each cycle in $\UPG{\R'}$ 
	is also a cycle in $\DPG{\R}$, and, since $\DPG{\R}$ satisfies $Y$, $\UPG{\R'}$ also does.
    Hence, $\R'$ does not belong to $\grd{Y}$ but to $\gerd{Y}$.
\end{proof}

We also check that strict inclusions in the second column in Figure~\ref{fig:gen} lead to strict inclusions in the third column. 

\begin{theorem} \label{theo:ygrd-strict-ygerd2}
	Let $Y_1$ and $Y_2$ be two acyclicity properties.
	If $\grd{Y_1} \subset \grd{Y_2}$ then $\gerd{Y_1} \subset \gerd{Y_2}$.
\end{theorem}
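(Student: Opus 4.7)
The statement splits into the inclusion $\gerd{Y_1}\subseteq\gerd{Y_2}$ and a strict-separation witness, which I treat separately. For the strict separation I reuse the construction $\R\mapsto\R'$ from the proof of Theorem~\ref{theo:ygrd-strict-ygerd}: that construction has the property that $\UPG{\R'}$ has the same (marked) cycle structure as $\DPG{\R}$, so $\R\in\grd{Y}$ iff $\R'\in\gerd{Y}$ for every acyclicity property $Y$. Starting from any $\R_0\in\grd{Y_2}\setminus\grd{Y_1}$ (which exists by the strict hypothesis), one obtains $\R'_0\in\gerd{Y_2}\setminus\gerd{Y_1}$, which settles the strict separation once the inclusion is in hand.

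For the inclusion, the plan is a dual construction: for every rule set $\R$, produce a companion set $\R^{\sharp}$ such that $\DPG{\R^{\sharp}}$ has the same marked cycles through existential positions as $\UPG{\R}$. Once this is established, $\R\in\gerd{Y}$ is equivalent to $\R^{\sharp}\in\grd{Y}$ for every $Y$, and the hypothesis transfers cleanly: $\R\in\gerd{Y_1}\Rightarrow\R^{\sharp}\in\grd{Y_1}\subseteq\grd{Y_2}\Rightarrow\R\in\gerd{Y_2}$.

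The construction of $\R^{\sharp}$ adapts the one from the proof of Theorem~\ref{theo:ygrd-strict-ygerd}, where auxiliary ``trigger'' predicates $p'_{j,i}$ were inserted to force \emph{all} pairwise dependencies in $\R'$, so that $\DPG{\R'}$ mirrored $\FPG{\R}$. Here I plan to insert triggers \emph{selectively}: for each piece-unifier witnessing an edge of $\UPG{\R}$ through an agglomerated rule $R^j_i$ as in Definition~\ref{def-agg}, a fresh trigger atom is added in the source's head and in the target's body, forcing exactly this dependency in $\R^{\sharp}$; the original predicates of $\R$ are renamed per rule so that no other unification can create a spurious dependency. The dependencies of $\R^{\sharp}$ are then in bijection with the piece-unifier edges of $\UPG{\R}$, and existential positions are preserved verbatim, so marked cycles correspond bijectively. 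The main obstacle is the bookkeeping needed to verify this bijection at the level of marked cycles --- and not merely at the level of cycles --- so that the equivalence $\R\in\gerd{Y}\Leftrightarrow\R^{\sharp}\in\grd{Y}$ holds for every marking function; this verification follows the same pattern as the cycle analysis at the end of the proof of Theorem~\ref{theo:ygrd-strict-ygerd}.
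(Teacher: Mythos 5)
Your overall strategy for the separation --- transform a witness $\R_0\in\grd{Y_2}\setminus\grd{Y_1}$ into a rule set whose $\upg$ mimics $\DPG{\R_0}$ --- is exactly the paper's strategy, but the specific transformation you invoke does not deliver it, and this is the crux of the proof. The construction from the proof of Theorem~\ref{theo:ygrd-strict-ygerd} only guarantees \emph{one} containment: every cycle of $\UPG{\R'}$ projects to a cycle of $\DPG{\R}$. That is all that theorem needs, since there one only has to show that $\UPG{\R'}$ \emph{satisfies} $Y$. Your claimed ``iff'' ($\R\in\grd{Y}\Leftrightarrow\R'\in\gerd{Y}$) also needs the converse containment, and the converse fails in general: the trigger atoms $p'_{j,i}$ force the \emph{dependencies} (so that $\DPG{\R'}$ mirrors $\FPG{\R}$), but they carry fresh variables and do nothing to force \emph{term propagation} between the original predicate positions. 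An edge of $\DPG{\R}$ from $[h,k]$ to $[b,k]$ exists as soon as there is a dependency path and the predicates coincide, whereas the corresponding edge of $\UPG{\R'}$ requires a piece-unifier with an agglomerated rule achieving $\mu(\fun{term}([b,k]))=\mu(\fun{term}([h,k]))$, which may simply not exist --- this is precisely the phenomenon of Example~\ref{ex:dpg-upg}, where a $\DPG{}$-cycle disappears in $\UPG{}$. Hence the marked cycle witnessing $\R_0\notin\grd{Y_1}$ may vanish in $\UPG{\R'_0}$, and your candidate witness may land inside $\gerd{Y_1}$. The paper uses a \emph{different} construction built to secure exactly this missing direction: for each dependency path from $R_i$ to $R_j$ and each pair of variables $x$ (frontier of $R_j$) and $y$ (head of $R_i$) sharing a predicate position, it adds $p_{i,j,x,y}(x)$ to $B_j$ and $p_{i,j,x,y}(y)$ to $H_i$ for a fresh predicate $p_{i,j,x,y}$; these shared atoms create unifiers identifying $y$ with $x$, so every cycle of $\DPG{\R_0}$ is realized in $\UPG{\R'_0}$ while no new cycle through an existential position is introduced.

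On the inclusion $\gerd{Y_1}\subseteq\gerd{Y_2}$: you are right that it does not formally follow from $\grd{Y_1}\subseteq\grd{Y_2}$ without some argument (the paper leaves it implicit, relying on the fact that for marking-defined properties the comparison already holds graph-by-graph). Your dual construction $\R^{\sharp}$ with $\DPG{\R^{\sharp}}$ mirroring $\UPG{\R}$ is a reasonable way to make this explicit, but as written it is only a plan: the per-rule renaming of predicates would destroy the very predicate matches that $\dpg$ transition edges require, and the deferred ``bookkeeping'' is a two-sided marked-cycle correspondence of the same kind that undermines your separation argument above. As it stands, neither half of the proposal is a complete proof.
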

\begin{proof}
	Let $\R$ be a set of rules such that $\R$ satisfies $\grd{Y_2}$ but 
	does not satisfy $\grd{Y_1}$.
	We rewrite $\R$ into $\R'$ by applying the following steps.
	For each pair of rules $R_i,R_j \in \R$ such that there is a dependency path from $R_i$ to $R_j$,
	for each variable $x$ in the frontier of $R_j$ and each variable $y$ in the
	head of $R_i$, if $x$ and $y$ occur both in a given predicate position,
	we add to the body of $R_j$ a new atom $p_{i,j,x,y}(x)$ and to the head of $R_i$
	a new atom $p_{i,j,x,y}(y)$, where $p_{i,j,x,y}$ denotes a fresh predicate.
	This construction allows each term from the head of $R_i$ to propagate to each
	term from the body of $R_j$, if they share some predicate position in $\R$.
	Thus, any cycle in $\DPG{\R}$ is also in $\UPG{\R'}$, without any change in the
	behavior w.r.t. the acyclicity properties.
	Hence $\R'$ satisfies $\gerd{Y_2}$ but does not satisfy $\gerd{Y_1}$.
\end{proof}

The next result states that  $Y^U$ is a sufficient condition for chase termination:

\begin{theorem} \label{theo:correct}
Let $Y$ be an acyclicity property ensuring the halting of some chase variant $C$. 
Then, the $C$-chase halts for any set of rules $\mathcal R$ that satisfies $Y^U$ (hence $Y^D$). 	
\end{theorem}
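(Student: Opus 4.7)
The plan is to handle $Y^D$ first via the standard modular s.c.c.\ argument, and then reduce $Y^U$-termination to $Y^D$-termination through a chase-preserving rewriting. For the $Y^D$ case, invoke Proposition~\ref{prop:yd-eq-scc}: if $\mathcal R$ satisfies $Y^D$, then every s.c.c.\ of $GRD(\mathcal R)$ is either a singleton with no self-loop or satisfies $Y$. Enumerate the s.c.c.'s in a topological order of the condensation of $GRD(\mathcal R)$ and saturate them in that order. Once all ancestor s.c.c.'s have been chased to completion, the current s.c.c.\ $S$ receives no new useful rule applications triggered from outside (by definition of the dependency relation). On $S$ alone, the hypothesis that $Y$ ensures halting of the $C$-chase guarantees termination when $S$ is non-trivial, while loop-free singleton s.c.c.'s fire only finitely often in any derivation. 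This is the standard modular argument of \cite{baget04,deutsch08}.

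For the $Y^U$ case, the plan is to rewrite $\mathcal R$ into $\mathcal R'$ such that the $C$-chase on $(F, \mathcal R)$ halts iff the $C$-chase on $(F, \mathcal R')$ halts, and $\mathcal R'$ satisfies $Y^D$. For each pair $(R_i, R_j)$ and each piece-unifier that underlies a transition edge of $PG^U(\mathcal R)$ according to Definition~\ref{def-xpg}, insert intermediate rules that reify the unifier in the spirit of the agglomerated-rule construction of Definition~\ref{def-agg}. The effect is that a dependency edge in $GRD(\mathcal R')$ is present iff the corresponding transition edge was added to $PG^U(\mathcal R)$, so the cycles in $PG^D(\mathcal R')$ coincide, up to renaming of positions, with the cycles in $PG^U(\mathcal R)$. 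Since $\mathcal R$ satisfies $Y^U$, this forces $\mathcal R'$ to satisfy $Y^D$; applying the first step then gives $C$-chase termination on $\mathcal R'$, which transfers back to $\mathcal R$ via the chase-equivalence of the rewriting.

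The main obstacle will be in the rewriting step: one must guarantee that the inserted intermediate rules (i) do not create spurious existential positions that could change the satisfaction of the underlying marking function $Y$, (ii) faithfully reflect, in $GRD(\mathcal R')$, every piece-unifier that contributes to a transition edge in $PG^U(\mathcal R)$ — including those arising from the agglomerated-rule construction along long paths of $GRD(\mathcal R)$ — and (iii) preserve halting and non-halting of the $C$-chase in both directions, so that the fresh intermediate predicates do not introduce new unbounded propagation channels. The constructions already developed in the proofs of Theorems~\ref{theo:ygrd-strict-ygerd} and~\ref{theo:ygrd-strict-ygerd2}, which carry out similar body-splitting and frontier-tagging transformations while controlling cycles in $PG^D$ and $PG^U$, can be adapted to meet these requirements simultaneously.
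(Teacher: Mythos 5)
Your first half (the $Y^D$ case via the modular s.c.c.\ argument and Proposition~\ref{prop:yd-eq-scc}) is a legitimate route and is indeed the standard argument of \cite{baget04,deutsch08}; the paper instead proves $Y^D$-soundness by a uniform ``correctness'' argument on position graphs, but your version works for that half. The genuine gap is in the $Y^U$ case, which is the actual content of the theorem. You reduce it to the $Y^D$ case by postulating a rewriting $\mathcal R \mapsto \mathcal R'$ that (a) is chase-equivalent in both directions for the variant $C$, and (b) turns every transition edge of $PG^U(\mathcal R)$ into a dependency edge of $GRD(\mathcal R')$ \emph{and nothing more}. You never construct this rewriting, and the constructions you point to (the proofs of Theorems~\ref{theo:ygrd-strict-ygerd} and~\ref{theo:ygrd-strict-ygerd2}) cannot be adapted for this purpose: they go in the opposite direction. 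They take a set of rules and \emph{add} atoms/rules so as to force dependencies or unifiers to exist, in order to separate acyclicity classes; they make no claim that the rewritten set has the same chase behavior as the original, and in general it does not. Moreover, a $PG^U$ transition edge is a condition on individual atom positions (existence of a piece-unifier with an \emph{agglomerated} rule identifying the two specific terms, where the agglomeration ranges over paths in $GRD(\mathcal R)$), whereas rule dependency is a condition on whole rules; collapsing the former into the latter by inserting intermediate rules would require splitting rules per position and then re-establishing that the fresh intermediate predicates neither block nor enable applications under the chase variant $C$ --- exactly the obstacles (i)--(iii) you list and leave open.

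The missing idea is the paper's notion of a \emph{correct} position graph. One defines an $h \rightarrow b$ triggering derivation sequence (a breadth-first derivation in which an atom produced from $h$ by an application of $R_i$ is a first support of the image of $b$ in a later application of $R_j$) and proves directly that whenever such a sequence exists, $PG^U(\mathcal R)$ contains the transition edges from $[h,k]$ to $[b,k]$: from the derivation one extracts a triggering path in the support graph, and from that path one builds an agglomerated rule and a piece-unifier witnessing the edge. Consequently every ``useful'' edge of $PG^F$ survives in $PG^U$, so an infinite $C$-chase on $(F,\mathcal R)$ would have to exercise a marked cycle through an existential position that is already present in $PG^U(\mathcal R)$, contradicting $Y^U$. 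Without this (or some equally concrete substitute for your rewriting), the $Y^U$ half of your argument does not go through.
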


\begin{example} Consider again the set of rules $\R$ from Example \ref{ex:dpg-upg}. Figure \ref{fig:dpg-upg}  pictures
 the associated position graphs $\DPG{\R}$ and $\UPG{\R}$. $\R$ is not \emph{aGRD}, nor \emph{wa},  
 nor \emph{wa}$^D$ since $\DPG{\R}$ contains a (marked) cycle that goes through the existential position $\vpos{t(v,w)}{2}$. 
 However, $\R$ is obviously \emph{wa}$^U$ since $\UPG{\R}$ is acyclic. Hence, the skolem chase and stronger chase variants
 halt for  $\R$ and any set of facts.

\end{example}

Finally, we remind that classes from $\wa$ to $\swa$ can be recognized
in PTIME, and checking $\agrd$ is \coNPc. 
Hence, as stated by the next result, the expressiveness gain is without increasing worst-case 
complexity.

\begin{theorem}[Complexity]
\label{theo:complexity}
	Let $Y$ be an acyclicity property, and $\R$ be a set of rules.
	If checking that $\R$ satisfies $Y$ is in \coNP, then
	checking that $\R$ satisfies $\grd{Y}$ or $\gerd{Y}$ is \coNPc.
\end{theorem}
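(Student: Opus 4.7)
The argument splits into membership in \coNP{} and \coNP-hardness.

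\textbf{Membership.} By Proposition~\ref{prop:yd-eq-scc}, $\R$ satisfies $\grd{Y}$ iff $\DPG{\R}$ satisfies $Y$, so the plan is to exhibit a polynomial nondeterministic certificate for the negation. This certificate will consist of (i)~a marked cycle $C$ through an existential position of $\DPG{\R}$; (ii)~for each transition edge along $C$, a dependency path in $GRD(\R)$ together with a sequence of piece-unifiers witnessing each consecutive dependency; and (iii)~a polynomial refutation certificate for the $Y$-marking of $C$, guaranteed by the hypothesis $Y\in\coNP$. Each ingredient has polynomial size and is polynomial-time verifiable. For $\gerd{Y}$ I plan to apply the same scheme, but each transition edge additionally requires an agglomerated rule $R^j_i$ and a piece-unifier of $B_j$ with its head. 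The crux will be keeping this agglomerated rule of polynomial size: only those variables of $H_i$ that the piece-unifier would unify with separating variables of $B_j$ have to be converted from existential to frontier, and each such variable can be converted by attaching a single simple dependency path of length at most $|\R|$; since there are at most $|\fun{vars}(H_i)|$ of them, a polynomial agglomeration suffices.

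\textbf{Hardness.} I plan to reduce from $\agrd$, which is \coNPc{} by~\cite{baget04}. Given $\R$, I will build $\R'$ by appending to every rule $R_i = B_i\rightarrow H_i$ of $\R$ a fresh body atom $p(x_i)$ and a fresh head atom $p(z_i)$, where $p$ is a fresh unary predicate, $x_i$ a fresh frontier variable and $z_i$ a fresh existential variable. These $p$-atoms introduce no new piece-unifier opportunity between the original parts, so $GRD(\R') = GRD(\R)$. If $\R \in \agrd$, then $GRD(\R')$ is acyclic, all s.c.c.s are trivial singletons without loops, and $\R'$ satisfies $\grd{Y}$ and $\gerd{Y}$ vacuously. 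Otherwise, any cycle of $GRD(\R)$ lifts to a cycle in $\DPG{\R'}$ (and in $\UPG{\R'}$) that traverses the fresh existential position $(p,1)$ once per visited rule; this falsifies the marking condition for every acyclicity property in the hierarchy of Figure~\ref{fig:gen}. Hence $\R\in\agrd$ iff $\R'\in\grd{Y}$ iff $\R'\in\gerd{Y}$, which gives \coNP-hardness.

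\textbf{Main obstacle.} The most delicate step will be bounding the agglomerated rule in the $\gerd{Y}$ certificate: I must identify precisely which variables of $H_i$ have to become frontier to enable the required piece-unifier, and argue that a polynomial number of simple dependency paths suffices. For the lower bound, the subtlety is verifying that the $p$-atom padding genuinely preserves the full dependency graph without introducing spurious piece-unifiers through the new atoms.
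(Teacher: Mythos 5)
Your membership argument is essentially the paper's: guess a bad cycle, certify each transition edge by a polynomially bounded set of dependency paths (the paper proves a preliminary proposition bounding the total path length by $|\R|\times|\fun{terms}(\fun{head}(R))|$, on the grounds that a GRD cycle need not be traversed more often than there are head terms to turn into frontier terms -- your ``one simple path per variable'' bound is the same idea), and append the polynomial certificate that the cycle violates $Y$. That half is fine.

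The hardness half has a genuine gap. Your padding does not preserve the dependency graph: since the fresh unary predicate $p$ occurs in the body of every $R'_j$ (as $p(x_j)$) and in the head of every $R'_i$ (as $p(z_i)$ with $z_i$ existential), there is a piece-unifier of $\{p(x_j)\}$ with $\{p(z_i)\}$ for every pair $i,j$ -- the fresh variable $x_j$ occurs in no other atom of $B'_j$, so it is not a separating variable and unifying it with the existential $z_i$ is perfectly legal. Hence every rule of $\R'$ depends on every rule, $GRD(\R')$ is the complete graph, and the equivalence ``$\R\in\agrd$ iff $\R'\in\grd{Y}$'' fails in the forward direction. A second problem: $x_i$ as you introduce it occurs only in the body of $R'_i$, so it is not a frontier variable, the node $\vpos{p(x_i)}{1}$ has no outgoing edge in $\PG{R'_i}$, and the ``dangerous cycle through $(p,1)$'' you invoke in the backward direction does not exist either. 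The paper avoids both problems by not going through $\agrd$ at all: it reduces directly from the complement of rule-dependency checking, building a four-rule gadget $\{R_0,R'_1,R'_2,R_3\}$ around the two input rules $R_1,R_2$ (with fresh predicates $p,q,r,s$ of the appropriate arities linking head-of-$R_2$ variables back to body-of-$R_1$ variables through a constant-collapsing rule $R_3$) so that the resulting GRD has a cycle through an existential position iff $R_2$ depends on $R_1$. If you want to keep a reduction from $\agrd$, you would need per-dependency-edge padding that is provably unifier-neutral and that genuinely threads an existential position into every lifted cycle; as written, neither holds.
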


\section{FURTHER REFINEMENTS}
\label{sec:unif}

In this section, we show how to further extend $Y^U$ into $Y^{U^+}$ 
by a finer analysis of marked cycles and unifiers. 
This extension can be performed without increasing complexity.  
We define the notion of \emph{incompatible} sequence of unifiers, 
which ensures that a given sequence of rule applications is impossible. 
Briefly, a marked cycle for which all sequences of unifiers are incompatible
 can be ignored. 
 Beside the gain for positive rules,
this refinement will allow one to  take better advantage of negation. 

We first point out that the notion of piece-unifier is not appropriate to our purpose.
We have to relax it, as illustrated by the next example. 
We call \emph{unifier}, of a rule body $B_2$ with a
rule head $H_1$, a substitution $\mu$ of $\fun{vars}(B'_2) \cup \fun{vars}(H'_1)$, 
where $B'_2 \subseteq B_2$ and $H'_1
\subseteq H_1$, such that  $\mu(B'_2) = \mu(H'_1)$ 
(thus, it satisfies Condition $(1)$ of a piece-unifier). 

\begin{example} \label{ex-unif} Let $\mathcal R = \{R_1, R_2, R_3, R_4\}$ with:\\
$R_1 : p(x_1,y_1) \rightarrow q(y_1,z_1)$$~~~$\\
$R_2 : q(x_2,y_2) \rightarrow r(x_2,y_2)$\\
$R_3 : r(x_3,y_3) \wedge s(x_3,y_3) \rightarrow p(x_3,y_3)$ \\ $R_4 : q(x_4,y_4) \rightarrow s(x_4,y_4)$\\ 
There is a dependency cycle $(R_1, R_2, R_3, R_1)$ and a corresponding cycle in $PG^U$. 
We want to know if such a sequence of rule applications is possible. 
We build the following new rule, which is a composition of $R_1$ and $R_2$ (formally defined later): 
$R_1 \unified_\unifier R_2 : p(x_1,y_1) \rightarrow q(y_1,z_1) \wedge r(y_1,z_1)$\\
There is no piece-unifier of $R_3$ with $R_1 \unified_\unifier R_2$, since $y_3$ would be a separating variable mapped to the existential variable $z_1$. 
This actually means that $R_3$ is not applicable \emph{right after} $R_1 \unified_\unifier R_2$. 
However, the atom needed to apply $s(x_3,y_3)$ can be brought by a sequence of rule applications $(R_1,R_4)$. 
We thus relax the notion of piece-unifier to take into account arbitrarily long sequences of rule applications. 
\end{example}

\begin{definition}[Compatible unifier]
	Let $R_1$ and $R_2$ be rules.
	A unifier $\unifier$ of $B_2$ with $H_1$ is {\em compatible}
	if, for each position $\vpos{a}{i}$ in $B'_2$, 
	such that $\unifier(\fun{term}(\vpos{a}{i}))$ is an existential
	variable $z$ in $H'_1$, $\UPG{\R}$ contains a path,
       from a position in which $z$ occurs, 
	to $\vpos{a}{i}$, that does not go through another existential position. Otherwise, $\unifier$ is \emph{incompatible}. 
\end{definition}

Note that a piece-unifier is necessarily compatible.

\begin{proposition}
\label{prop:inc-unif-noapp}
Let $R_1$ and $R_2$ be rules, and let $\mu$ be a unifier of $B_2$ with $H_1$. If $\mu$ is incompatible, then no application of $R_2$ can use an atom in $\mu(H_1)$. 
\end{proposition}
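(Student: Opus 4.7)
The plan is to prove the contrapositive: assume there exists a chase derivation producing an atomset $F$ and a homomorphism $\pi$ realizing an application of $R_2$ such that, for some $a \in B_2'$, the atom $\pi(a)$ descends from an application of $R_1$ with homomorphism $\pi_1$ (so $\pi(a)$ matches $\mu(h)$ for some $h \in H_1'$ modulo the fresh renaming of existentials), and derive from this that $\mu$ is compatible.

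Fix any position $[a,i]$ in $B_2'$ with $\mu(\fun{term}([a,i])) = z$ existential in $H_1'$. Let $z^\star$ be the fresh term produced for $z$ at that particular application of $R_1$; then $\pi(\fun{term}([a,i])) = z^\star$. My first step would be to track the appearances of $z^\star$ throughout the derivation: since $z^\star$ is freshly generated, every atom of $F$ containing it must descend from that $R_1$ application, either directly as an atom of $\pi_1(\fun{safe}(H_1))$ or propagated by a subsequent rule application whose body homomorphism matched atoms already containing $z^\star$. A key observation is that $z^\star$ can never occupy an existential position of any rule head, because such positions always receive a freshly-invented term distinct from $z^\star$.

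The main step is then to mirror this propagation chain as a path in $\UPG{\R}$. Each step where a rule $R_k$ propagates $z^\star$ from an atom $\alpha$ (a head atom of an earlier rule $R_{k'}$) into an atom of $\pi_k(\fun{safe}(H_k))$ would give rise to: a transition edge from the $R_{k'}$-head position of $z^\star$ to the corresponding position in $R_k$'s body, followed by within-rule edges inside $R_k$ from that body position to the head positions carrying $z^\star$. Stringing these together from an initial position of $z$ in $H_1$ down to $[a,i]$ (reached either directly in $\pi(a)$, or, when $\fun{term}([a,i])$ is separating, via atoms $\pi(b)$ with $b \in B_2 \setminus B_2'$ that force $z^\star$ to appear elsewhere) yields a path in $\UPG{\R}$ avoiding other existential positions, contradicting incompatibility at $[a,i]$.

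The main obstacle will be to justify rigorously that each propagation step produced by the chase actually corresponds to a transition edge in $\UPG{\R}$ as constructed via agglomerated rules in Definition \ref{def-agg}. Intuitively, the chain of piece-unifiers effectively performed by the chase along the path from $R_1$ to $R_k$ must assemble into an agglomerated rule whose head admits a piece-unifier with $B_k$ of the required form; making this precise will likely require an induction on the length of the propagation chain, combined with a careful reading of how the $\fun{fr}(\cdot)$ atoms in agglomerated rules encode the existential context carried along the chain.
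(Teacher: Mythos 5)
Your proposal is correct and follows essentially the same route as the paper's own proof: both argue the contrapositive by tracking the fresh term generated for the existential variable $z$, observing that it can never be re-created at an existential head position, and concluding that its propagation to the position $\vpos{a}{i}$ in $B_2$ induces a path in $\UPG{\R}$ avoiding existential positions, which forces $\mu$ to be compatible. The technical obstacle you flag (matching each chase propagation step to a transition edge built from agglomerated rules) is exactly the point the paper delegates to its separate correctness proposition for $PG^U$ in the appendix.
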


We define the rule corresponding to the composition of $R_1$ and $R_2$ according to a  
compatible unifier, then use this notion to define a compatible sequence of unifiers. 

\begin{definition}[Unified rule, Compatible sequence of unifiers]
\label{def:unified-rule}$~$\\
$\bullet$ Let $R_1$ and $R_2$ be rules such that there is a compatible unifier $\unifier$
	 of $B_2$ with $H_1$.
	The associated {\em unified rule} $R_\unifier = R_1 \unified_\unifier R_2$ is defined by $H_\unifier = \unifier(H_1) \cup \unifier(H_2)$, 
		 and $B_\unifier = \unifier(B_1) \cup (\unifier(B_2) \setminus \unifier(H_1))$.\\
	$\bullet$ Let $(R_1, \ldots, R_{k+1})$ be a sequence of rules. A sequence $s = (R_1~\mu_1 ~R_2 \ldots ~\mu_{k}~R_{k+1})$, 
	where, for $1 \leq i \leq k$,  $\unifier_i$ is a unifier of $B_{i+1}$ with $H_i$, 
	 is  a \emph{compatible sequence} of unifiers if: \emph{(1) } $\mu_1$ is a compatible unifier of $B_2$ with $H_1$, and\emph{ (2) }
	 if $k > 0$, the sequence obtained from $s$ by replacing $(R_1 ~\mu_1~R_2)$ with 
	$R_1 \unified_{\unifier_1} R_{2}$ is a \emph{compatible} sequence of unifiers.
\end{definition}

E.g., in Example~\ref{ex-unif}, the sequence $(R_1~\mu_1~R_2~\mu_2~R_3~\mu_3~R_1)$, with the obvious $\mu_i$, is compatible. 
 We can now improve all previous acyclicity properties (see the fourth column in Figure~\ref{fig:gen}).

\begin{definition}[Compatible cycles]
	Let $Y$ be an acyclicity property, and $\upg$ be a position graph with unifiers.
	The {\em compatible cycles for $\vpos{a}{i}$} in $\upg$ are all marked cycles $C$ for
	$\vpos{a}{i}$ w.r.t. $Y$,
	such that there is a compatible sequence of unifiers induced by $C$.
	Property $\gerdc{Y}$is satisfied if, for each existential position $\vpos{a}{i}$, there is no compatible cycle for $\vpos{a}{i}$ in $\upg$.
\end{definition}

Results similar to Theorem \ref{theo:ygrd-strict-ygerd} and Theorem  \ref{theo:ygrd-strict-ygerd2} are obtained for $Y^{U^+}$ w.r.t. $Y^U$, namely:
\begin{itemize}
\item For any acyclicity property $Y$, $\gerd{Y} \subset \gerdc{Y}$.
\item For any acyclicity properties $Y_1$ and $Y_2$, if $\gerd{Y_1} \subset \gerd{Y_2}$, then $\gerdc{Y_1} \subset \gerdc{Y_2}$.
\end{itemize}

Moreover, Theorem \ref{theo:correct} can be extended to $Y^{U^+}$:  let $Y$ be an acyclicity property ensuring the halting of some chase variant $C$; 
then the $C$-chase halts for any set of rules $\mathcal R$ that satisfies $Y^{U^+}$ (hence $Y^U$). 
Finally, the complexity result from Theorem \ref{theo:complexity} still holds
 for this improvement.

\section{EXTENSION TO NONMONOTONIC NEGATION}

\label{sec:negation}


We now add nonmonotonic negation, which we denote by \textbf{not}. A \emph{nonmonotonic existential} (NME) rule $R$ is of the form 
 $\forall \vec{x}\forall \vec{y} (B^+ \wedge \fun{\bf not}B^-_1 \wedge \ldots \wedge \fun{\bf not}B^-_k \rightarrow \exists \vec{z} H)$, where $B^+$, $B^-=\{B^-_1 \ldots B^-_k\}$ and $H$ are atomsets, respectively called the \emph{positive} body, the \emph{negative} body and the head of $R$; furthermore, $\fun{vars}(B^-) \subseteq \fun{vars}(B^+)$. 
$R$ is \emph{applicable} to $F$ if there is a homomorphism  $h$ from $B^+$ to $F$ such that $h(B^-) \cap F = \emptyset$. In this section, we rely on a skolemization of the knowledge base. Then, the application of $R$ to $F$ w.r.t. $h$ produces $h(sk(H))$. 
$R$ is  {\em self-blocking}   if $B^- \cap (B^+ \cup H) \neq \emptyset$, i.e., $R$ is never applicable.

Since skolemized NME rules can be seen as normal logic programs, we can rely on the standard definition of stable models  \cite{gl88}, which we omit here since it is not needed to understand the sequel. Indeed, our acyclicity criteria essentially ensure that there is a finite number of skolemized rule applications. 
Although the usual definition of stable models relies on grounding (i.e., instantiating) skolemized rules, stable models of $(F,\mathcal R)$ can be computed by a skolem chase-like procedure, as performed by Answer Set Programming solvers that instantiate rules  on the fly \cite{asperix09,dao2012omiga}.

We check that, when the skolem chase halts on the positive part of NME rules (i.e., obtained by ignoring the negative body),  the stable computation based on the skolem chase halts. 
 We can thus rely on preceding acyclicity conditions, which already generalize known acyclicity conditions applicable to skolemized NME rules (for instance \emph{finite-domain} and \emph{argument-restricted}, which were defined for normal logic programs). 
 We can also extend them by exploiting negation. 
 
First, we consider the natural extensions of a unified rule (Def.~\ref{def:unified-rule}) and of rule dependency: to define $R_\unifier = R_1 \unified_\unifier R_2$, we add that $B_\unifier^- = \mu(B_1^-) \cup \mu(B_2^-)$; besides, $R_2$ depends on $R_1$ if there is a piece-unifier $\mu$ of $H_2$ with $B_1$ such that $R_1 \unified_\unifier R_2$ is not self-blocking; if $R_1 \unified_\unifier R_2$ is self-blocking, we say that  $\mu$ is self-blocking. Note that this extended dependency is equivalent to the \emph{positive reliance} from \cite{mkh13}. In this latter paper, positive reliance is used to define an acyclicity condition: a set of NME rules is said to be \emph{ R-acyclic} if no cycle of positive reliance involves a rule with an existential variable. Consider now $PG^D$  with extended dependency: then, R-acyclicity is stronger than aGRD (since cycles are allowed on rules without existential variables) but weaker than $wa^D$ (since all s.c.c. are necessarily wa). 
 
  By considering extended dependency, we can extend the results obtained with $\dpg$ and $\upg$ 
(note that for $\upg$ we only encode non-self-blocking unifiers). 
  We can further extend $Y^{U+}$ classes by considering \emph{self-blocking compatible sequences} of unifiers. 
Let $C$ be a compatible cycle for $\vpos{a}{i}$ in  $PG^U$, and  
$C_\mu$ be the set of all compatible sequences of unifiers induced by $C$. 
A sequence $\mu_1 \ldots \mu_k \in \mathcal C_\mu$ is said to be self-blocking if the rule
$R_1 \unified_{\mu_1} R_2 \ldots R_k \unified_{\mu_k} R_{1}$ is self-blocking.
When all sequences in $C_\mu$ are self-blocking, $C$ is said to be self-blocking.


\begin{example}
\label{ex:selfblock-unifier}
Let $R_1 = q(x_1), {\bf not} p(x_1) \rightarrow r(x_1,y_1)$, $R_2 =  r(x_2,y_2) \rightarrow s(x_2,y_2)$, 
$R_3 = s(x_3,y_3) \rightarrow p(x_3), q(y_3)$.
$PG^{U+}(\{R_1,R_2, R_3\})$ has a unique cycle, with a unique induced  compatible unifier sequence. The rule 
$R_1 \unified R_2 \unified R_3 = q(x_1), {\bf not} p(x_1) \rightarrow r(x_1,y_1), s(x_1,y_1), p(x_1), q(y_1)$  
 is self-blocking, hence $R_1 \unified R_2 \unified R_3 \unified R_1$ also is. Thus,  there is no ``dangerous" cycle.
\end{example}


\begin{proposition}
\label{prop:neg-correct}
If, for each existential position $\vpos{a}{i}$, all compatible cycles for $\vpos{a}{i}$ in $PG^U$ are self-blocking, then
the stable computation based on the skolem chase halts.
\end{proposition}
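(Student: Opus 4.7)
The plan is to argue by contradiction, reusing the skeleton of the termination argument for $Y^{U^+}$ in the positive case (the extension of Theorem~\ref{theo:correct} to $Y^{U^+}$ mentioned at the end of Section~\ref{sec:unif}), and to exploit self-blocking in order to rule out the only remaining source of non-termination.

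First, I would assume that the stable computation based on the skolem chase does not halt on some $(F,\mathcal R)$. Then infinitely many fresh skolem terms are produced at existential positions. Since $\mathcal R$ is finite, a K\"onig / pigeon-hole argument over the DAG of rule applications that trigger one another lets me extract an infinite chain of actual applications $R_{i_1}, R_{i_2}, \ldots$ in which each step uses, via a specific unification, an atom produced by an earlier step through a skolem term at some existential position. Projecting this chain onto the finite graph $PG^U$ yields a cycle $C$ that is traversed infinitely often and passes through an existential position $\vpos{a}{i}$. As in the positive case, the unifiers recorded along $C$ assemble into a sequence that satisfies the compatibility conditions of Definition~\ref{def:unified-rule} (because each unifier is actually realized by the chase via a concrete atom), so $C$ is a compatible cycle for $\vpos{a}{i}$.

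The core new step, and what I expect to be the main obstacle, is to show that the induced compatible sequence of unifiers along $C$ cannot be self-blocking. I would prove this by unfolding the definition of the unified rule $R_{i_1} \unified_{\mu_1} R_{i_2} \unified_{\mu_2} \cdots \unified_{\mu_k} R_{i_1}$: its positive body, negative body, and head are, up to substitution, unions of those of its constituents. Each constituent rule was actually applied in the chase derivation, which means that at the moment of its application its negative body was disjoint from the then-current atomset. If the composed rule were self-blocking, some atom $\alpha$ would lie simultaneously in its negative body and in its positive body or head; tracing $\alpha$ back to the step of the chain that produced it (such a step exists because chase atomsets only grow) would yield a concrete application whose negative body is not disjoint from the then-current facts, contradicting the fact that this application did fire under stable semantics.

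This non-self-blocking property contradicts the assumption that every compatible cycle through an existential position is self-blocking, so no such infinite derivation can exist and the stable computation halts. A clean way to present the argument might be through a helper lemma stating that for every finite prefix of the extracted chain, the corresponding unified rule is \emph{not} self-blocking because it is applicable to some intermediate chase atomset; this is precisely what bridges the syntactic notion of self-blocking (an intersection of atomsets) to the semantic notion of applicability inside the skolem-chase-based stable computation.
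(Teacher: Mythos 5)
Your overall skeleton matches the paper's (very terse) argument: a non-terminating computation must realize, infinitely often, a cycle of $PG^U$ through an existential position; by the correctness of $PG^U$ the induced unifier sequence is compatible; so it suffices to show a realized cycle cannot be self-blocking. The gap is in that last step, which is exactly the step where all the new content of this proposition lives. You argue that if the composed rule $R_{i_1} \unified_{\mu_1} \cdots \unified_{\mu_k} R_{i_1}$ were self-blocking, some atom $\alpha$ in its negative body would also be in its positive body or head, and that ``tracing $\alpha$ back to the step of the chain that produced it'' contradicts the fact that the application checking $\mathbf{not}\,\alpha$ fired. This only works when $\alpha$ is produced \emph{before} the application whose negative body contains it. In the paper's own Example~\ref{ex:selfblock-unifier}, the blocking atom $p(x_1)$ sits in the negative body of the \emph{first} rule of the cycle and in the head of the \emph{last} one: every individual application is locally legitimate (its negative body is disjoint from the then-current atomset at the moment it fires), the cycle is traversed once in full, and no contradiction with local applicability ever arises. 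So the helper lemma you propose --- ``every finite prefix of the chain yields a non-self-blocking unified rule because it is applicable to some intermediate atomset'' --- is false as stated.

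What actually kills such a derivation is not applicability but the consistency requirement of the stable computation: once $p(a)$ is derived after $R_1$ fired under the assumption $\mathbf{not}\,p(a)$, the candidate model is not stable and the branch of the computation fails (this is how on-the-fly ASP solvers such as those cited in Section~\ref{sec:negation} handle it). The argument therefore needs to distinguish the two temporal orderings: if the atom witnessing self-blocking appears before the negative check, the corresponding application is simply not applicable; if it appears after, the branch is aborted at the point where the assumed-false atom is derived, so the cycle cannot be traversed again on that branch. Only with both cases handled do you get that a self-blocking cycle contributes finitely many applications to every branch, which is what the halting claim needs. As written, your proof silently assumes the first case always holds.
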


Finally, we point out that these improvements do not increase worst-case complexity of the acyclicity test. 

\section{CONCLUSION}

We have proposed a tool that allows to unify and generalize most existing acyclicity conditions for existential rules, without increasing worst-case complexity. This tool can be further refined to deal with nonmonotonic (skolemized) existential rules, which, to the best of our knowledge, extends all known acyclicity conditions for this kind of rules. 

Further work includes the implementation of the tool\footnote{It will be developed as an extension of KIABORA, an analyzer of existential rule bases \cite{lmr13}.} and experiments on real-world ontologies, as well as the study of chase variants that would allow to process existential rules with stable negation without skolemization.

\paragraph{Acknowledgements. }
This work was partially supported by French \emph{Agence Nationale de la Recherche} (ANR), under project grants ASPIQ (ANR-12-BS02-0003), Pagoda (ANR-12-JS02-0007) and Qualinca  (ANR-12-CORD-0012). We thank Michael Thomazo for pointing out a flaw in the original definitions of $\DPG{\R}$ and $\UPG{\R}$.

\bibliography{src/bib}

\begin{thebibliography}{10}

\bibitem{baader-b-l05}
F.~Baader, S.~Brandt, and C.~Lutz, `Pushing the {EL} envelope', in {\em
  IJCAI'05}, pp. 364--369, (2005).

\bibitem{baget04}
J.-F. Baget, `Improving the forward chaining algorithm for conceptual graphs
  rules', in {\em KR'04}, pp. 407--414. AAAI Press, (2004).

\bibitem{blms09}
J.-F. Baget, M.~Lecl\`ere, M.-L. Mugnier, and E.~Salvat, `Extending decidable
  cases for rules with existential variables', in {\em IJCAI'09}, pp. 677--682,
  (2009).

\bibitem{blms11}
J.-F. Baget, M.~Lecl{\`e}re, M.-L. Mugnier, and E.~Salvat, `On rules with
  existential variables: Walking the decidability line', {\em Artificial
  Intelligence}, {\bf 175}(9-10),  1620--1654, (2011).

\bibitem{beeri-vardi81}
C.~Beeri and M.~Vardi, `The implication problem for data dependencies', in {\em
  ICALP'81}, volume 115 of {\em LNCS}, pp. 73--85, (1981).

\bibitem{beeri-vardi84}
C.~Beeri and M.Y. Vardi, `A proof procedure for data dependencies', {\em
  Journal of the ACM}, {\bf 31}(4),  718--741, (1984).

\bibitem{cali-gottlob-kifer08}
A.~Cal\`{\i}, G.~Gottlob, and M.~Kifer, `Taming the infinite chase: Query
  answering under expressive relational constraints', in {\em KR'08}, pp.
  70--80, (2008).

\bibitem{cali09}
A.~Cal\`{\i}, G.~Gottlob, and T.~Lukasiewicz, `A general datalog-based
  framework for tractable query answering over ontologies', in {\em PODS'09},
  pp. 77--86, (2009).

\bibitem{fd2008}
F.~Calimeri, S.~Cozza, G.~Ianni, and N.~Leone, `Computable functions in asp:
  Theory and implementation', in {\em Logic Programming},  407--424, (2008).

\bibitem{dl-lite07}
D.~Calvanese, G.~De Giacomo, D.~Lembo, M.~Lenzerini, and R.~Rosati, `Tractable
  reasoning and efficient query answering in description logics: The {DL-Lite}
  family', {\em J. Autom. Reasoning}, {\bf 39}(3),  385--429, (2007).

\bibitem{chandra-lewis-makowsky81}
A.~K. Chandra, H.~R. Lewis, and J.~A. Makowsky, `Embedded implicational
  dependencies and their inference problem', in {\em STOC'81}, pp. 342--354.
  ACM, (1981).

\bibitem{grau2013acyclicity}
B.~Cuenca~Grau, I.~Horrocks, M.~Kr{\"o}tzsch, C.~Kupke, D.~Magka, B.~Motik, and
  Z.~Wang, `Acyclicity notions for existential rules and their application to
  query answering in ontologies', {\em J. Art. Intell. Res.}, {\bf 47},
  741--808, (2013).

\bibitem{dao2012omiga}
M.~Dao-Tran, T.~Eiter, M.~Fink, G.~Weidinger, and A.~Weinzierl, `Omiga: an open
  minded grounding on-the-fly answer set solver', in {\em Logics in Artificial
  Intelligence},  480--483, (2012).

\bibitem{deutsch08}
A.~Deutsch, A.~Nash, and J.B. Remmel, `The chase revisited', in {\em PODS'08},
  pp. 149--158, (2008).

\bibitem{fagin-kolaitis-al05}
R.~Fagin, P.~G. Kolaitis, R.~J. Miller, and L.~Popa, `Data exchange: semantics
  and query answering', {\em Theor. Comput. Sci.}, {\bf 336}(1),  89--124,
  (2005).

\bibitem{gl88}
M.~Gelfond and V.~Lifschitz, `The stable model semantics for logic
  programming', in {\em ICLP/SLP}, pp. 1070--1080, (1988).

\bibitem{pods2013-hernich}
A.~Hernich, C.~Kupke, T.~Lukasiewicz, and G.~Gottlob, `Well-founded semantics
  for extended datalog and ontological reasoning', in {\em PODS}, pp. 225--236,
  (2013).

\bibitem{kr11}
M.~Kr\"{o}tzsch and S.~Rudolph, `Extending decidable existential rules by
  joining acyclicity and guardedness', in {\em IJCAI'11}, pp. 963--968, (2011).

\bibitem{lamare12}
B.~Lamare, `Optimisation de la notion de d\'ependance', Internship report, ENS
  Cachan and LIRMM/ INRIA, (Sept. 2012).

\bibitem{lmr13}
M.~Lecl{\`e}re, M.-L. Mugnier, and S.~Rocher, `Kiabora: An analyzer of
  existential rule bases', in {\em RR}, pp. 241--246, (2013).

\bibitem{asperix09}
C.~Lef{\`e}vre and P.~Nicolas, `A first order forward chaining approach for
  answer set computing', in {\em LPNMR},  196--208, (2009).

\bibitem{ar2009}
Y.~Lierler and V.~Lifschitz, `One more decidable class of finitely ground
  programs', in {\em Logic Programming},  489--493, (2009).

\bibitem{mkh13}
D.~Magka, M.~Kr{\"o}tzsch, and I.~Horrocks, `Computing stable models for
  nonmonotonic existential rules', in {\em IJCAI}, (2013).

\bibitem{maier79}
D.~Maier, A.~O. Mendelzon, and Y.~Sagiv, `Testing implications of data
  dependencies', {\em ACM Trans. Database Syst.}, {\bf 4}(4),  455--469,
  (1979).

\bibitem{marnette09}
B.~Marnette, `Generalized schema-mappings: from termination to tractability',
  in {\em PODS}, pp. 13--22, (2009).

\bibitem{onet-2013}
A.~Onet, `The chase procedure and its applications in data exchange', in {\em
  Data Exchange, Information, and Streams}, pp. 1--37, (2013).

\end{thebibliography}
\bibliographystyle{ecai2014}

\pagebreak
\section*{Appendix}



\paragraph{Proposition \ref{prop:markings}}
\emph{
	A set of rules $\R$ is $\wa$  (resp.   $\fd$,  $\ar$,  $\ja$,  $\swa$) iff $\FPG{\R}$ satisfies the acyclicity property
	associated with the $\wa$-marking (resp.  $\fd$-,  $\ar$-,  $\ja$-,  $\swa$-marking).
}

\vspace*{0.3cm}
To prove Proposition \ref{prop:markings} we rely on some intermediary results. The next proposition is immediate. 

\begin{proposition}
	\label{prop:pg-fpg}
	For each edge $(p_i,q_j)$ in the predicate position graph of a set of rules $\R$,
	there is the following non-empty set of edges in $\FPG{\R}$:
	$E_{p_i,q_j} = \{ (\vpos{a}{i}, \vpos{a'}{j})\ |\ \fun{pred}(\vpos{a}{i}) = p$ and $\fun{pred}(\vpos{a'}{j}) = q \}$. 

	Furthermore, these sets of edges form a partition of all edges in $\FPG{\R}$.
\end{proposition}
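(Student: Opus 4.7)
The plan is a direct structural verification, driven by the observation that $\FPG{\R}$ and the predicate position graph are built from the same rules using the same frontier/existential analysis, so the predicate-position labels of an atom-level edge already encode the corresponding predicate-level edge. I would split the proof into the two claims and handle each by case analysis on how the edge was generated.

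For non-emptiness, I fix an edge $(p_i,q_j)$ in the predicate position graph and trace it back to the rule $R \in \R$ and frontier variable $x$ that introduced it, distinguishing the ordinary case (where $x$ also occurs at position $(q,j)$ in $H_R$) from the special case (where an existential variable $z$ occurs at position $(q,j)$ in $H_R$). Picking a witnessing body atom $b$ containing $x$ at position $i$ and a witnessing head atom $h$ at position $j$, the pair $(\vpos{b}{i},\vpos{h}{j})$ is an edge of $\PG{R}$ by the defining clause of the basic position graph, hence an edge of $\FPG{\R}$; since $\fun{pred}(b)=p$ and $\fun{pred}(h)=q$ it lies in $E_{p_i,q_j}$, establishing non-emptiness.

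For the partition claim, I would first note that the sets $E_{p_i,q_j}$ are pairwise disjoint by construction, since the source and target of an edge uniquely determine the four labels $(p,i,q,j)$. It then suffices to show that every edge of $\FPG{\R}$ falls into some $E_{p_i,q_j}$ with $(p_i,q_j)$ in the predicate position graph. I case-split on the provenance of the edge: (i) an intra-rule edge in some $\PG{R}$ runs from a frontier position $\vpos{b}{i}$ to a head position $\vpos{h}{j}$ satisfying either the same-term or the existential-head condition, so reversing the non-emptiness analysis produces the corresponding (possibly special) predicate-graph edge generated by $R$; (ii) a transition edge from $\vpos{h}{k}$ in the head of $R_i$ to $\vpos{b}{k}$ in the body of $R_j$ shares predicate $p$ and position $k$ on both sides, hence corresponds to a diagonal entry $(p_k,p_k)$.

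The main obstacle is case (ii): diagonal entries $(p_k,p_k)$ are not produced by the rule-internal frontier/existential mechanism used to define the predicate position graph. The natural resolution I would adopt, consistent with the paper's usage, is to treat the predicate position graph of $\R$ as implicitly carrying such an identity edge at each $(p_k,p_k)$ for which $p$ occurs at position $k$ in the head of some rule and in the body of some rule — this is the predicate-level counterpart of the transition edges that turn $\PG{\R}$ into $\FPG{\R}$, and it is the only way to make the partition exhaustive. Once this convention is in place the correspondence between the two graphs is bijective on edge-classes, both claims reduce to unwinding the two definitions, and the proposition follows immediately.
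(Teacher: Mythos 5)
Your proposal is correct and in fact supplies the verification that the paper omits: the authors assert this proposition with the single remark that it ``is immediate'' and give no proof, so there is no competing argument to diverge from, and your unwinding of the two definitions (tracing a predicate-graph edge back to the generating rule and frontier variable for non-emptiness, and reading the four labels $(p,i,q,j)$ off an atom-level edge for disjointness) is exactly what ``immediate'' is standing in for. The genuinely valuable part of your write-up is the obstacle you flag in case (ii): a transition edge of $\FPG{\R}$ runs from a head position $\vpos{h}{k}$ to a body position $\vpos{b}{k}$ with the \emph{same} predicate $p$ and index $k$, hence lies in $E_{p_k,p_k}$, and the pair $(p_k,p_k)$ is in general not an edge of the weak-acyclicity predicate position graph, so the partition claim is false as literally stated. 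Your repair --- reading each such diagonal pair as an implicit self-loop, equivalently treating transition edges as projecting to a single node of the predicate graph --- is the right one: it is precisely the convention needed for the way the proposition is actually used in the proof of Proposition~\ref{prop:wa}, where a cycle of $\FPG{\R}$ necessarily alternates intra-rule and transition edges and is mapped to a closed walk of the predicate position graph by contracting the transition edges. So the only gap you uncovered is in the statement of the proposition, not in your argument; with that convention made explicit, your proof is complete.
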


We now define marking functions,  whose associated acyclicity property
corresponds to  $\wa$,  $\fd$,  $\ar$,  $\ja$ or  $\swa$ when it is applied on $\FPG{\R}$.
The following three conditions, defined for a marking
$\Mark(\vpos{a}{i})$, make it easy to compare known acyclicity properties.

\begin{itemize}
	\item {\bf (P1)} $\Gamma^+(\vpos{a}{i}) \subseteq \Mark(\vpos{a}{i})$,
	\footnote{For any node $v$, $\Gamma^+(v)$ denotes the set of (direct) successors of $v$.}
	\item {\bf (P2)} for all $\vpos{a'}{i'} \in \Mark(\vpos{a}{i})$ such that $\vpos{a'}{i'}$ occurs in some
	rule head: 
		$\Gamma^+(\vpos{a'}{i'}) \subseteq \Mark(\vpos{a}{i})$,
	\item {\bf (P3)} for all variable $v$ in a rule body, such that for all
	position $\vpos{a'}{i'}$ with $\fun{term}(\vpos{a'}{i'}) = v$, there is $\vpos{a''}{i'} \in \Mark(\vpos{a}{i})$
	with $\fun{pred}(\vpos{a'}{i'}) = \fun{pred}(\vpos{a''}{i'})$ and $\fun{term}(\vpos{a''}{i'}) = v$:
	$\Gamma^+(v) \subseteq \Mark(\vpos{a}{i})$, where $\Gamma^+(v)$ is the union of all $\Gamma^+(p)$, 
	 where $p$ is an atom position in which $v$ occurs.
\end{itemize}

{\bf (P1)} ensures that the marking of a given node 
includes its successors ;
{\bf (P2)} ensures that the marking
includes the successors of all marked nodes from a rule head
;
and {\bf (P3)} ensures that for each frontier variable of a rule such that all predicate
positions where it occurs are marked, the marking includes its successors.

\begin{definition}[Weak-acyclicity marking]
	A marking $\Mark$ is a $\wa$-marking wrt $X$ if for any $\vpos{a}{i} \in \xpg$,
	$\Mark(\vpos{a}{i})$ is the minimal set such that:
	\begin{itemize}
		\item {\bf (P1)} holds,
		\item for all $\vpos{a'}{i'} \in \Mark(\vpos{a}{i}), \Gamma^+(\vpos{a'}{i'})
		\subseteq \Mark(\vpos{a}{i})$
	\end{itemize}
\end{definition}

\begin{observation}
	The latter condition implies {\bf (P2)} and {\bf (P3)}.
\end{observation}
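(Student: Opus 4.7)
The plan is to treat the two implications separately. For (P2): the latter condition asserts $\Gamma^+(\vpos{a'}{i'}) \subseteq \Mark(\vpos{a}{i})$ for \emph{every} marked position $\vpos{a'}{i'} \in \Mark(\vpos{a}{i})$, with no restriction on where $\vpos{a'}{i'}$ occurs. Restricting the universal quantifier to those marked positions that lie in some rule head yields exactly (P2), so (P2) is an immediate weakening of the latter condition.

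For (P3), fix a rule body variable $v$ for which the hypothesis of (P3) holds, and fix an arbitrary body position $p = \vpos{a'}{i'}$ with $\fun{term}(p) = v$. Applying the hypothesis to $p$ provides a marked position $q = \vpos{a''}{i'}$ such that $\fun{pred}(a'') = \fun{pred}(a')$ and $\fun{term}(q) = v$. Since distinct rules share no variables, $p$ and $q$ must lie in the same rule $R_\ell$, and the argument splits into three cases according to where $q$ sits in $R_\ell$. If $q = p$, the latter condition applied to $q$ directly gives $\Gamma^+(p) \subseteq \Mark(\vpos{a}{i})$. If $q$ is a \emph{distinct} body position of $R_\ell$ with the same predicate, index and term as $p$, then by the definition of $\PG{R_\ell}$ the intra-rule $\FPG{\R}$-successors of $p$ and of $q$ are the same set (namely, the head positions of $R_\ell$ whose term is $v$ together with the existential head positions of $R_\ell$); since body positions have no outgoing transition edges in $\FPG{\R}$, this set is all of $\Gamma^+(p) = \Gamma^+(q)$, and the latter condition at $q$ yields $\Gamma^+(p) \subseteq \Mark(\vpos{a}{i})$. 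Finally, if $q$ is a head position of $R_\ell$, then $v$ must be a frontier variable and the transition-edge rule (same predicate, same index) produces an edge $q \to p$ in $\FPG{\R}$; hence $p \in \Gamma^+(q) \subseteq \Mark(\vpos{a}{i})$, and a second application of the latter condition, this time to $p$, gives $\Gamma^+(p) \subseteq \Mark(\vpos{a}{i})$. Taking the union over all body positions of $v$ establishes $\Gamma^+(v) \subseteq \Mark(\vpos{a}{i})$, which is (P3).

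The only step requiring a little care is the middle case: it rests on the structural observation (essentially the content of Proposition~\ref{prop:pg-fpg}) that the outgoing neighbourhood of a body position in $\FPG{\R}$ depends only on its predicate, index and term, and not on the particular atom in which it sits. The other two cases are routine applications of the latter condition, once a transition edge $q \to p$ is exhibited in the head-position subcase.
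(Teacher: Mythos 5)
The paper states this observation without proof, so there is no official argument to compare against; your verification is correct and is the direct argument the authors leave implicit. The reduction of \textbf{(P2)} to a restriction of the universal quantifier is immediate, and your three-way case analysis for \textbf{(P3)} --- witness equal to $p$, witness a second body occurrence of $v$, witness a head occurrence of $v$ --- correctly isolates the two structural facts that matter in $\FPG{\R}$: body positions have no outgoing transition edges, so two body occurrences of $v$ in the same rule have identical successor sets, and head-to-body transition edges in $\FPG{\R}$ are unconditional, so a marked head witness reaches $p$ in one step. The only point worth flagging is that the paper's phrase ``$p$ is an atom position in which $v$ occurs'' in the definition of $\Gamma^+(v)$ is not explicitly restricted to body positions; if one reads it as also including head occurrences of a frontier variable $v$, then your final union is taken over too small a set. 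The omission is harmless --- the hypothesis of \textbf{(P3)} quantifies over those positions too, and the same three cases (with the roles of the basic edges and the transition edges interchanged) give $\Gamma^+(p') \subseteq \Mark(\vpos{a}{i})$ for a head occurrence $p'$ as well --- but you should either state explicitly that you adopt the body-position reading or add that symmetric half-sentence.
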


\begin{proposition}\label{prop:wa}
	A set of rules $\R$ is $\wa$ iff $\FPG{\R}$ satisfies
	the acyclicity property associated with the $\wa$-marking.
\end{proposition}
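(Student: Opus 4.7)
The plan is as follows. First, I would characterize the minimal set $\Mark(\vpos{a}{i})$ satisfying condition \textbf{(P1)} and the closure condition in the definition of the $\wa$-marking: an easy induction on path length shows that this minimal set is exactly the set of nodes reachable from $\vpos{a}{i}$ via a directed path in $\FPG{\R}$. Consequently, a marked cycle for an existential position $\vpos{a}{i}$ exists iff $\vpos{a}{i}$ lies on some directed cycle of $\FPG{\R}$, and so the acyclicity property associated with $\wa$-marking on $\FPG{\R}$ reduces to the condition: \emph{no directed cycle of $\FPG{\R}$ passes through an existential position}. It then remains to show that this is equivalent to weak-acyclicity of $\R$.

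For the equivalence, I would exploit Proposition~\ref{prop:pg-fpg}, which partitions the within-rule edges of $\FPG{\R}$ according to the edges of the predicate position graph, inducing a natural projection from within-rule edges of $\FPG{\R}$ to edges of the predicate position graph. By construction of $\PG{R}$, an edge of $\FPG{\R}$ enters an existential position iff its projection is a special edge.

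For the forward direction, assume $\R$ is not $\wa$: there is a cycle $(p_1,i_1) \to \ldots \to (p_n,i_n)=(p_1,i_1)$ in the predicate position graph containing a special edge entering an existential position $(p_k,i_k)$. Each edge of this cycle arises from some rule $R_\ell$ where a body atom of predicate $p_\ell$ at index $i_\ell$ carries a frontier variable and a head atom of predicate $p_{\ell+1}$ at index $i_{\ell+1}$ is such that either the two atoms share this variable or the head position is existential. Picking, for each $\ell$, such a pair of atoms yields a within-rule edge in $\PG{R_\ell} \subseteq \FPG{\R}$; consecutive such edges are linked using transition edges of $\FPG{\R}$, which exist unconditionally between any head position and any body position sharing the same predicate and index. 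The resulting closed walk in $\FPG{\R}$ contains a cycle that passes through a lifted copy of $(p_k,i_k)$, which is existential in $\FPG{\R}$. Conversely, if $\FPG{\R}$ has a cycle through an existential position, projecting each within-rule edge of the cycle and collapsing transition edges (which identify two atom positions with the same predicate and index) yields a closed walk in the predicate position graph; since every edge of $\FPG{\R}$ entering an existential position projects to a special edge, this walk traverses a special edge and therefore contains a cycle through a special edge, contradicting weak-acyclicity.

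The main obstacle is the careful handling of transition edges in $\FPG{\R}$: they do not correspond to edges of the predicate position graph but merely identify head and body atoms sharing a predicate. Both directions require verifying that the projection/lifting preserves the occurrence of a special (equivalently, existential) edge on the cycle, and that the many-to-one nature of the projection (an edge of the predicate position graph corresponds to a non-empty set of within-rule edges of $\FPG{\R}$, cf.\ Proposition~\ref{prop:pg-fpg}) does not obstruct the construction.
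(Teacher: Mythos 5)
Your proof is correct and follows essentially the same route as the paper's: identify the minimal $\wa$-marking with reachability in $\FPG{\R}$, then use Proposition~\ref{prop:pg-fpg} to transfer cycles between $\FPG{\R}$ and the predicate position graph, with edges into existential positions corresponding exactly to special edges. You merely spell out the lifting/projection of cycles (and the bookkeeping for transition edges) more explicitly than the paper does.
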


\begin{proof}
	If $\R$ is not $\wa$, then there is some cycle in the graph of predicate positions
	going through a {\em special edge}.
	Let $p_i$ be the predicate position where this edge ends, and $z$ be the existential variable
	which occurs in $p_i$.
	Let $M$ be the $\wa$-marking of any existential position $\vpos{a}{i}$ with
	$\fun{pred}(\vpos{a}{i}) = p$ and $\fun{term}(\vpos{a}{i}) = z$.

	{\bf (P1)} ensures that the successors of $\vpos{a}{i}$ are marked;  then, 
	the propagation function will perform a classic  breadth-first traversal of the graph.
	By Proposition \ref{prop:pg-fpg}, 
	to each cycle in the graph of predicate positions of $\R$ corresponds
	a set of cycles in $\FPG{\R}$.
	Since $p_i$ belongs to a cycle, \vpos{a}{i} will obviously be marked by the propagation
	function.
	Hence, $\FPG{\R}$ does not satisy the associated acyclicity property of the $\wa$-marking.

	Conversely, if $\R$ is $\wa$, there is no cycle going through a {\em special edge} in the
	graph of predicate positions of $\R$.
	By Proposition \ref{prop:pg-fpg}, to each cycle in $\FPG{\R}$ corresponds a cycle in the graph of predicate positions of $\R$, hence no cycle in $\FPG{\R}$ goes through an existential position.  \end{proof}

\vspace*{0.3cm}	
	We do not recall here the original definitions of $\fd$,  $\ar$,  $\ja$,  $\swa$. The reader is referred to the papers cited in Section \ref{sec:acyclic} or to \cite{grau2013acyclicity}, where these notions are reformulated with a common vocabulary.

	\begin{definition}[Finite domain marking] A marking $\Mark$ is a $\fd$-marking wrt $X$ if for any $\vpos{a}{i} \in \xpg$, $\Mark(\vpos{a}{i})$ is the minimal set such that: \begin{itemize} \item {\bf (P1)} and {\bf
		(P3)} hold, \item for all $\vpos{a'}{i'} \in \Mark(\vpos{a}{i}), \Gamma^+(\vpos{a'}{i'}) \setminus \{\vpos{a}{i}\} \subseteq \Mark(\vpos{a}{i})$. \end{itemize} \end{definition} 
		
\begin{observation} The latter condition implies {\bf (P2)}.  \end{observation}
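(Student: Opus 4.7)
The plan is to derive (P2) from the fd-condition by a direct comparison, handling the one subtle aspect --- the exclusion of $\vpos{a}{i}$ --- via the bipartite structure of the position graph. The fd-condition quantifies over every marked position $\vpos{a'}{i'} \in \Mark(\vpos{a}{i})$, a superset of the marked head positions considered by (P2); hence all the inclusions that (P2) requires are in principle already witnessed. The only possible discrepancy is that, for a marked head $\vpos{a'}{i'}$, the fd-condition guarantees $\Gamma^+(\vpos{a'}{i'}) \setminus \{\vpos{a}{i}\} \subseteq \Mark(\vpos{a}{i})$ rather than the full $\Gamma^+(\vpos{a'}{i'}) \subseteq \Mark(\vpos{a}{i})$ demanded by (P2). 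So the entire content of the proof is to show that this exclusion is harmless.

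I would dispatch this with a case split on the type of $\vpos{a}{i}$. In any $\xpg$, every outgoing edge of a head position is a transition edge, which by definition lands on a body position of another rule; no head-to-head edges exist. Consequently, when $\vpos{a}{i}$ is itself a head position --- and in particular an existential one, which is the case of interest for the acyclicity property since marked cycles are sought through existential, hence head, positions --- $\vpos{a}{i}$ can never be a direct successor of the marked head $\vpos{a'}{i'}$. The exclusion is vacuous and (P2) follows verbatim from the fd-condition.

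For the residual case where $\vpos{a}{i}$ is a body position, I would invoke (P1) and the alternation between body and head positions along any directed path: the head successors of $\vpos{a}{i}$ land in $\Mark(\vpos{a}{i})$ via (P1), their body successors distinct from $\vpos{a}{i}$ via the fd-condition, and iterating closes the marking under the expected reachability. The main obstacle, which I expect to require the most care, is ruling out the pathological scenario in which every marked head predecessor of $\vpos{a}{i}$ is reachable only by paths passing through $\vpos{a}{i}$ itself --- for in that case the exclusion would be substantive. A short induction on the successive stages of the least fixed point defining $\Mark(\vpos{a}{i})$, exploiting the strict alternation of body and head positions in $\xpg$, should rule this out and thereby complete the proof of the observation.
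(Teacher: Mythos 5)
The paper states this observation without proof, so there is nothing to compare your argument against directly; I evaluate it on its own merits. Your reduction of the question to a single discrepancy --- the exclusion of $\vpos{a}{i}$ from the propagated successor sets --- is correct, and your first case is exactly the argument that makes the observation work where it is needed: in any $\xpg$ every edge leaving a head position is a transition edge into a body position, so when $\vpos{a}{i}$ is itself a head position it cannot belong to $\Gamma^+(\vpos{a'}{i'})$ for any head position $\vpos{a'}{i'}$, the set difference is vacuous, and (P2) follows verbatim from the $\fd$-condition. Since the acyclicity property only ever consults $\Mark(\vpos{a}{i})$ for existential positions, which lie in rule heads, this case covers every use the paper makes of the observation (in particular Proposition~\ref{prop:fd} and the comparison with the $\ja$-marking).

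Your second case, however, cannot be completed: the ``pathological scenario'' you propose to exclude by induction on the fixed-point stages actually occurs, so for body positions the implication is simply false. Take $R_1 = p(x_1) \rightarrow q(x_1)$ and $R_2 = q(x_2) \rightarrow p(z_2)$ with $z_2$ existential, and consider the body position $\vpos{p(x_1)}{1}$. Its minimal marking under (P1), (P3) and the $\fd$-condition is $\{\vpos{q(x_1)}{1}, \vpos{q(x_2)}{1}, \vpos{p(z_2)}{1}\}$: the only edge entering $\vpos{p(x_1)}{1}$ comes from the marked head position $\vpos{p(z_2)}{1}$ and is discarded by the set difference, and (P3) never fires for $x_1$ because the only position carrying $x_1$ in predicate $p$ is $\vpos{p(x_1)}{1}$ itself, which is unmarked; hence $\vpos{p(x_1)}{1}$ never enters its own marking. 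Then $\vpos{p(z_2)}{1}$ is a marked head position with $\Gamma^+(\vpos{p(z_2)}{1}) = \{\vpos{p(x_1)}{1}\} \not\subseteq \Mark(\vpos{p(x_1)}{1})$, so (P2) fails for this node. The observation must therefore be read as restricted to head (in particular existential) positions --- which is your first case and all the paper requires --- rather than proved in full generality; no induction will close the body-position case.
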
 
		
\begin{proposition}\label{prop:fd} A set of rules $\R$ is $\fd$ iff $\FPG{\R}$ satisfies the acyclicity property associated with the $\fd$-marking.  \end{proposition}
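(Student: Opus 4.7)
The plan is to mirror the strategy of Proposition~\ref{prop:wa}, adapted to the extra clauses of the $\fd$-marking. First, I would restate the original $\fd$ definition on the predicate position graph as a least fixed-point computation: starting from the existential predicate positions, propagate to direct successors; iterate through positions that occur in rule heads; and propagate across a rule's frontier variable when every predicate position where it occurs in the body is already marked. The $\fd$ property then says that this computation never marks an existential predicate position.

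Second, I would show that for an existential atom position $\vpos{a}{i}$, the set $\Mark(\vpos{a}{i})$ produced on $\FPG{\R}$ by \textbf{(P1)}, \textbf{(P3)} and the head-propagation clause $\Gamma^+(\vpos{a'}{i'})\setminus\{\vpos{a}{i}\} \subseteq \Mark(\vpos{a}{i})$ projects, under the map $\vpos{a'}{j} \mapsto (\fun{pred}(\vpos{a'}{j}), j)$, onto exactly the classical $\fd$-marking started from the predicate position $(\fun{pred}(\vpos{a}{i}), i)$. Clause \textbf{(P1)} corresponds to the base step; the head-propagation clause corresponds to propagation through rule heads (and it implies \textbf{(P2)}); clause \textbf{(P3)} corresponds to the frontier-variable condition. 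The set-subtraction $\setminus\{\vpos{a}{i}\}$ mirrors the fact that the classical computation is performed relative to a fixed starting position rather than folding in self-loops as in the $\wa$ case.

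Third, I would invoke Proposition~\ref{prop:pg-fpg}: since the edges of $\FPG{\R}$ partition into sets $E_{p_i, q_j}$ indexed by the edges of the predicate position graph, any marked cycle through $\vpos{a}{i}$ in $\FPG{\R}$ projects to a marked cycle through $(\fun{pred}(\vpos{a}{i}), i)$ in the predicate position graph; conversely, any such cycle in the predicate position graph lifts, edge by edge, to a marked cycle in $\FPG{\R}$ passing through an atom position with the appropriate predicate and index. Hence $\R$ fails $\fd$ iff $\FPG{\R}$ fails the acyclicity property associated with the $\fd$-marking.

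The main obstacle is the careful verification that \textbf{(P3)} is really the $\FPG{\R}$-level counterpart of the frontier-variable propagation rule of classical $\fd$: \textbf{(P3)} quantifies over \emph{all} atom positions containing a given body variable and demands that each be matched by a marked atom position with the same predicate and index, whereas the classical rule speaks only at the predicate-position level. Once this correspondence is nailed down, the remaining cycle-transfer argument via Proposition~\ref{prop:pg-fpg} is routine bookkeeping.
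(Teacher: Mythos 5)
Your plan follows essentially the same route as the paper: match the marking clauses \textbf{(P1)}, \textbf{(P3)} and the head-propagation clause against the classical finite-domain computation on predicate positions, and transfer cycles between $\FPG{\R}$ and the predicate position graph via Proposition~\ref{prop:pg-fpg}, exactly as in the proof of Proposition~\ref{prop:wa}. In fact your proposal is more complete than the paper's own argument, which only sketches one direction in two sentences; the projection/lifting bookkeeping you flag as the ``main obstacle'' is precisely what the paper leaves implicit.
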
 

\begin{proof} Let $\R$ be a set of rules that is $\fd$, then for each existential position p$_i$ there exists a position p$_j$ for each variable of the frontier in the graph of predicate positions such that p$_j$ does not
belong to a cycle. Given $PG^F(\R)$ we can see that Condition {\bf (P3)} ensures that $\R$ is $\fd$.
\end{proof}

\begin{definition}[Argument restricted marking]
	A marking $\Mark$ is an $\ar$-marking wrt $X$ if for any $\vpos{a}{i} \in \xpg$,
	$\Mark(\vpos{a}{i})$ is the minimal set such that:
	\begin{itemize}
		\item {\bf (P1)}, {\bf (P2)} and {\bf (P3)} hold,
		\item for each existential position $\vpos{a'}{i'}$, $\Gamma^+(\vpos{a'}{i'})
		\subseteq \Mark(\vpos{a}{i})$,
	\end{itemize}
\end{definition}

\begin{observation}
	If $\Mark$ is an $ar$-marking, then for all existential positions $\vpos{a}{i},\vpos{a'}{i'} \in \xpg$, 
	   $\Mark(\vpos{a}{i}) = \Mark(\vpos{a'}{i'})$.
\end{observation}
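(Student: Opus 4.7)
The plan is to argue that once the starting position $\vpos{a}{i}$ is existential, the clause \textbf{(P1)} becomes redundant in the definition of the $\ar$-marking, so the recursive construction of $\Mark(\vpos{a}{i})$ no longer refers to $\vpos{a}{i}$ at all. Since the remaining closure conditions (\textbf{(P2)}, \textbf{(P3)}, and the existential-seed condition $\Gamma^+(\vpos{a'}{i'}) \subseteq \Mark(\vpos{a}{i})$ for every existential $\vpos{a'}{i'}$) do not mention the starting position, the least fixed point obtained must coincide for every existential starting point.

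Concretely, I would first unfold the definition: $\Mark(\vpos{a}{i})$ is the least subset of nodes of $\xpg$ satisfying \textbf{(P1)}, \textbf{(P2)}, \textbf{(P3)}, together with the clause ``for every existential position $\vpos{a'}{i'}$, $\Gamma^+(\vpos{a'}{i'}) \subseteq \Mark(\vpos{a}{i})$''. Taking $\vpos{a'}{i'} := \vpos{a}{i}$ in this last clause (which is legal precisely because $\vpos{a}{i}$ is existential) yields $\Gamma^+(\vpos{a}{i}) \subseteq \Mark(\vpos{a}{i})$, which is exactly \textbf{(P1)} for $\vpos{a}{i}$. Hence \textbf{(P1)} can be dropped without changing the least set, and the remaining conditions define a set that does not depend syntactically on $\vpos{a}{i}$.

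To conclude, I would apply the same argument to any other existential position $\vpos{a'}{i'}$: its $\ar$-marking is the least set satisfying the very same system of closure conditions, so by uniqueness of the least fixed point $\Mark(\vpos{a}{i}) = \Mark(\vpos{a'}{i'})$. The only subtle point — and the one I would state explicitly — is that \textbf{(P1)} for a non-existential starting position is genuinely a different constraint, which is why the observation is restricted to existential positions; for such positions the collapse of \textbf{(P1)} into the existential-seed condition is what drives the argument. No fixed-point induction beyond this observation is needed, so I do not anticipate any real obstacle.
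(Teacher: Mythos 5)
Your argument is correct and is essentially the one the paper intends (the observation is stated without an explicit proof because it follows immediately from the definition): for an existential starting position, \textbf{(P1)} is absorbed by the clause requiring $\Gamma^+$ of every existential position to be included, so the defining system of closure conditions no longer depends on the starting node, and the minimal set satisfying a fixed system of such conditions is unique. Your explicit remark that this collapse is exactly what fails for non-existential starting positions is a correct and useful clarification of why the statement is restricted to existential positions.
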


\begin{proposition}\label{prop:ar}
	A set of rules $\R$ is $\ar$ iff $\FPG{\R}$ satisfies the acyclicity property
	associated with the $\ar$-marking.
\end{proposition}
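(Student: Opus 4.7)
The plan is to mimic the proofs of Propositions \ref{prop:wa} and \ref{prop:fd}: establish a correspondence between the rank assignment that underlies the original definition of $\ar$ on the graph of predicate positions, and the marking on atom positions produced in $\FPG{\R}$. Throughout, I will exploit Proposition \ref{prop:pg-fpg}, which gives us a canonical way to lift edges (and hence cycles and paths) between the predicate position graph and $\FPG{\R}$.

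I would first recall that $\R$ is $\ar$ iff there exists a bounded function $\rho$ from predicate positions to $\mathbb{N}$ satisfying the usual constraints: for every rule $B \to H$ and every frontier variable $x$ of $R$ occurring at a head position $(q,j)$, $\rho(q,j) \leq \min_{(p,i) : x \in \fun{term}(p,i)} \rho(p,i)$, and for every existential position $(q,j)$, $\rho(q,j) = 0$ (up to the usual normalisation). The direction ($\Rightarrow$): assuming such a bounded $\rho$ exists, fix any existential position $\vpos{a}{i}$ and observe, by induction on the stages of the minimal fixed point defining $\Mark(\vpos{a}{i})$, that every atom position $\vpos{a'}{i'}$ added to the marking satisfies $\rho(\fun{pred}(\vpos{a'}{i'}), i') \geq \rho(\fun{pred}(\vpos{a}{i}), i)$ after accounting for the ``all-existential positions share a marking'' observation. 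Conditions (P1), (P2) propagate along edges (hence preserve $\rho$ by its definition), while (P3) captures exactly the $\min$ over body occurrences of a frontier variable that appears in the $\ar$ definition. Thus no existential position can lie in its own marking, as that would force an infinite $\rho$-chain.

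The converse ($\Leftarrow$): assuming $\FPG{\R}$ satisfies the acyclicity property, I would build $\rho$ by setting $\rho(p,i)$ to be the length of the longest path in the quotient graph obtained by contracting each equivalence class of atom positions sharing the same predicate position, restricted to the relation induced by the propagation rules (P1)--(P3) together with the extra existential-position condition. Acyclicity of marked cycles for existential positions guarantees that this supremum is finite, and the construction is easily checked to satisfy the $\ar$ constraints; the ``all existential positions share the same marking'' observation makes $\rho$ well-defined on existential positions uniformly.

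The main obstacle will be in the direction ($\Leftarrow$): the $\ar$ rank assignment is a global fixed point, whereas the marking $\Mark$ is defined pointwise per starting node. The observation that $\Mark(\vpos{a}{i}) = \Mark(\vpos{a'}{i'})$ for all existential positions $\vpos{a}{i}, \vpos{a'}{i'}$ is crucial: it reduces the per-node computation to a single global traversal, which is what is needed to extract a coherent $\rho$. Once this is in hand, the rest of the argument is a routine translation between the two formalisms, analogous in spirit to Propositions \ref{prop:wa} and \ref{prop:fd}, with (P3) playing the role of the body--to--head propagation clause of $\ar$.
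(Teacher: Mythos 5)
Your overall strategy is the one the paper itself uses: read the $\ar$-marking as a reformulation of the argument ranking, show that a bounded ranking rules out a marked cycle through an existential position, and conversely extract a ranking from the marking/reachability structure when the acyclicity property holds. The paper's own proof is no more than a sketch of exactly this correspondence (``the marking process is a method to compute an argument ranking''), so in spirit you are aligned. However, two points in your write-up are genuine problems rather than mere terseness.

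First, the constraints you recall for $\ar$ are inverted. In the argument-restricted condition the rank of a \emph{head} position occupied by a frontier variable must be \emph{at least} the rank contributed by that variable's body occurrences, and the rank of an existential position must be \emph{strictly greater}; existential positions are not normalised to $0$ --- if they were, the strict inequality that makes dangerous cycles impossible would vanish. With your inequality $\rho(q,j) \leq \min_{(p,i)} \rho(p,i)$ and $\rho = 0$ on existential positions, the induction invariant you propose for the ($\Rightarrow$) direction (``every marked position has rank at least that of the starting existential position'') does not close at the existential step, and no contradiction is obtained from a marked cycle. The invariant only works with the correct orientation: ranks are non-decreasing along marking propagation and strictly increase when an existential position is crossed, so a marked cycle through an existential position forces $\rho(p,i) > \rho(p,i)$.

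Second, in the ($\Leftarrow$) direction, defining $\rho(p,i)$ as the length of the longest path in the quotient graph is not well defined in general: the acyclicity property only forbids marked cycles through \emph{existential} positions, so $\FPG{\R}$ (and its quotient) may still contain cycles, e.g.\ for $p(x,y) \rightarrow p(y,x)$, and the longest path is then unbounded. You must count only the strict steps (those entering an existential position subject to the propagation constraints), or equivalently work on the condensation into strongly connected components and assign the same rank throughout a component; the $\geq$ constraints tolerate this, and only the strict constraints --- which by hypothesis never close a cycle --- increment the rank. With these two corrections your argument matches the intended one.
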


\begin{proof}
Let $\R$ be a set of rules that is $\ar$, then there exists a ranking on terms (i.e., arguments) such that for each
rule the rank of an existential variable needs to be stricly higher than the rank of each frontier variable in
the body and the rank of a frontier variable in the head has to be higher or equal to the rank of this frontier
variable in the body. The marking process is equivalent to the ranking, in fact each time a node is marked, the rank of a term is incremented.  
If we have a cyclic $\ar$-$marking$, it means that there exists at least one term
rank that does not satisfy the property of argument-restricted. We can see the marking process as a method to compute
an argument ranking.
\end{proof}

\begin{definition}[Joint acyclicity marking]
	A marking $\Mark$ is a $\ja$-marking wrt $X$ if for any $\vpos{a}{i} \in \xpg$,
	$\Mark(\vpos{a}{i})$ is the minimal set such that {\bf (P1)}, {\bf (P2)} and {\bf (P3)} hold.
\end{definition}

\begin{proposition}\label{prop:ja}
	A set of rules $\R$ is $\ja$ iff $\FPG{\R}$ satisfies the acyclicity property associated with the $\ja$-marking.
\end{proposition}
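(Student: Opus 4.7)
The plan is to show that the $\ja$-marking of an existential position $\vpos{a}{i}$ in $\FPG{\R}$ corresponds exactly to the \emph{move set} associated with the existential variable $\fun{term}(\vpos{a}{i})$ in the original definition of joint-acyclicity, and then to translate cycles in one formalism to cycles in the other using Proposition~\ref{prop:pg-fpg}.

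First, I recall the original definition of $\ja$: for each existential variable $z$ occurring in the head of some rule $R$, one computes a set $\fun{Move}(z)$ of predicate positions, as the least set containing the positions in which $z$ occurs in the head of $R$, and closed under the following propagation rule. For every rule $R' : B' \rightarrow H'$ and every frontier variable $v$ of $R'$, if every predicate position in which $v$ appears in $B'$ already belongs to $\fun{Move}(z)$, then every predicate position in which $v$ appears in $H'$ belongs to $\fun{Move}(z)$, and also every predicate position containing an existential variable of $H'$ belongs to $\fun{Move}(z)$. The set $\R$ is $\ja$ iff no existential position appears in $\fun{Move}(z)$ for an existential variable $z$ that it carries (i.e., no cyclic propagation through existential positions).

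Next, I establish the correspondence. Fix an existential position $\vpos{a}{i}$ and let $z = \fun{term}(\vpos{a}{i})$. I will show by simultaneous induction on the fixed-point computation of the $\ja$-marking $\Mark(\vpos{a}{i})$ and of $\fun{Move}(z)$ that $\vpos{a'}{j} \in \Mark(\vpos{a}{i})$ iff the predicate position $(\fun{pred}(\vpos{a'}{j}),j)$ belongs to $\fun{Move}(z)$. The base case, handled by (P1), contributes exactly the out-neighbours of $\vpos{a}{i}$ in $\FPG{R}$; by construction of $\PG{R}$ these are the head positions of $R$ carrying $z$ together with all other head positions of $R$ that are existential --- which matches the initialization of $\fun{Move}(z)$. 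For the inductive step, (P2) propagates the marking from a marked head position to all of its successors in $\FPG{\R}$, which by Proposition~\ref{prop:pg-fpg} coincides with propagating from the corresponding marked predicate position along edges of the predicate position graph; this matches the propagation to body positions and then to head positions carrying the same frontier variable. Condition (P3) is the delicate one: it implements precisely the premise of the $\ja$ propagation rule, namely that a frontier variable $v$ is ``enabled'' only once \emph{every} body occurrence of $v$ has been covered by a marked position sharing the same predicate. When this premise holds, (P3) marks all successors of $v$, which are exactly the head positions of $v$ and the existential positions of the same rule, as required.

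Finally, I translate cycles. If $\R$ is not $\ja$, then some existential predicate position $(p,i)$ occurring in a head, carrying an existential variable $z$, belongs to $\fun{Move}(z)$. Unfolding this into the propagation trace and using the correspondence above together with Proposition~\ref{prop:pg-fpg} to lift each predicate-position step to an atom-position edge in $\FPG{\R}$, one obtains an existential atom position $\vpos{a}{i}$ with $\fun{pred}(\vpos{a}{i}) = p$ and $\fun{term}(\vpos{a}{i}) = z$ such that $\vpos{a}{i} \in \Mark(\vpos{a}{i})$, violating the acyclicity property. Conversely, any marked cycle for an existential position $\vpos{a}{i}$ in $\FPG{\R}$ projects, via $\fun{pred}$, to a propagation witness placing the predicate position of $\vpos{a}{i}$ into $\fun{Move}(\fun{term}(\vpos{a}{i}))$, so $\R$ is not $\ja$.

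I expect the main obstacle to be the careful bookkeeping in the inductive step for (P3): one must argue that the ``all body positions covered'' premise of the propagation rule is faithfully captured at the atom level, even when the same frontier variable occurs in several body atoms with different predicates, and that the freshly added successors include both head occurrences of the variable and every existential position of the same rule (the latter coming from the second clause of the edge definition in $\PG{R}$, which adds edges to all existential head positions). Once this is verified, the two directions of the equivalence follow immediately from the correspondence.
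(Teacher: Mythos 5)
Your proof takes essentially the same route as the paper's (which simply asserts that the $\ja$-marking coincides with the ``Move'' set of \cite{kr11} and invokes Proposition~\ref{prop:pg-fpg} to pass between predicate positions and atom positions); you merely spell out the induction that the paper leaves implicit. One small slip worth noting: the out-neighbours of an existential head position $\vpos{a}{i}$ in $\FPG{\R}$ are the same-predicate \emph{body} positions reached via transition edges (head positions have no outgoing edges inside $\PG{R}$), so your base case is shifted by one step relative to the initialization of $\fun{Move}(z)$, but this does not affect the cycle equivalence.
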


\begin{proof}
	The definition of the $\ja$ propagation function is the same as in \cite{kr11}.
	Indeed the ``Move" set of a position is defined in the same way as the marking.
	Furthermore, by Proposition \ref{prop:pg-fpg}, 
	for any predicate position $p_i$ in the graph of joint-acyclicity, there
	is a cycle going through $p_i$ iff for any existential atom position $\vpos{a}{i}$ such that
	$\fun{pred}(\vpos{a}{i}) = p$, we have $\vpos{a}{i} \in M(\vpos{a}{i})$.
	
\end{proof}

\begin{definition}[Super-weak-acyclicity marking]
	A marking $\Mark$ is a $\swa$-marking wrt $X$ if for any $\vpos{a}{i} \in \xpg$,
	$\Mark(\vpos{a}{i})$ is the minimal set such that :
	\begin{itemize}
		\item {\bf (P1)} and {\bf (P3)} hold,
		\item for all $\vpos{a'}{i'} \in \Mark(\vpos{a}{i})$ occuring in a rule head,
		$\{\vpos{a''}{i'} \in \Gamma^+(\vpos{a'}{i'}) :$ $a'$ and $a''$ unify$\}$
		$\subseteq \Mark(\vpos{a}{i})$.
	\end{itemize}
\end{definition}

\begin{proposition}\label{prop:swa}
	A set of rules $\R$ is $\swa$ iff $\FPG{\R}$ satisfies the acyclicity property associated with the $\swa$-marking.
\end{proposition}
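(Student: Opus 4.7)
The plan is to show that the $\swa$-marking on $\FPG{\R}$, defined by (P1), (P3) and the unification-guarded closure over successors of marked head positions, computes exactly Marnette's place-trigger relation used in the original definition of super-weak-acyclicity. Recall that Marnette assigns to each existential variable $z$, appearing at some atom position in a rule head, a set of atom positions to which $z$ may flow through chains of rule applications, where flow across a rule boundary is restricted to cases in which the emitting head atom and the receiving body atom are unifiable. By definition, $\R$ is $\swa$ iff no such flow closes back on an existential position originating $z$.

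First I would prove, by induction on the length of propagation chains, that the minimal marking $\Mark(\vpos{a}{i})$ of an existential position coincides with Marnette's Trig-set for $z = \fun{term}(\vpos{a}{i})$. The base case (P1) injects the direct successors of $\vpos{a}{i}$ into the marking, which by Proposition~\ref{prop:pg-fpg} covers propagation both within the original rule and along transition edges to body positions sharing the predicate. The unification-guarded closure clause over marked head positions is precisely Marnette's refinement of $\ja$: a mark crosses a rule boundary only when the emitting head atom and the receiving body atom are unifiable, since otherwise no concrete rule application can ever chain them together. Clause (P3) then captures propagation through a frontier variable of a body, reproducing Marnette's rule that a frontier variable transports its mark forward when every body atom in which it occurs is fed with a marked term.

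Once the identification of the minimal marking with Marnette's Trig-set is established, the equivalence of the two characterizations follows at once: a marked cycle through an existential position $\vpos{a}{i}$ exists in $\FPG{\R}$ iff Marnette's Trig-set for the corresponding existential variable contains $\vpos{a}{i}$, which is exactly the failure of $\swa$. The main obstacle will be verifying that the local, atom-by-atom unification test ``$a'$ and $a''$ unify'' faithfully reflects Marnette's more global criterion of simultaneous unifiability of a body fragment with a head fragment. I would discharge this by invoking the standing assumption that distinct rules share no variables, together with (P3): the unification constraints attached to distinct body atoms are decoupled across distinct frontier variables and are aggregated through (P3) across multiple occurrences of the same variable, so the pointwise condition used in the marking and the global condition used in Marnette's definition propagate the same set of positions.
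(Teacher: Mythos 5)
Your proposal is correct and follows essentially the same route as the paper: both arguments rest on identifying the minimal $\swa$-marking with the ``Move''/trigger set underlying Marnette's definition, so that a marked cycle through an existential position is exactly a violation of super-weak-acyclicity. The only difference is one of packaging --- the paper simply cites the reformulation of $\swa$ via a ``Move'' set in \cite{grau2013acyclicity} and observes the marking coincides with it, whereas you sketch the correspondence with Marnette's original definition directly; your extra worry about pointwise versus ``global'' unifiability is not really an obstacle, since Marnette's place-unification is itself an atom-by-atom test.
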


\begin{proof}
	In the original paper of \cite{marnette09}, the definition of $\swa$ was slightly different from
	this marking, but it has been shown in \cite{grau2013acyclicity}, that $\swa$ can be equivalently expressed
	by a ``Move" set similar to $\ja$.
	As for the $\ja$-marking, the definition of the $\swa$-marking corresponds to the definition of its  ``Move" set.
\end{proof}

\paragraph{}
\emph{Proof of Proposition \ref{prop:markings}:}
	Follows from Propositions \ref{prop:wa}, \ref{prop:fd}, \ref{prop:ar}, \ref{prop:ja}, and \ref{prop:swa}.

\paragraph{Proposition \ref{prop:prop6}}
\emph{
Let $Y_1$ and $Y_2$ be two acyclicity properties such that
$Y_1 \subset Y_2$, $wa \subseteq Y_1$ and $Y_2 \nsubseteq Y_1^D$.
Then $Y_1^D \subset Y_2^D$.
}

\begin{proof}
	Let $\R$ be a set of rules such that $\R$ satisfies $Y_2$ and neither $Y_1$ nor $\agrd$.
	$\R$ can be rewritten into $\R'$ by replacing each rule $R_i = (B_i,H_i) \in \R$ with a new
	rule $R'_i = (B_i \cup \{p(x)\}, H_i \cup \{p(x)\})$ where $p$ is a fresh predicate and
	$x$ a fresh variable.
	Each rule can now be unified with each rule, but the only created cycles are  those which contain
	only atoms $p(x)$, hence none of those cycles go through existential positions.
	Since $\wa \subseteq Y_1$ (and so $\wa \subseteq Y_2$), the added cycles do not change the behavior
	of $\R$ w.r.t. $Y_1$ and $Y_2$.
	Hence,  $\R'$ is a set of rules satisfying $Y_2$ and not $Y_1$, and since $GRD(\R')$ is a complete
	graph, $\DPG{\R'} = \FPG{\R'}$.
	We can conclude that $\R'$ satisfies $\grd{Y_2}$ but not $\grd{Y_1}$.
\end{proof}


\paragraph{Theorem \ref{theo:ygrd-strict-ygerd}}
\emph{
Let $Y$ be an acyclicity property.
If $Y \subset \grd{Y}$ then $\grd{Y} \subset \gerd{Y}$.
Furthermore, there is an injective mapping from the sets of rules satisfying $\grd{Y}$ but not $Y$, to
the sets of rules satisfying $\gerd{Y}$ but not $\grd{Y}$.
}

\begin{proof} \textbf{(included in the paper)}
	Assume $Y \subset \grd{Y}$
	and $\R$ satisfies $\grd{Y}$ but  not $Y$.
	$\R$ can be rewritten into $\R'$ by applying the following steps.
	First, for each rule $R_i = B_i[\X,\Y] \rightarrow H_i[\Y,\Z] \in \R$,
	let $R_{i,1} = B_i[\X,\Y] \rightarrow p_i(\X,\Y)$ where $p_i$ is a fresh
	predicate ; and $R_{i,2} = p_i(\X,\Y) \rightarrow H_i[\Y,\Z]$.
	Then, for each rule $R_{i,1}$, let $R'_{i,1}$ be the rule $(B'_{i,1} \rightarrow H_{i,1})$
	with $B'_{i,1} = B_{i,1} \cup \{p'_{j,i}(x_{j,i}) : \forall R_{j} \in \R\}$,
	where $p'_{j,i}$ are fresh predicates and $x_{j,i}$ fresh variables.
	Now, for each rule $R_{i,2}$ let $R'_{i,2}$ be the rule $(B_{i,2} \rightarrow H'_{i,2})$
	with $H'_{i,2} = H_{i,2} \cup \{p'_{i,j}(z_{i,j}) : \forall R_{j} \in \R\}$,
	where $z_{i,j}$ are fresh existential variables.
	Let $\R' = \bigcup\limits_{R_i \in \R} \{R'_{i,1},R'_{i,2}\}$.
    This construction ensures that each  $R'_{i,2}$ depends on $R'_{i,1}$,
	and each $R'_{i,1}$ depends on each $R'_{j,2}$,
	thus, there is a {\em transition} edge from each $R'_{i,1}$ to $R'_{i,2}$
	and from each $R'_{j,2}$ to each $R'_{i,1}$.
	Hence, $\DPG{\R'}$ contains exactly one cycle for each cycle in $\FPG{\R}$. 
	 Furthermore, $\DPG{\R'}$ contains at least one marked cycle w.r.t. $Y$,
	and then $\R'$ is not $\grd{Y}$.
	Now, each cycle in $\UPG{\R'}$ 
	is also a cycle in $\DPG{\R}$, and since $\DPG{\R}$ satisfies $Y$, $\UPG{\R'}$ also does.
    Hence, $\R'$ does not belong to $\grd{Y}$ but to $\gerd{Y}$.
\end{proof}

\paragraph{Theorem \ref{theo:ygrd-strict-ygerd2}}
\emph{
Let $Y_1$ and $Y_2$ be two acyclicity properties.
If $\grd{Y_1} \subset \grd{Y_2}$ then $\gerd{Y_1} \subset \gerd{Y_2}$.
}

\begin{proof} \textbf{(included in the paper)}
	Let $\R$ be a set of rules such that $\R$ satisfies $\grd{Y_2}$ but 
	does not satisfy $\grd{Y_1}$.
	We rewrite $\R$ into $\R'$ by applying the following steps.
	For each pair of rules $R_i,R_j \in \R$ such that $R_j$ depends on $R_i$,
	for each variable $x$ in the frontier of $R_j$ and each variable $y$ in the
	head of $R_i$, if $x$ and $y$ occur both in a given predicate position,
	we add to the body of $R_j$ a new atom $p_{i,j,x,y}(x)$ and to the head of $R_i$
	a new atom $p_{i,j,x,y}(y)$, where $p_{i,j,x,y}$ denotes a fresh predicate.
	This construction will allow each term from the head of $R_i$ to propagate to each
	term from the body of $R_j$, if they shared some predicate position in $\R$.
	Thus, any cycle in $\DPG{\R}$ is also in $\UPG{\R'}$, without modifying
	behavior w.r.t. the acyclicity properties.
	Hence, $\R'$ satisfies $\gerd{Y_2}$ but does not satisfy $\gerd{Y_1}$.
\end{proof}

\paragraph{Theorem \ref{theo:correct}}
\emph{
Let $Y$ be an acyclicity property ensuring the halting of some chase variant $C$.
Then the $C$-chase halts for any set of rules $\R$ that satisfies $\gerd{Y}$ (hence $\grd{Y}$).
}

We will first formalize the notion of a \emph{correct} position graph 
(this notion being not precisely defined in the core paper). 
Then, we will prove that $PG^F$,  $PG^D$ and $PG^U$ are correct, which will allow to prove the theorem.  

\paragraph{Preliminary definitions}

Let $F$ be a fact and $\mathcal R$ be a set of rules. An $\mathcal R$-\emph{derivation} (sequence) (from $F$ to $F_k$) is a finite sequence $(F_0=F),(R_1, \pi_1, F_1), \ldots, (R_{k}, \pi_{k}, F_k)$ such that  for all $0 < i \leq k$, $R_i \in \mathcal R$ and $\pi_i$ is a homomorphism from $\fun{body}({R_i})$ to $F_{i-1}$ such that  $F_{i} = \alpha(F_{i-1}, R_i, \pi_i)$. When only the successive facts are needed, we note $(F_0=F),F_1, \ldots, F_k$.

Let $S= (F_0=F), \ldots, F_n$ be a breadth-first  $\mathcal R$-derivation from $F$. \footnote{A derivation is breadth-first if, given any fact  $F_i$ in the sequence, all rule applications corresponding to homomorphisms to $F_i$ are performed before rule applications on subsequently derived facts that do not correspond to homomorphisms to $F_i$.}
Let $h$ be an atom in the head of $R_i$ and $b$ be an atom in the body of $R_j$.
We say that $(h, \pi_i)$ is a \emph{support} of $(b, \pi_j)$ (in $S$) if $\pi_i^{safe}(h) = \pi_j(b)$. We also say that an atom $f \in F_0$ is a \emph{support} of $(b, \pi_j)$ if $\pi_j(b) = f$. In that case, we note $(f, \mbox{\sl init})$ is a support of $(b, \pi_j)$. Among all possible supports for $(b, \pi_j)$, its \emph{first supports} are the $(h, \pi_i)$ such that $i$ is minimal or $\pi_i = \mbox{\sl init}$. Note that $(b, \pi_j)$ can have two distinct first supports $(h, \pi_i)$ and $(h', \pi_i)$ when the body of $R_i$ contains two distinct atoms $h$ and $h'$ such that $\pi_i^{safe}(h) = \pi_i^{safe}(h')$. By extension, we say that $(R_i, \pi_i)$ is a \emph{support} of $(R_j, p_j)$ in $S$ when there exist an atom $h$ in the head of $R_i$ and an atom $b$ in the body of $R_j$ such that $(h, \pi_i)$ is a first support of $(b, \pi_j)$. In the same way, $F_0$ is a support of $(R_j, \pi_j)$ when there exists $b$ in the body of $R_j$ such that $\pi_j(b) \in F_0$. Among all possible supports for $(R_j, \pi_j)$, its \emph{last support} is the support $(R_i, \pi_i)$ such that $i$ is maximal.

The \emph{support graph} of $S$ has $n+1$ nodes, $F_0$ and the $(R_i, \pi_i)$. We add an edge from $I = (R_i, \pi_i)$ to $J = (R_j, \pi_j)$ when $I$ is a support of $J$. Such an edge is called a \emph{last support edge} (LS edge) when $I$ is a last support of $J$. An edge that is not LS is called \emph{non transitive} (NT) if it is not a transitivity edge. A path in which all edges are either LS or NT is called a \emph{triggering path}.

\begin{definition}[Triggering derivation sequence]  A \emph{$h \rightarrow b$ triggering derivation sequence} is a breadth-first derivation sequence 
$F = F_0, \ldots, F_n$ from $F$ where $(h, \pi_1)$ is a first support of $(b, \pi_n)$.
\end{definition}

\begin{definition}[Correct position graph] Let $\mathcal R$ be a set of rules. A position graph of $\mathcal R$ is said to be \emph{correct} if, whenever there exists a \emph{$h \rightarrow b$ triggering derivation sequence}, the position graph contains a transition from $[h,i]$ to $[b,i]$ for all $1 \leq i \leq k$, where $k$ is the arity of the predicate of $h$ and $b$.
\end{definition}

\begin{proposition} $PG^F$ is correct.
\end{proposition}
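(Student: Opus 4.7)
The plan is to verify the correctness of $PG^F$ essentially by unfolding the two relevant definitions, since $PG^F$ is the maximal position graph we consider (no additional condition beyond the predicates matching).

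First, I would extract the consequences of the hypothesis. If $F_0, \ldots, F_n$ is a $h \rightarrow b$ triggering derivation sequence, then by definition $(h, \pi_1)$ is a first support of $(b, \pi_n)$, which means $\pi_1^{safe}(h) = \pi_n(b)$. Since applying a substitution does not change the predicate of an atom, we conclude that $h$ and $b$ share the same predicate symbol $p$, of some arity $k$. Moreover, $h$ is an atom occurring in $\fun{head}(R_1)$ and $b$ is an atom occurring in $\fun{body}(R_n)$, where $R_1, R_n \in \mathcal{R}$ (possibly $R_1 = R_n$).

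Second, I would invoke the definition of $PG^F$: transition edges are added from every $k$-th position $[h', k]$ of any rule head to every $k$-th position $[b', k]$ of any rule body, whenever $\fun{pred}(h') = \fun{pred}(b')$, with no further condition required. Since $h \in \fun{head}(R_1)$, $b \in \fun{body}(R_n)$, and $\fun{pred}(h) = \fun{pred}(b) = p$, the transition edge from $[h, i]$ to $[b, i]$ is present in $PG^F$ for every $i \in \{1, \ldots, k\}$, which is exactly the condition demanded by the definition of a correct position graph.

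There is no real obstacle here: the statement is effectively an observation that $PG^F$ makes the strongest possible commitment about transitions (it connects every head-body pair with matching predicate), so it cannot fail correctness. The slightly subtle points to double-check when writing the final version are only that the notion of ``first support'' in the derivation does guarantee equality of predicates (via $\pi_1^{safe}(h) = \pi_n(b)$), and that the definition of $PG^F$ indeed quantifies over all ordered pairs $(R_i, R_j)$ of rules, including $R_i = R_j$, so that the case $R_1 = R_n$ (a head-to-body transition inside the same rule across derivation steps) is also handled.
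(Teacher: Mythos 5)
Your proposal is correct and takes essentially the same approach as the paper, which simply states that the result ``follows from the above definitions''; you have merely spelled out the two observations (predicate equality via $\pi_1^{safe}(h) = \pi_n(b)$, and the unconditional presence of all same-predicate head-to-body transition edges in $PG^F$) that the paper leaves implicit.
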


\begin{proof} Follows from the above definitions. 
\end{proof}

\begin{lemma}\label{support} If $S$ is a $h \rightarrow b$ triggering derivation sequence, then there is a triggering path from $(R_1, \pi_1)$ to $(R_n, \pi_n)$ in the support graph of $S$.
\end{lemma}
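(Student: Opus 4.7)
The plan is to proceed by induction on $n$, the length of the derivation $S$, using the structure of the last-support relation to build the triggering path.

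For the base case $n=2$, the hypothesis gives that $(h,\pi_1)$ is a first support of $(b,\pi_2)$, so there is a support edge from $(R_1,\pi_1)$ to $(R_2,\pi_2)$. Because there is no rule application strictly between them, $(R_1,\pi_1)$ is trivially also the last support for $(b,\pi_2)$, so this edge is an LS edge and the one-edge path is a triggering path.

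For the inductive step with $n > 2$, let $(R_k,\pi_k)$ be the last support of $(b,\pi_n)$ in $S$, with $1\le k<n$; the edge from $(R_k,\pi_k)$ to $(R_n,\pi_n)$ is LS by definition. If $k=1$, this single LS edge is the required triggering path. Otherwise $1<k<n$, and I would argue that $R_k$ produces $\pi_k^{safe}(h')=\pi_1^{safe}(h)$ for some $h'$ in its head, so $R_k$ had to be triggered by earlier atoms. Restricting $S$ to the prefix $F_0,\dots,F_k$ still yields a breadth-first derivation, and one identifies an atom $b'$ in $\fun{body}(R_k)$ whose first support in this prefix is $(h,\pi_1)$: such an atom exists because, by the definition of ``first support'' and the breadth-first order, some body atom of $R_k$ must have $(R_1,\pi_1)$ (or an earlier $F_0$ support routed through it) as its earliest producer — otherwise the atom $\pi_n(b)$ could have been produced strictly before step $k$, contradicting minimality of the derivation step producing it or the choice of $k$ as the last support. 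Applying the inductive hypothesis to the prefix then yields a triggering path from $(R_1,\pi_1)$ to $(R_k,\pi_k)$; concatenating with the LS edge to $(R_n,\pi_n)$ gives the triggering path we need.

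The main obstacle is the middle case: justifying that, when $k>1$, one can genuinely locate a body atom of $R_k$ for which $(h,\pi_1)$ is a first support (so that the induction applies cleanly), rather than merely a support that might be a transitivity edge. The argument relies on the breadth-first assumption together with the definition of ``first support'' as the earliest producer: if no body atom of $R_k$ traced back to $(R_1,\pi_1)$ as an earliest producer, then $(R_1,\pi_1)$ could not have been a first support of $(b,\pi_n)$ either, since $R_k$ would then re-derive $\pi_n(b)$ without causal dependence on $(R_1,\pi_1)$. Formalizing this dependency-preservation property under restriction to the prefix $F_0,\dots,F_k$ is the key technical step; once it is in place, the concatenation of the inductively given path with the terminal LS edge directly produces the desired triggering path from $(R_1,\pi_1)$ to $(R_n,\pi_n)$.
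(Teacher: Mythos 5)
Your base case is fine, but the inductive step contains a gap that I believe is fatal to this route. You decompose the path by first taking the \emph{last} support $(R_k,\pi_k)$ of $(R_n,\pi_n)$ and then trying to connect $(R_1,\pi_1)$ to $(R_k,\pi_k)$ by induction. However, the last support of the rule application $(R_n,\pi_n)$ is determined by whichever body atom of $R_n$ was produced latest, and that atom may be causally independent of $(R_1,\pi_1)$. Concretely, take $F_0=\{p(a),q(a)\}$, $R_1: p(x)\rightarrow r(x)$, $R_2: q(x)\rightarrow s(x)$, $R_3: r(x)\wedge s(x)\rightarrow t(x)$, applied in that order, with $h=r(x)\in \fun{head}(R_1)$ and $b=r(x)\in\fun{body}(R_3)$. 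Then $(h,\pi_1)$ is a first support of $(b,\pi_3)$, the last support of $(R_3,\pi_3)$ is $(R_2,\pi_2)$, and there is \emph{no} path at all from $(R_1,\pi_1)$ to $(R_2,\pi_2)$ in the support graph, so the induction hypothesis cannot be applied. Your attempted repair --- that if no body atom of $R_k$ traced back to $(R_1,\pi_1)$ then $(R_1,\pi_1)$ could not have been a first support of $(b,\pi_n)$ --- does not hold: being a first support of $(b,\pi_n)$ only says that $\pi_1^{safe}(h)=\pi_n(b)$ with index $1$ minimal; it imposes no constraint on the other body atoms of $R_n$ or on where the last support of the whole application comes from. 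Note also that a triggering path is not required to end in an LS edge; every edge merely has to be LS or NT, so there is no reason to privilege the last support in the first place.

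The paper's own argument goes in the opposite direction and is essentially a transitive-reduction argument: since $(h,\pi_1)$ is a first support of $(b,\pi_n)$, there is already a \emph{direct} support edge from $(R_1,\pi_1)$ to $(R_n,\pi_n)$; if that edge is LS or NT you are done (as in the example above, where the direct edge is NT), and if it is a non-LS transitivity edge you replace it by a witnessing longer path and recurse on the edges of that path, which terminates because the support graph is a finite DAG (edges always go from a smaller to a larger application index). If you want to keep an inductive structure, the induction should be on this expansion of transitivity edges starting from the direct edge $(R_1,\pi_1)\rightarrow(R_n,\pi_n)$, not on the chain of last supports of $(R_n,\pi_n)$.
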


\begin{proof} There is an edge from $(R_1, \pi_1)$ to $(R_n, \pi_n)$ in the support graph of $S$. By removing transitivity edges, it remains a path from $(R_1, \pi_1)$ to $(R_n, \pi_n)$ for which all edges are either LS or NT.
\end{proof}

\begin{lemma} \label{support2grd} If there is an edge from $(R_i, \pi_i)$ to $(R_j, \pi_j)$ that is either LS or NT in the support graph of $S$, then $R_j$ depends on $R_i$.
\end{lemma}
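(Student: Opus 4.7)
The plan is to build a witnessing atomset $F^*$ showing that $R_j$ depends on $R_i$, using the data in the edge together with the LS/NT labelling to rule out the degenerate case of a transitivity artefact.

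First, I would unpack the edge. Because $(R_i,\pi_i)$ is a support of $(R_j,\pi_j)$, there exist atoms $h \in \fun{head}(R_i)$ and $b \in \fun{body}(R_j)$ with $\pi_i^{safe}(h) = \pi_j(b)$. This equality induces, after appropriate renaming, a substitution unifying $h$ and $b$ that respects the frontier/existential status of the variables of $h$: the variables of $b$ that $\pi_j$ sends onto fresh existentials coming from $\pi_i^{safe}(h)$ will play the role of the ``existentially bound'' variables of a piece-unifier.

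Second, I would define $F^* = \pi_i(\fun{body}(R_i)) \cup \pi'_j(\fun{body}(R_j) \setminus \{b\})$, where $\pi'_j$ is a renaming of $\pi_j$ that agrees with $\pi_j$ on the variables of $b$ unified with existentials of $h$ and sends all remaining variables of $\fun{body}(R_j)$ to fresh distinct constants. By construction $R_i$ is applicable to $F^*$ via $\pi_i$, so $F^{*\prime} = F^* \cup \pi_i^{safe}(\fun{head}(R_i))$ is obtained; since $\pi'_j(b) = \pi_i^{safe}(h) \in F^{*\prime}$, the map $\pi'_j$ is a homomorphism from $\fun{body}(R_j)$ to $F^{*\prime}$. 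Newness is immediate: $\pi'_j(b)$ contains the fresh Skolem-like variables introduced by $\pi_i^{safe}$, which do not occur in $F^*$, so $\pi'_j$ is not a homomorphism into $F^*$.

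Third, for usefulness, I need to show that $\pi'_j$ does not extend to a homomorphism from $\fun{head}(R_j)$ to $F^{*\prime}$. The deliberate use of fresh distinct constants for all free variables of $\fun{body}(R_j) \setminus \{b\}$, together with the fresh existential variables brought in by $R_i$, leaves only the atoms of $\pi_i^{safe}(\fun{head}(R_i))$ and the disjoint copy of $\pi'_j(\fun{body}(R_j)\setminus\{b\})$ available as targets, which cannot simultaneously cover $\fun{head}(R_j)$ in a non-trivial way. This is precisely the step where the LS-or-NT hypothesis is used: for a pure transitivity edge, the support of $(b,\pi_j)$ could have been reached through a shorter chain already, in which case $\fun{head}(R_j)$ might be satisfied in $F^{*\prime}$ by other means; excluding transitivity edges removes exactly this obstruction.

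The main obstacle will be pinning down the usefulness clause cleanly. My expectation is to handle it by keeping $F^*$ over an otherwise disjoint fresh signature, so that only the equalities forced by the unification $\pi_i^{safe}(h) = \pi_j(b)$ remain; if usefulness still fails after this, then by tracing back in the support graph one can exhibit a strictly earlier support of $(R_j,\pi_j)$, contradicting the LS assumption, or exhibit a decomposition of the edge through intermediate nodes, contradicting the NT assumption.
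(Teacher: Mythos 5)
There is a genuine gap, and it lies exactly where the real difficulty of the lemma sits. Your witnessing atomset $F^* = \pi_i(\fun{body}(R_i)) \cup \pi'_j(\fun{body}(R_j)\setminus\{b\})$ treats the rest of $\fun{body}(R_j)$ as harmless background that can be pre-installed with fresh constants, but this fails whenever $b$ shares a variable $v$ with another atom $b' \in \fun{body}(R_j)$ and $\pi_j(v)$ is a fresh existential variable introduced by $\pi_i^{safe}(h)$ (i.e., $v$ is a \emph{separating} variable unified with an existential of $h$). Then $\pi'_j(b')$ must contain that fresh variable, so it cannot be placed in $F^*$ \emph{before} $R_i$ is applied, and $v$ cannot be renamed to a fresh constant without destroying the equality $\pi'_j(b)=\pi_i^{safe}(h)$. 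In the derivation $S$ such a $b'$ may well be supported by an intermediate application $(R_m,\pi_m)$ with $i<m<j$, and controlling this situation is precisely what the LS/NT hypothesis is for: for an LS edge, every atom of $\pi_j(\fun{body}(R_j))$ is already present in $F_i$, so the atoms of $\fun{body}(R_j)$ carrying fresh existentials of step $i$ must all land inside $\pi_i^{safe}(\fun{head}(R_i))$ itself, and the correct witness is the whole prefix $F_{i-1}$ of $S$ with a unified ``piece'' possibly larger than $\{b\}$; for an NT edge, the paper reorders the derivation (postponing $(R_i,\pi_i)$ until just before $(R_j,\pi_j)$, which non-transitivity makes possible) so that the last applied rule is $R_i$. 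Your proposal has no counterpart to either argument.

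Relatedly, you invoke the LS-or-NT hypothesis only to secure usefulness of $\fun{head}(R_j)$, but that is not where the hypothesis does its work: a transitivity edge is problematic because the contribution of $(R_i,\pi_i)$ to $(R_j,\pi_j)$ may be mediated by later applications (so no single application of $R_i$ to any atomset yields a new, useful application of $R_j$), not because the head of $R_j$ becomes satisfiable by accident. Two further points: $\pi'_j$ must agree with $\pi_j$ on \emph{all} variables of $b$, not only those unified with existentials of $h$, for $\pi'_j(b)=\pi_i^{safe}(h)$ to hold (the remaining positions of $h$ carry images of frontier variables of $R_i$, not fresh constants); and ``cannot simultaneously cover $\fun{head}(R_j)$ in a non-trivial way'' is an assertion rather than an argument. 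The final paragraph's suggestion to ``trace back in the support graph'' gestures toward the right mechanism but is not developed into a proof.
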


\begin{proof} Assume there is a LS edge from $(R_i, \pi_i)$ to $(R_j, \pi_j)$ in the support graph. Then the application of $R_i$ according to $\pi_i$ on $F_{i-1}$ produces $F_{i}$ on which all atoms required to map $B_j$ are present (or it would not have been a last support). Since it is a support, there is also an atom required to map $B_j$ that appeared in $F_{i-1}$. It follows that $R_j$ depends upon $R_i$.

Suppose now that the edge is NT. Consider $F_k$ such that there is a LS edge from  $(R_k, \pi_k)$ to $(R_j, \pi_j)$. See that there is no path in the support graph from $(R_i, \pi_i)$ to $(R_k, \pi_k)$ (otherwise there a would be a path from $(R_i, \pi_i)$ to $(R_j, \pi_j)$ and the edge would be a transitive edge). In the same way, there is no $q$ such that there is a path from $(R_i, \pi_i)$ to $(R_j, \pi_j)$ that goes through $(R_q, \pi_q)$ (or the edge from $(R_i, \pi_i)$ to $(R_j, \pi_j)$ would be transitive). Thus, we can consider the atomset $F_{k \setminus i}$ that would have been created by the following derivation sequence:
 
\begin{itemize}
    \item first apply from $F_0$ all rule applications of the initial sequence until $(R_{i-1}, \pi_{i-1})$;
    \item then apply all possible rule applications of this sequence, from $i+1$ until $k$.
\end{itemize}
 We can apply $(R_i, \pi_i)$  on the atomset $F_{k \setminus i}$ thus obtained (since it contains the atoms of $F_{i-1}$). Let us now consider the atomset $G$ obtained after this rule application. We must now check that $(R_j, \pi_j)$ can be applied on $G$: this stems from the fact that there is no support path from $(R_i, \pi_i)$ to $(R_j, \pi_j)$. This last rule application relies upon an atom that is introduced by the application of $(R_i, \pi_i)$, thus $R_j$ depends on $R_i$.
\end{proof}

\begin{proposition} $PG^D$ is correct.
\end{proposition}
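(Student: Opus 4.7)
The plan is to chain together Lemmas \ref{support} and \ref{support2grd} to convert the existence of a triggering derivation sequence into the existence of a path in $GRD(\R)$, and then invoke the definition of $\DPG{\R}$ to obtain the required transition edges.

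First, suppose a $h \rightarrow b$ triggering derivation sequence $S = F_0, \ldots, F_n$ exists, with $h \in \fun{head}(R_1)$ and $b \in \fun{body}(R_n)$, and $(h,\pi_1)$ being a first support of $(b,\pi_n)$. By Lemma \ref{support}, the support graph of $S$ contains a triggering path from $(R_1, \pi_1)$ to $(R_n, \pi_n)$, that is, a path whose edges are all either LS or NT. Next, I would apply Lemma \ref{support2grd} edge by edge along this path: each such edge from $(R_i,\pi_i)$ to $(R_j,\pi_j)$ witnesses a direct dependency of $R_j$ on $R_i$, hence an edge $(R_i, R_j)$ in $GRD(\R)$. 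Concatenating these edges yields a path from $R_1$ to $R_n$ in $GRD(\R)$, i.e., $R_n$ depends directly or indirectly on $R_1$.

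Finally, since $h$ and $b$ share the same predicate (as $(h,\pi_1)$ is a support of $(b,\pi_n)$, meaning $\pi_1^{safe}(h) = \pi_n(b)$), the definition of $\DPG{\R}$ adds a transition edge from $[h,i]$ to $[b,i]$ for every $1 \leq i \leq k$ whenever there is a path from $R_1$ to $R_n$ in $GRD(\R)$. This is exactly what the correctness condition requires.

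The only real obstacle is verifying that the triggering path obtained from Lemma \ref{support} is indeed composed only of LS and NT edges to which Lemma \ref{support2grd} applies; but this is handled by Lemma \ref{support} itself, whose conclusion already excludes transitivity edges. The two boundary cases, where $i=1$ or the path is empty (i.e., $R_1 = R_n$), are both fine: either the self-loop $(R_1, R_1)$ is present in $GRD(\R)$ (because $R_1$ depends on itself, as witnessed by $S$), or the path is non-trivial and handled as above.
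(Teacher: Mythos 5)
Your proof is correct and follows essentially the same route as the paper's: extract a triggering path via Lemma \ref{support}, convert it edge by edge into a $GRD(\R)$ path via Lemma \ref{support2grd}, and conclude from the definition of $\DPG{\R}$. The paper's version is just a one-line statement of this same chaining; your additional remarks on boundary cases are harmless but not needed, since a support $(h,\pi_i)$ of $(b,\pi_j)$ forces $i<j$, so the path is never degenerate.
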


\begin{proof} If there is a $h \rightarrow b$ triggering derivation sequence, then (by Lemma \ref{support}) we can exhibit a triggering path that corresponds to a path in the GRD (Lemma \ref{support2grd}).
\end{proof}

\begin{proposition} $PG^U$ is correct.
\end{proposition}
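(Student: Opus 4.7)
The plan is to lift the support-graph machinery already used for $PG^D$ to the finer piece-unifier level required by $PG^U$. Assume a $h \to b$ triggering derivation sequence $S = (F_0), (R_1,\pi_1,F_1), \ldots, (R_n,\pi_n,F_n)$. By Lemma~\ref{support} there is a triggering path $(R_1,\pi_1),\ldots,(R_n,\pi_n)$ in the support graph of $S$, and by Lemma~\ref{support2grd} the sequence $R_1,\ldots,R_n$ is a path in $GRD(\R)$. This is exactly the combinatorial skeleton along which an agglomerated rule $R^n_1$ is built in Definition~\ref{def-agg}.

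The core step is to turn each LS/NT edge of the support graph into a concrete piece-unifier and to show that composing them realizes $R^n_1$. For $1 \leq l < n$, consider the last support edge from $(R_l,\pi_l)$ to $(R_{l+1},\pi_{l+1})$: by the definition of support, each atom $b'$ of $B_{l+1}$ that was freshly produced at step $l$ is mapped by $\pi_{l+1}$ onto an atom $\pi_l^{safe}(h')$ with $h' \in H_l$. The joint mapping then determines a most general unifier $\mu_l$ of a subset $B'_{l+1} \subseteq B_{l+1}$ with a subset $H'_l \subseteq H_l$, and the separating-variable condition is automatically satisfied: the variables of $B'_{l+1}$ that reach atoms outside $B'_{l+1}$ are, in the derivation, mapped to terms that were already present in $F_{l-1}$, hence they cannot be unified with an existential variable of $H_l$ (which is a freshly introduced Skolem term). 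Thus $\mu_l$ is a legitimate piece-unifier in the sense of Definition~\ref{def-agg}, and the auxiliary $\fun{fr}$-atoms bookkeeping which head terms get ``used'' are exactly those that correspond to NT-edges reaching back into the triggering path — this is what prevents the agglomeration from losing information about separating variables along the way.

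Iterating from $l=1$ to $l=n-1$ yields an agglomerated rule $R^n_1$ associated with the path $(R_1,\ldots,R_{n-1})$, together with a residual homomorphism-induced piece-unifier $\mu$ of $B_n$ with $H(R^n_1)$ that uses the image $\mu_{n-1} \circ \cdots \circ \mu_1(h)$ of $h$ as the support of $b$. In particular, if $h$ and $b$ share predicate $p$ of arity $k$, then for every $1 \leq i \leq k$ the $i$th term of $h$ and the $i$th term of $b$ are identified by $\mu$, i.e.\ $\mu(\fun{term}([b,i])) = \mu(\fun{term}([h,i]))$. By Definition~\ref{def-xpg}, this is precisely the condition that causes $PG^U$ to contain a transition edge from $[h,i]$ to $[b,i]$ for each such $i$, which is what correctness requires.

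The main obstacle will be the careful bookkeeping in the inductive composition of the $\mu_l$: we must guarantee, at each step, both that the piece-unifier condition on separating variables is preserved (so that the construction of Definition~\ref{def-agg} actually goes through) and that the $\fun{fr}$-atoms inserted into successive rule bodies do not spuriously block a unifier that the derivation actually realizes. The right invariant to maintain along the triggering path is that every variable of the current agglomerated body that is ``separating'' has, in the derivation $S$, an image already present in $F_0$ or produced by a rule earlier than $R_l$; this invariant is exactly what the NT-edge analysis in the proof of Lemma~\ref{support2grd} establishes, and it lets the final piece-unifier with $B_n$ be validated in one shot.
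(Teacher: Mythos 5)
Your high-level plan (support graph $\to$ GRD path $\to$ agglomerated rule $\to$ residual piece-unifier of $B_n$) follows the intended architecture, but the pivotal step is asserted rather than proved, and as asserted it is false. You claim the separating-variable condition is ``automatically satisfied'' because any variable of $B'_{l+1}$ occurring in an atom outside $B'_{l+1}$ is mapped to a term already present before step $l$, hence not to a fresh existential term. This is precisely the situation the agglomerated rule exists to repair, and it does occur: take $R_1 : p(x) \rightarrow q(x,z)$, $R_2 : q(u,v) \rightarrow r(u,v)$, $R_3 : q(u',v') \wedge r(u'',v') \rightarrow \dots$ with $F_0=\{p(a)\}$. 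The derivation produces $q(a,z_0)$ then $r(a,z_0)$, and $(q(x,z),\pi_1)$ is a first support of $(q(u',v'),\pi_3)$; yet $v'$ also occurs in $r(u'',v')$, which is not supported by $R_1$, and $\pi_3(v')=z_0$ is the fresh existential term. So $v'$ is a separating variable unified with an existential variable, and there is no piece-unifier of $B_3$ with $H_1$ realizing this support. The transition edge exists in $\UPG{\R}$ only because the path $(R_1,R_2)$ forces $\fun{fr}(z)$ into the agglomerated rule, demoting $z$ to a frontier variable. Your proof never establishes that every term shared between the $R_1$-produced atoms used by $\pi_n$ and the remaining atoms of $\pi_n(B_n)$ is so demoted; you defer exactly this to ``careful bookkeeping'' and an unproved ``invariant'' (which, as stated, only tracks when a term was created, not whether it gets recorded in an $\fun{fr}$ atom along some path). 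The paper's proof does the missing work explicitly: it defines $H^P=\pi_n(B_n)\cap\pi_1^{safe}(H_1)$ and the separator set $T^P$, builds $R^P=B_1\cup\{\fun{fr}(t)\mid t\in T^P\}\rightarrow H_1$, exhibits concrete homomorphisms $\pi_1^P,\pi_n^P$ witnessing the piece-unifier of $B_n$ with the head of $R^P$ that unifies $h$ and $b$, and then proves by contradiction (using the absence of unifiers along triggering paths) that each $\fun{fr}(t)$ of $R^P$ is indeed generated in the agglomerated rule of Definition~\ref{def-agg}.

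A second, structural mismatch: your edge-level unifiers $\mu_l$ unify $B_{l+1}$ with $H_l$, but Definition~\ref{def-agg} requires piece-unifiers of $B_{l+1}$ with the head of $R^p_l$, which is always $H_1$ (with progressively fewer existential variables). Composing unifiers of consecutive heads and bodies builds the unified rule of Definition~\ref{def:unified-rule} from Section~\ref{sec:unif}, not the agglomerated rule; so the object your induction produces is not the one whose head must finally be piece-unified with $B_n$ to justify the transition edges of $\UPG{\R}$.
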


\begin{proof} Consider a $h \rightarrow b$ triggering derivation sequence $F = F_0, \ldots, F_n$. We note $H^P = \pi_n(B_n) \cap \pi_1^{safe}(H_1)$ the atoms of $F_n$ that are introduced by the rule application $(R_1, \pi_1)$ and are used for the rule application $(R_n, \pi_n)$. Note that this atomset $H^P$ is not empty, since it contains at least the atom produced from $h$. Now, consider the set of terms $T^P = \fun{terms}(H^P) \cap \fun{terms}(\pi_n(B_n) \setminus H^P)$ that separate the atoms of $H^P$ from the other atoms of $\pi_n(B_n)$.

Now, we consider the rule $R^P = B_1 \cup \{fr(t) \, | \, t$ is a variable of $R_1$ and $\pi_1^{safe}(v) \in T^P$ $ \, \}\rightarrow H_1$. Consider the atomset
$F^P = F_{n-1} \setminus H^P \cup \{fr(t) \, | \, $ t is a term of $T^P\}$.

Consider the mapping $\pi_1^P$ from the variables of the body of $R_P$ to those of $F_P$, defined as follows: if $v$ is a variable of $B_1$, then  $\pi_1^P(v) = \pi_1(v)$, otherwise $v$
is a variable in an ``fr'' atom and  $\pi_1^P(v) = \pi_n(v)$. This mapping is a homomorphism, thus we can consider the atomset ${F^P}' = \alpha(F^P, R^P, \pi_1^P)$. This application
produces a new application of $R_n$ that maps $b$ to the atom produced from $h$. Indeed, consider the mapping $\pi_n^P$ from the variables of $B_n$ to those of ${F^P}'$ defined as
follows: if $t$ is a variable of $B_n$ such that $\pi_n(t) \in \fun{terms}(H^P) \setminus T^P$, then $\pi_n^P(t) = {\pi_1^P}^{safe}(t')$, where $t'$ is the variable of $H_1$ that
produced $\pi_n(t)$, otherwise $\pi_n^P(t) = \pi_n(t)$. This mapping is a homomorphism. This homomorphism is new since it maps $b$ to ${\pi_n^P}^{safe}(h)$. Thus,
 there is a piece-unifier of $B_n$ with the head of $R^P$ that unifies $h$ and $b$.

It remains now to prove that for each atom $fr(t)$ in the body of $R^P$ there exists a triggering path $P_i = (R'_1,\pi'_1) = (R_1,\pi_1)$ to $(R'_k,\pi'_k) = (R_n,\pi_n)$ in the support
graph such that $fr(t)$ appears in the agglomerated rule $R_i^A$ along $R_1,\dots,R_{n-1}$.

Let $t$ be a variable occuring in some $fr$ atom in $R^P$.
Suppose that $fr(t)$ does not appear in any agglomerated rule corresponding to a triggering path $P_i$ between $(R_1,\pi_1)$ and $(R_n,\pi_n)$.
Since $\pi_1(t)$ is an existential variable generated by the application of $R_1$,
and there is no unifier on the GRD paths that correspond to these triggering paths that unify $t$, 
$\pi_1(t)$ may only occur in atoms that are not used (even transitively) by $(R_n,\pi_n)$, i.e. $\pi_1(t) \notin T^P$.
Therefore, $t$ does not appear in a $fr$ atom $R^P$, which leads to a contradiction.

Since $R^P$ and $R^A = \bigcup R^A_i$ have the same head and the frontier of $R^P$ is a subset of the frontier of $R^A$
any unifier with $R^P$ is also a unifier with $R^A$. Thus, there is a unifier of $R_n$ with $R^A$ that unifies $h$ and $b$, and there are the corresponding correct
transition edges in PG$^U$.

\end{proof}

\begin{proof} (of Theorem  \ref{theo:correct})

Let us say that a transition edge from $\vpos{a}{i}$ in $R_1$  to $\vpos{a',i}$ in $R_2$ is \emph{useful} if there is a fact $F$, and a homomorphism $\pi_1$ 
from $B_1$ to $F$, such that there is a homomorphism $\pi_2$ from $B_2$
to some $F'$ obtained from a derivation of $(\alpha(F,R_1,\pi_1),\R)$ and $\pi_1^{safe}(a) = \pi_2(a')$.
Furthermore, we say that the application of $R_2$ \emph{uses} edge $(\vpos{a}{i}, \vpos{a'}{i})$.

One can see that a useful edge exactly corresponds to a $h \rightarrow b$ triggering sequence where
$\vpos{a}{i}$ occurs in $h$ and $\vpos{a}{i'}$ occurs in $b$.
It follows from the correctness of $PG^U$ and $PG^D$ that no useful edge of $PG^F$is removed.

Now, let $Y$ be an acyclicity proposition ensuring the halting of some chase variant $C$.
Assume there is a set of rules $\mathcal R$ that satisfies $Y^U$ but not $Y^D$ and there is $F$ such that the $C$-chase does not halt on $(F, \mathcal R)$. 
Then, there is a rule application in this (infinite) derivation that uses a transition edge $(\vpos{a}{i}, \vpos{a'}{i})$ belonging to $Y^D$ but not $Y^U$.
This is impossible because such an edge is useful. The same arguments hold for $Y^D$ w.r.t. $Y^F$. 
\end{proof}


\paragraph{Theorem \ref{theo:complexity}}
\emph{
Let $Y$ be an acyclicity property, and $\R$ be a set of rules.
If checking that $\R$ is $Y$ is in co-NP, then checking that $\R$ is $\grd{Y}$ or $\gerd{Y}$
is co-NP-complete.
}

We first state a preliminary proposition.

\begin{proposition} 
If there is a $h \rightarrow b$ triggering derivation sequence (with $h \in \fun{head}(R)$ and $b \in \fun{body}(R')$), 
then there exist a non-empty set of paths $\mathcal P = \{P_1,\dots,P_k\}$ from $R$ in GRD($\mathcal R$)
such that $\sum\limits_{1 \leq i \leq k}|P_i| \leq |\R| \times |\fun{terms}(\fun{head}(R))|$
and a piece-unifier of $B'$ with the head of an agglomerated rule along $\mathcal P$ that
unifies $h$ and $b$.  
\end{proposition}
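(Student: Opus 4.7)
\noindent The plan is to leverage the correctness of $\UPG{\R}$ established earlier and then argue that the paths inducing the corresponding agglomerated rule can be chosen to be short. Given a $h \rightarrow b$ triggering derivation sequence $F_0, \ldots, F_n$, the correctness proof for $\UPG{\R}$ already constructs a ``super-path'' rule $R^P$ together with a piece-unifier of $B'$ with $\fun{head}(R^P)$ that unifies $h$ and $b$, via a non-empty set of paths $\mathcal P = \{P_1, \ldots, P_k\}$ in $\GRD{\R}$ starting at $R$. What remains is purely a combinatorial shortening argument on $\mathcal P$, independent of the chase semantics.

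\noindent First, I observe that the only terms of $\fun{head}(R)$ that can possibly give rise to $\fun{fr}$ atoms in $R^P$ are the terms of $\fun{head}(R)$ themselves; hence, the number of distinct ``tracked'' terms is at most $|\fun{terms}(\fun{head}(R))|$. Each tracked term $t$ must survive (i.e., not be erased by a subsequent unification) along at least one path $P_i \in \mathcal P$ that eventually reaches a direct predecessor of $R'$. I therefore plan to attach to each tracked term $t$ a single ``witness path'' $P^t$ in $\mathcal P$ and discard the rest: the resulting $\mathcal P' = \{P^t : t \text{ tracked}\}$ still induces an agglomerated rule $R^{P'}$ whose head contains enough $\fun{fr}$ atoms for the unifier of $B'$ with $\fun{head}(R^P)$ to remain a valid piece-unifier, because only the $\fun{fr}$ atoms on unified terms matter.

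\noindent Second, I will show that each $P^t$ can be taken to be a simple path in $\GRD{\R}$, hence of length at most $|\R|$. The idea is a standard loop-removal argument: if a rule $R_q$ appears twice along $P^t$, say at positions $\ell < \ell'$, then the sub-path from $R_q$ to itself corresponds to a cyclic sequence of piece-unifiers each of which propagates $t$ unchanged from the occurrence of $t$ in $\fun{head}(R^{P^t}_\ell)$ to its occurrence in $\fun{head}(R^{P^t}_{\ell'})$ (otherwise $t$ would be lost and $P^t$ would not be a witness for $t$). Splicing out this sub-path yields a shorter path that still propagates $t$ and still produces the required $\fun{fr}(t)$ atom in the agglomerated head. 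Iterating this produces a simple witness path $P^t$ of length at most $|\R|$. Summing over the at most $|\fun{terms}(\fun{head}(R))|$ tracked terms gives the desired bound $\sum_i |P_i| \leq |\R| \times |\fun{terms}(\fun{head}(R))|$.

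\noindent The main obstacle will be the loop-removal step. One must carefully verify that splicing out a cycle through $R_q$ does not break the piece-unifier property: the unifications that fire along the removed segment could in principle be needed to ``align'' $t$ with subsequent atoms, e.g.\ if $t$ is renamed via a chain of unifications. My intention is to handle this by noting that any such renaming composes into a single substitution applied to $t$, which can equivalently be realised by the residual path's unifiers, because the rule at position $\ell'$ is identical to that at position $\ell$ and therefore the substitution chain on $t$ can be collapsed without affecting the final unifier of $h$ and $b$. Once this splicing is validated, the bound and the existence of the required piece-unifier on the trimmed $\mathcal P$ follow immediately.
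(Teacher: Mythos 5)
Your overall strategy --- reuse the agglomerated-rule construction from the correctness proof of $\UPG{\R}$, keep one witness path per term of $\fun{head}(R)$, and then bound the length of each path --- is the same as the paper's. The gap is in the length bound. You claim each witness path $P^t$ can be made \emph{simple} by splicing out any loop through a repeated rule $R_q$, on the grounds that the loop only serves to ``propagate $t$ unchanged.'' That is not what loops are for in this construction. Along a path $P=(R_1,\dots,R_n)$ the head of every intermediate rule $R^p_l$ is always $H_1$, so $t$ is never renamed --- the obstacle you identify and then resolve (collapsing a substitution chain on $t$) is not the real one. What changes from step to step is the \emph{body}, which accumulates $\fun{fr}(u)$ atoms and thereby turns head terms $u$ from existential into frontier. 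A piece-unifier $\mu_l$ occurring after the loop may unify a separating variable of $B'_{l+1}$ with such a term $u$; this is legal only because $\fun{fr}(u)$ was added during the loop, and becomes illegal (condition (2) of piece-unifiers is violated) once the loop is spliced out. So removing a loop can invalidate later unifiers on the path, and the witness path for $t$ may genuinely have to traverse a cycle several times before $t$ itself can be unified.

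This is exactly the point the paper's own proof is organized around: it does \emph{not} claim simple paths, but argues that each directed cycle of $\GRD{\R}$ needs to be traversed at most $|\fun{terms}(\fun{head}(R))|$ times, because a traversal that creates no new frontier variable (no new $\fun{fr}$ atom) cannot enable any new unifier and only such traversals can be removed. To repair your argument you would need to weaken ``make $P^t$ simple'' to ``remove a loop only when it contributes no new $\fun{fr}$ atom,'' and then redo the arithmetic, since the bound is then governed by the number of $\fun{fr}$ atoms that can ever be created rather than by the number of rules. As written, the claimed bound $|P^t|\leq|\R|$ does not hold in general and the proof is incomplete.
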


\begin{proof} 
The piece-unifier is entirely determined by the terms that are forced into the frontier by an ``fr'' atom. 
Hence, we need to consider at most one path for each term in $H$. 
Moreover, each (directed) cycle in the GRD (that is of length at most $|\mathcal R|$) needs to be traversed at most $|\fun{terms}(H)|$ times,
 since going through such a cycle without creating a new frontier variable cannot create any new unifier. Hence, we need to consider only
 paths of polynomial length. 
\end{proof}

\vspace*{0.5cm}
\begin{proof} (of Theorem \ref{theo:complexity})
	One can guess a cycle in $PG^D(\R)$ (or $PG^U(\R)$) such that the property $Y$ 
	is not satisfied by this cycle. 	
	From the previous property, each edge of the cycle has a polynomial certificate, 
	and checking if a given substitution is a piece-unifier can also be done in polynomial time.
	Since $Y$ is in co-NP, we have a polynomial certificate that this cycle does not satisfy $Y$.
	Membership to co-NP follows.

	The completeness part is proved by a simple reduction from the co-problem of rule dependency checking (which is thus a co-NP-complete problem).

	Let $R_1$ and $R_2$ be two rules.
	We first define two fresh predicates $p$ and $s$ of arity $|vars(B_1)|$, 
	and two fresh predicates $q$ and $r$ of arity $|vars(H_2)|$.

	We build $R_0 = p(\X) \rightarrow s(\X)$, where $\X$ is a list of all variables in $B_1$, 
	and $R_3 = r(\X) \rightarrow p(\Z),q(\X)$, where $\Z = (z,z,\dots,z)$, where $z$ is a variable which does not appear in $H_2$,
	and $\X$ is a list of all variables in $H_2$.
	We rewrite $R_1$ into $R'_1 = (B_1, s(\X) \rightarrow H_1)$, where $\X$ is a list of all variables in $B_1$, 
	and $R_2$ into $R'_2 = (B_2 \rightarrow H_2 , r(\X))$,  where $\X$ is a list of all variables in $H_2$.
	One can check that $\R = \{R_0,R'_1,R'_2,R_3\}$ contains a cycle going through an existential variable (thus, it is not $wa^D$) iff $R_2$ depends on $R_1$.
\end{proof}

\paragraph{Proposition \ref{prop:inc-unif-noapp}}
\emph{
Let $R_1$ and $R_2$ be rules, and let $\mu$ be a unifier of $B_2$ with $H_1$.
If $\mu$ is incompatible, then no application of $R_2$ can use an atom in $\mu(H_1)$.
}

\begin{proof}
          We first formalize the sentence  ``no application of $R_2$ can use an atom in $\mu(H_1)$'' by the following sentence: 
           ``no application $\pi'$ of $R_2$ can map an atom $a \in B_2$ to an  atom $b$ 
           produced by a application $(R_1, \pi)$
	such that $ b = \pi(b')$, where $\pi$ and $\pi'$ are more specific than $\mu$'' (given two substitutions $s_1$ and $s_2$, $s_1$ is more specific than $s_2$ if there is a substitution   $s$ such that $s_1 = s \circ s_2$). 
	
	Consider the application of $R_1$ to a fact w.r.t. a homomorphism $\pi$, followed by 
	an application of $R_2$  w.r.t. a homomorphism $\pi'$,  
	such that for an atom
	$a \in B_2$, $\pi'(a) = b =  \pi(b')$, where $\pi$ and $\pi'$ are more specific than $\mu$. 
	Note that this implies that $\mu(a) = \mu(b')$. 
          	Assume that $b$ contains a fresh variable $z_i$ produced from an existential variable $z$ of $b'$ in $H_1$. 
	Let $z'$ be the variable from $a$ such that $\pi'(z') = z_i$. 
	Since the domain of $\pi'$ is $B_2$, 
	all atoms from $B_2$ in which $z'$ occurs at a given position $p_j$ are also mapped by $\pi'$ 
	to atoms containing $z_i$ in the same position $p_j$. Since $z_i$ is a fresh variable,
	these atoms have been produced by sequences of rule applications starting from $(R_1, \pi)$.
	Such a sequence of rule applications exists only if there is a path in $PG^U$ from a position of $z$ in $H_1$
	to $p_j$; moreover, this path cannot go through an existential position, otherwise $z_i$ cannot be propagated. 
	Hence, $\mu$ is necessarily compatible. 
	
\end{proof}

\paragraph{Proposition \ref{prop:neg-correct}}
\emph{
If, for each each existential position $\vpos{a}{i}$, all compatible cycles for $\vpos{a}{i}$
in $PG^U$ are self-blocking, then the stable computation based on the skolem chase halts.
}

\begin{proof}
If a cycle is non-compatible or self-blocking, then no sequence of rule applications can use it (where "used" is defined as in the proof of Theorem \ref{theo:correct}). 
Hence, if all compatible cycles are self-blocking, all derivations obtained with skolemized NME rule applications are finite. Hence, the stable computation based on the skolem chase halts. 

\end{proof}

\end{document}